\documentclass[twoside,11pt]{article}
\pdfoutput=1

%

%
%
%

\usepackage[preprint, nohyperref]{jmlr2e}


\usepackage{caption}
\usepackage[utf8]{inputenc}
\usepackage{url,amsmath}
\usepackage{algorithm, algorithmic}
\usepackage{color}
\usepackage{todonotes}
\usepackage{balance}
\usepackage[british]{babel}
\usepackage{csquotes}
\usepackage{pgfplots}
\usepackage[inline]{enumitem}
\usepackage{thmtools}
\usepackage{mathtools}
\usepackage{amssymb}
\usepackage{tikz}
\usepackage{xfrac}

\usepackage{microtype}
\usepackage{subcaption}

\usepackage[colorlinks=false,allbordercolors={1 1 1}]{hyperref}
\usepackage{cleveref}


\DeclareMathSymbol{\shortminus}{\mathbin}{AMSa}{"39}
\newcommand{\smn}{\shortminus}
\newcommand{\vgeq}{\rotatebox[origin=c]{-90}{$\geqslant$}}
\newcommand{\vgreat}{\rotatebox[origin=c]{-90}{$>$}}
\newcommand{\vequal}{\rotatebox[origin=c]{-90}{$=$}}

\usetikzlibrary{3d,patterns,arrows.meta,matrix,fit,backgrounds,graphs,graphs.standard,math,calc}
\pgfdeclarepatternformonly{south east lines}{
\pgfqpoint{-0pt}{-0pt}}{\pgfqpoint{6pt}{6pt}}{\pgfqpoint{6pt}{6pt}}{
        \pgfsetlinewidth{0.4pt}
        \pgfpathmoveto{\pgfqpoint{0pt}{6pt}}
        \pgfpathlineto{\pgfqpoint{6pt}{0pt}}
        \pgfpathmoveto{\pgfqpoint{.2pt}{-.2pt}}
        \pgfpathlineto{\pgfqpoint{-.2pt}{.2pt}}
        \pgfpathmoveto{\pgfqpoint{6.2pt}{5.8pt}}
        \pgfpathlineto{\pgfqpoint{5.8pt}{6.2pt}}
        \pgfusepath{stroke}}
        
\definecolor{edblue}{RGB}{101, 143, 157}
\definecolor{edred}{RGB}{146,59,60}

\newtheorem{notation}[theorem]{Notation}

\newtheorem{claim}[theorem]{Claim}



\jmlrheading{1}{2021}{1-48}{06/21}{00/00}{meila00a}{Edward Wagstaff, Fabian B. Fuchs, Martin Engelcke, Michael A. Osborne and Ingmar Posner}


\ShortHeadings{Universal Approximation of Functions on Sets}{Wagstaff, Fuchs, Engelcke, Osborne and Posner} 
\firstpageno{1}

\begin{document}

\title{Universal Approximation of Functions on Sets}

\author{\name Edward Wagstaff \email ed@robots.ox.ac.uk \\
        \name Fabian B. Fuchs \email fabian@robots.ox.ac.uk \\
        \name Martin Engelcke \email martin@robots.ox.ac.uk \\
        \name Michael A. Osborne \email mosb@robots.ox.ac.uk \\
        \name Ingmar Posner \email ingmar@robots.ox.ac.uk \\
        \addr Department of Engineering Science\\
        University of Oxford\\
        Oxford, UK}

\editor{Not yet known}

\maketitle

\begin{abstract}

Modelling functions of sets, or equivalently, \emph{permutation-invariant} functions, is a long-standing challenge in machine learning.
Deep Sets is a popular method which is known to be a universal approximator for continuous set functions.
We provide a theoretical analysis of Deep Sets which shows that this universal approximation property is only guaranteed if the model's latent space is sufficiently high-dimensional.
If the latent space is even one dimension lower than necessary, there exist piecewise-affine functions for which Deep Sets performs no better than a naïve constant baseline, as judged by worst-case error.
Deep Sets may be viewed as the most efficient incarnation of the Janossy pooling paradigm. 
We identify this paradigm as encompassing most currently popular set-learning methods.
Based on this connection, we discuss the implications of our results for set learning more broadly, and identify some open questions on the universality of Janossy pooling in general.

\end{abstract}

\begin{keywords}
  Deep Learning, Sets, Permutation Invariance, Equivariance, Universal Function Approximation, Janossy Pooling, Self-Attention
\end{keywords}

\section{Introduction}

Many applications of machine learning work with collections of inputs or features which are best modelled as \emph{sets}. 
Crucially, these collections have no intrinsic ordering, which distinguishes them from other commonly-encountered types of data such as images or audio. 
Examples are plentiful: a point cloud obtained from a LIDAR sensor; a group of atoms which together form a molecule; or a collection of objects appearing in an image. 
Although the data for these problems is not ordered, standard machine learning techniques usually impose an ordering on the data, either by representing it as a vector or matrix and applying order-sensitive operations, or by iterating over data points in a stateful fashion.
Any function with set-valued inputs is insensitive to this imposed ordering, in the sense that the output does not change when the input is reordered.
This property is known as \emph{permutation invariance}.

Tasked with learning a permutation-invariant target function, we may build this invariance into our model -- we can regard this as giving the model an inductive bias.
A family of techniques have been proposed in recent years to build and train deep permutation-invariant machine learning models.
Most notably, the seminal work by \citet{Zaheer2017} introduces \emph{Deep Sets}.
The central idea of Deep Sets is to process the individual elements of a set in parallel using a shared encoding function before aggregating these encodings using a \emph{symmetric function} -- for example summation, averaging, or max-pooling.
We refer to the vector space in which this aggregation happens as the model's \emph{latent space}.
This design ensures that the model is exactly permutation-invariant.
A similar model is also explored in \citet{Qi2017}.

While easy to implement and parallelise, processing each input individually and pooling globally hinders relational reasoning \citep{Santoro2017,Battaglia2018}.
Arguably the most popular alternatives are self-attention mechanisms, which perform weighted summation to aggregate information via input-dependent attention weights.
Self-attention explicitly performs relational reasoning by processing elements in pairs, rather than individually -- working with relationships between pairs of elements is therefore an in-built feature of the model.
This stands in contrast to Deep Sets, where no facility for relational reasoning is built into the model (though such reasoning can still be learned).
While self-attention has famously been widely applied in natural language processing \citep{Vaswani2017}, it has also been applied to sets \citep{Lee2018} and is popular in the literature on graph learning \citep{velikovi2017graph}.

Although Deep Sets and self-attention may not immediately appear to be closely related, both can be viewed as special cases of $k$-ary \emph{Janossy pooling} \citep{Murphy2018}.
Deep Sets is the most restricted instance of Janossy pooling, which makes it a good starting point for fuller theoretical characterisation of these methods.
In this work, we contribute to this theoretical understanding by considering how the dimensionality of the Deep Sets model's latent space affects its expressive capacity.
This extends the partial characterisation provided in \citet{Zaheer2017}.

Firstly, we follow \citet{Zaheer2017} in considering the conditions under which Deep Sets is capable of \emph{exactly representing} any continuous target function (i.e. \emph{universal representation}). 
We show that the sufficient condition on the model's latent space dimension given in \citet{Zaheer2017} is essentially the best possible -- it can be lowered, but only by $1$, and this weakened sufficient condition is in fact also a \emph{necessary} condition.
Secondly, we consider universal \emph{approximation}, a weaker and more practically important property.
We demonstrate that this is subject to the same necessary and sufficient conditions as universal representation.
In fact, we show that if the latent dimension is even $1$ lower than necessary, there exist piecewise-affine functions for which the model's worst-case error is no better than the worst-case error of a model which simply outputs $0$.
Finally, we consider the implications of our findings for other related architectures such as self-attention, and discuss the universality properties of other set-learning methods.

In summary, this work makes the following contributions:
\begin{itemize}
    \item \Cref{sec:related_work} provides an overview of the most popular deep learning architectures for learning functions on sets, including Deep Sets, and discusses their unification as instances of the \emph{$k$-ary Janossy pooling} paradigm;
    \item \Cref{sec:universal_representation} proves a necessary and sufficient condition for \emph{representing} arbitrary continuous functions on sets with the Deep Sets architecture;
    \item \Cref{sec:universal_approximation} furthers our analysis of the Deep Sets architecture by proving a necessary and sufficient condition for \emph{approximating} arbitrary continuous functions on sets;
    \item \Cref{sec:other_methods} discusses other models for deep learning on sets;
    \item \Cref{sec:universal_representation_other} discusses universal approximation criteria for architectures beyond Deep Sets, and identifies some open questions of interest.
\end{itemize}

\Cref{sec:universal_representation} of this paper is a revised and abridged version of work previously presented in \citet{wagstaff19}.
This paper provides a more extensive and comprehensive discussion of the subject and extends the work in \citet{wagstaff19} by providing additional results for architectures other than Deep Sets and a theoretical characterisation of universal function approximation.

\section{Models For Learning on Sets}
\label{sec:related_work}

In this section, we examine deep learning architectures for data that consists of unordered sets of elements.
We are centrally concerned with the property of permutation invariance as introduced above, but we briefly note that functions working with set-valued data may also be \emph{permutation-equivariant}. 
In the simplest case, equivariance appears in the context of mapping sets to sets. 
Although the ordering of elements is arbitrary, we may still want the ordering of the input set to be consistent with the ordering of the output set -- that is, any permutation of the input results in a corresponding permutation of the output.
This consistency of ordering is \emph{permutation equivariance}.
\citet{romero2021group} provide a full mathematical definition of permutation equivariance, along with a detailed analysis of the equivariance properties of self-attention.
We do not consider permutation-equivariant set functions in detail in this paper.

Returning to permutation \emph{invariance}, a core consideration for the design of permutation-invariant models is how to maintain maximum expressivity (in the sense of being able to model a broad class of functions) while also ensuring permutation invariance.
\citet{Murphy2018} introduce \emph{Janossy pooling} as a unifying framework of methods that learn either strictly permutation-invariant functions or suitable approximations.
Janossy pooling is known to be highly expressive, and in fact it is trivially shown to be universal in the sense that any permutation-invariant function may be represented within the Janossy pooling framework.

In its most general form, Janossy pooling considers all possible permutations $\pi$ of the input elements $x_i$. Each permutation $\pi(\mathbf{x})$ is separately passed through the same permutation-sensitive function $\phi$.
The outputs for the different permutations are then aggregated by computing the average (or by another global pooling operation).
If two inputs $\mathbf{x}$ and $\mathbf{y}$ are permutations of one another, this process will give the same output for both inputs.
As noted above, we refer to the space in which the aggregation happens as the model's \emph{latent space}.

Mathematically, the procedure outlined above can be written as
\begin{equation}
    \widehat{f}(\mathbf{x}) = \frac{1}{|S_M|} \sum_{\pi \in S_M} \phi \bigl( \pi \left( \mathbf{x} \right) \bigr)
    \label{eq:janossy}
\end{equation}
where $\mathbf{x}$ has $M$ elements, and $S_M$ is the group of all permutations $\pi$ of $M$ elements. A permutation-invariant function $\widehat{f}$ is thereby constructed from a permutation-sensitive function $\phi$. 
This permutation-sensitive function is typically implemented as a neural network, but other function approximators such as Gaussian processes may also be used.
The architecture is illustrated in \Cref{fig:permuting}.
This figure also illustrates that a second function may optionally be used to post-process the output of the aggregation operation:

\begin{equation}
    f(\mathbf{x}) = \rho \bigl( \widehat{f}\left( \mathbf{x} \right) \bigr)
    \label{eq:janossy_augmented}
\end{equation}

This second function $\rho$ does not need to follow any constraints to guarantee invariance because its input, $\widehat{f}(\mathbf{x})$, is already permutation-invariant. 
In other words, the ordering information is already lost by the point that $\rho$ is reached in \Cref{fig:permuting}.

\begin{figure}
\includegraphics[width=\textwidth]{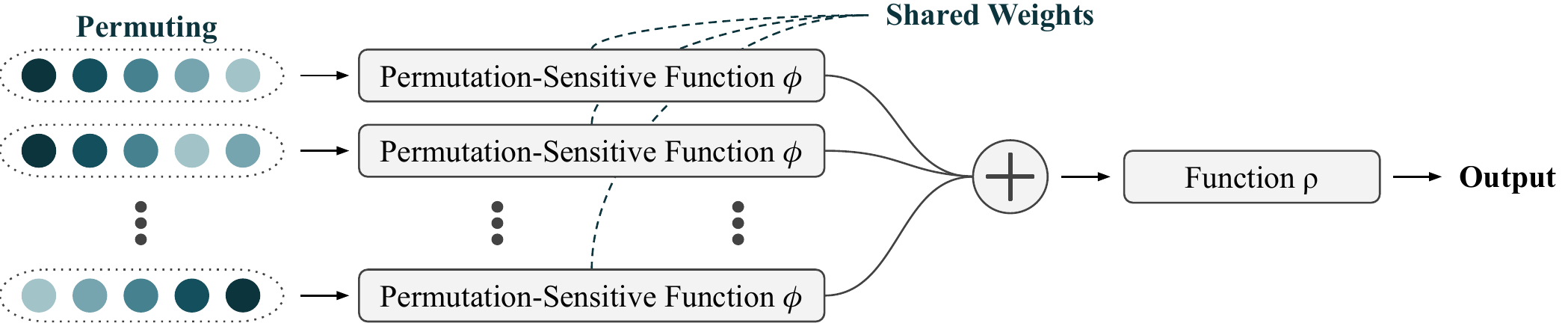}
\caption[Janossy pooling.]{The Janossy pooling paradigm applies the same permutation-sensitive network to each possible permutation of the input set. When using a permutation-invariant pooling operation, such as averaging or summation, information about the ordering is eliminated. A second neural network may be used to predict the final output. This paradigm guarantees permutation invariance of the output without imposing restrictions on any of the neural network components.}
\label{fig:permuting}
\end{figure}

One drawback of this scheme is that it can become prohibitively expensive for large set sizes $M$, because of the large number of terms in the sum in \Cref{eq:janossy}. The computational complexity scales at least linearly with the cardinality of $S_M$, which is $M!$.
To remedy this, \citet{Murphy2018} discuss several options for reducing the computational complexity: (i) sorting; (ii) sampling; (iii) restricting permutations to $k$-tuples.

Rather than considering all permutations, sorting only considers a single canonical permutation, which is obtained by sorting the input.
The sorting operation may be hand-specified, or it may be learned.
Sampling, by contrast, aggregates over a randomly-sampled subset of permutations (so that the sum in Equation \ref{eq:janossy} is over a randomly chosen subset of $S_M$).
While the output in this case is only approximately rather than strictly permutation invariant, \citet{Murphy2018} show that this works reasonably well empirically.
Nevertheless, most models that are used in practice \citep[e.g.][]{Zaheer2017,Qi2017a,Lee2018} fall into the third category of restricting permutation to $k$-tuples.
The theoretical analysis conducted in our work focuses on this third category, which we describe in more detail in the reaminder of this section.

\subsection{Limiting the Number of Elements in Permutations}

\Cref{eq:janossy} considers all possible $M$-tuples from an $M$ element set.
To save computation, one can instead consider all $k$-tuples\footnote{We require that the tuples we discuss here consist of distinct elements from our input set.
So in this case, $(2, 2)$ is not a valid 2-tuple from the set $\{1, 2, 3\}$. 
This restriction is not always observed, for instance self-attention \emph{does} allow such 2-tuples, but for brevity we restrict ourselves to discussing the case of distinct elements.} with $k < M$.
The models obtained by setting $k=1$ and $k=2$ are visualised in \Cref{fig:k12}.
Mathematically, letting $\mathbf{x}_{\{k\}}$ denote a $k$-tuple from $\mathbf{x}$, we have
\begin{align}
\label{eq:k_ary_janossy}
\widehat{f}(\mathbf{x}) = \frac{1}{P(M,k)}\sum_{\mathbf{x}_{\{k\}}} \phi(\mathbf{x}_{\{k\}}) \text{ , where } P(M,k) = \frac{M!}{(M-k)!}.
\end{align}
For clarity, we provide an example for $M=4$ and $k=2$. If the input set is $\{w, x, y, z\}$, the sum will be over all 2-tuples from the set, namely:
\begin{align*}
~(w, x),~(x, w),(w, y),&~(y, w),~(w, z),~(z, w), \\
(x, y),~(y, x),~(x, z),&~(z, x),~(y, z),~(z, y)
\end{align*}
For sufficiently small $k$, this gives a sum with far fewer than $M!$ terms, leading to a computationally tractable method in many practical cases. 
For fixed $k$ and varying $M$, the number of terms in the sum is $\mathcal{O}(M^k)$.
Setting $k=1$ therefore provides a model whose cost is linear in the size of the input set.
Increasing $k$ comes at the cost of additional computational complexity, but as we will discuss in the following subsection, also allows for more explicit relational reasoning.

\begin{figure}[t]
	\centering
	\begin{subfigure}[h]{0.46\textwidth}
		\centering
		\includegraphics[width=0.99\textwidth]{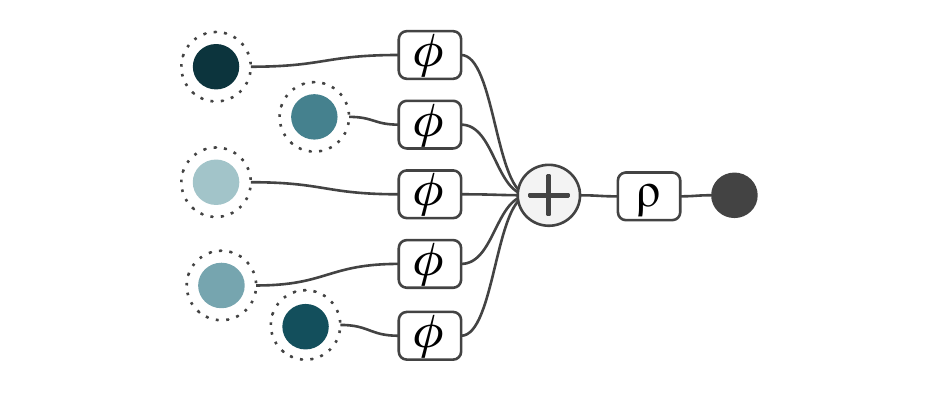}
		\caption{Janossy pooling with $k=1$ (\emph{Deep Sets})}
		\label{fig:Janossy_K1}
	\end{subfigure} 
	\begin{subfigure}[h]{0.46\textwidth}
		\centering
		\includegraphics[width=0.99\textwidth]{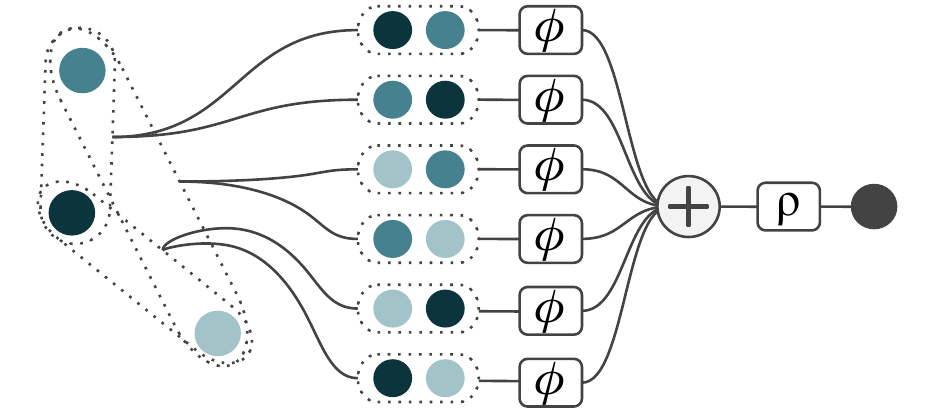}
		\caption{Janossy pooling with $k=2$}
		\label{fig:Janossy_K2}
	\end{subfigure} 
	\begin{subfigure}[h]{0.46\textwidth}
	    \vspace{+1cm}
		\centering
		\includegraphics[width=0.99\textwidth]{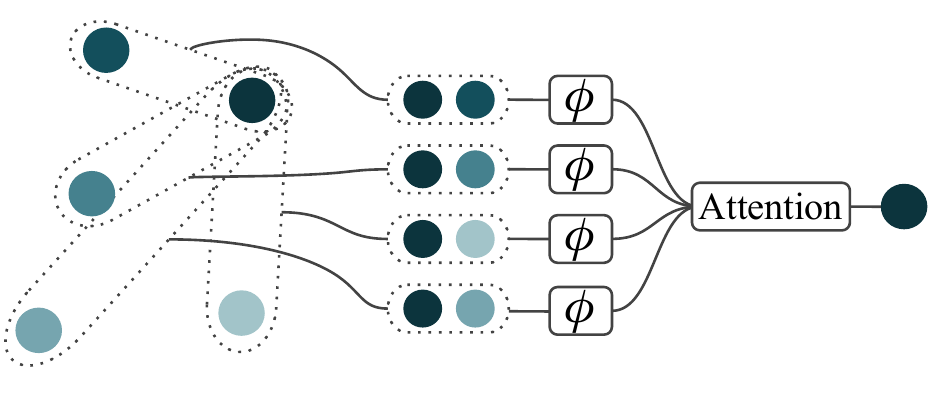}
		\vspace{-0.8cm}
		\caption{Self-attention}
		\label{fig:Janossy_K2Att}
	\end{subfigure} 
	\caption[Different versions and variations of Janossy pooling.]{Different versions and variations of Janossy pooling. Permutation invariance is guaranteed by processing all combinations of $k$ elements and then aggregating via a sum (or softmax in the case of attention). Self-attention, a variant of Janossy pooling with $k=2$, focuses on one node at a time (the darkest node here), computing an output for this specific node. It is often employed for all nodes in parallel in a permutation-equivariant manner, mapping sets of points to sets of points \citep{Lee2018}.}
	\label{fig:k12}
\end{figure}

\subsubsection{Expressivity and Interactions}
\label{sec:expresivity_and_interactions}

The terms \textit{relational reasoning} and \textit{interactions} are often encountered in the deep learning literature \citep{Battaglia2016,Santoro2017,fuchs2019endtoend}.
We use the expression \textit{interactions between elements in a set} to refer to the fact that the output may depend not only on the individual contribution of each element, but may also depend on the fact that multiple elements appear \textit{together in the same set}. 
Relational reasoning describes the act of modelling and using these interactions. 
To illustrate this with a simple example, consider the task of assessing how well a set of ingredients go together for cooking a meal.
If we set $k=1$, the function $\phi$ can take into account relevant individual attributes, but will be unable to spot any clashes between ingredients (like garlic and vanilla).
Increasing $k$ allows $\phi$ to see multiple elements at once, and therefore perform relational reasoning about pairs of ingredients, enabling a more expressive model of what tastes good.
If we view $\phi$ as an ``encoder'' and $\rho$ as a ``decoder'', $\phi$ is capable of encoding information about interactions, which $\rho$ can then make use of during decoding.

If we set $k=1$, this is no longer the case.
\citet{Zaheer2017} show that the encoder $\phi$ can encode the entire set when $k=1$, and therefore the decoder $\rho$ can in principle recover all the input information and perform relational reasoning from there.
However, any such relational reasoning is not built into the model, and must be entirely learned.
\citet{Murphy2018} speculate that this additional burden on what must be learned by $\rho$ can contribute to difficulties in training the $k=1$ model on some tasks which depend heavily on relational reasoning.

Many current neural network architectures on sets and graphs resemble Janossy pooling with $k=2$.
This choice of $k$ represents a tradeoff between computational complexity (lower $k$ is better) and ease of training for relational reasoning (higher $k$ is better\footnote{Though we note that high $k$ being ``better'' here is, at least theoretically speaking, a speculative statement. Nevertheless, the fact that $k=2$ is widely used in practice despite the increased computational complexity strongly suggests that increasing $k$ does indeed improve performance in some contexts.}).
Most famously, self-attention algorithms (schematically depicted in \Cref{fig:Janossy_K2Att}) compare two elements of the set at a time, typically by performing a scalar product \citep{Vaswani2017, Lee2018}.
The results of the scalar products are used as attention weights for aggregating information from different points via a weighted, permutation-invariant sum.
While this mechanism is very similar to Janossy pooling with $k=2$, some additional architecture choices are often made. For example, a softmax might be used to ensure that the attention weights are normalised.

\subsubsection{Deep Sets}

The model obtained by setting $k=1$ is in fact a popular and well-known special case of Janossy pooling -- Deep Sets \citep{Zaheer2017}.
As discussed above, choosing $k=1$ may hinder relational reasoning, but Deep Sets is nevertheless a widely-used architecture for permutation-invariant models, for example forming the basis of two well-known point cloud classification approaches, PointNet \citep{Qi2017} and PointNet++ \citep{Qi2017a}.
As the computationally cheapest instance of Janossy pooling, its linear scaling in the number of inputs makes it particularly well suited to problems with large input sets -- a point cloud for example may consist of thousands of points.
Importantly, it is known to be universal, guaranteeing (at least in principle) that it is capable of computing any permutation-invariant target function.
However, the conditions under which universality holds are not fully characterised, either for Janossy pooling more broadly or for Deep Sets specifically.
\citet{Zaheer2017} and \citet{han2019universal} provide sufficient conditions for Deep Sets to be universal, but do not show whether universality fails if these conditions are violated -- that is, they do not give \emph{necessary} conditions for universality, and leave open the possibility that universality is preserved under weaker conditions.
In the following two sections, we provide proofs of necessary conditions for the universality of Deep Sets.

\section{Universal Function Representation with Deep Sets}
\label{sec:universal_representation}

This first section of our theoretical investigation focuses on function \emph{representation}, building directly on the proofs and results from \citet{Zaheer2017}.
By function representation, we mean the ability of a model to \emph{exactly represent} a given target function.
By \emph{universal} representation, we mean the ability of a model to represent \emph{all} target functions from a given function class (e.g. the class of all permutation-invariant functions).
The original analysis from \citet{Zaheer2017} demonstrates the universality of Deep Sets by considering universal representation, and in this section we present a direct continuation and refinement of this analysis.
Nevertheless, it must be noted that function representation is a stronger property than required in practice, and we will return to analyse the weaker property of function \emph{approximation} in \Cref{sec:universal_approximation}.
Proofs of all novel results from this section are provided in \Cref{sec:proofs_appendix}.

\subsection{Preliminaries}
\label{sec:preliminaries}

We begin by introducing the necessary definitions and notation, before detailing our contribution and its relationship to the original results from \citet{Zaheer2017}.
Importantly, we will establish the concept of \emph{sum-decomposition}, which is a concise mathematical description of the Deep Sets architecture.

\subsubsection{Definitions and Notation}

\begin{notation}
We denote sets of inputs in boldface with subscript-indexed elements, e.g. the set $\mathbf{x}$ has elements $x_1, \ldots, x_M$.
\end{notation}

\begin{notation}
Throughout, the variable $M$ refers to the number of elements in the input sets under consideration.
\end{notation}

\begin{notation}
We denote the group of all permutations of $M$ elements by $S_M$. A permutation $\pi \in S_M$ may be thought of as a bijection from $\{ 1, \ldots, M \}$ to itself.
\end{notation}

\begin{definition}
A function $f(\mathbf{x})$ is \emph{permutation-invariant} if $f(x_1,\dots,x_M) = f \bigl(x_{\pi(1)},\dots,x_{\pi(M)} \bigr)$ for all $\pi \in S_M$.
\end{definition}

\begin{definition}
\label{def:sum-decomp}
We say that a function $f$ is \emph{sum-decomposable} if there are functions $\rho$ and $\phi$ such that

\begin{equation}
f(\mathbf{x}) = \rho \bigl( \sum_i \phi(x_i) \bigr).
\label{eq:main}
\end{equation}

In this case, we say that $(\rho, \phi)$ is a \emph{sum-decomposition} of $f$.
\end{definition}

Note that sum-decomposition is equivalent to Janossy pooling with $k=1$, i.e. Deep Sets. Comparing \Cref{eq:main} with Equations (\ref{eq:janossy_augmented}) and (\ref{eq:k_ary_janossy}), these expressions differ only in dividing $\phi$ by an integer, and can be made equal simply by rescaling $\phi$.

\begin{notation}
Given a sum-decomposition $(\rho, \phi)$, we write $\Phi(\mathbf{x}) := \sum_i \phi(x_i)$. With this notation, we can write \Cref{eq:main} as $f(\mathbf{x}) = \rho\big(\Phi(\mathbf{x})\big)$. We may also refer to the function $\rho \circ \Phi$ as a sum-decomposition.
\end{notation}

\begin{definition}
Let $(\rho, \phi)$ be a sum-decomposition. Write $Z$ for the domain of $\rho$ (which is also the codomain of $\phi$, and the space in which the summation happens in Equation \ref{eq:main}). We refer to $Z$ as the \emph{latent space} of the sum-decomposition $(\rho, \phi)$.
\end{definition}

\begin{definition}
Given a space $Z$, we say that $f$ is \emph{sum-decomposable via $Z$} if $f$ has a sum-decomposition whose latent space is $Z$.
\end{definition}

\begin{definition}
We say that $f$ is \emph{continuously sum-decomposable} when there exists a sum-decomposition $(\rho, \phi)$ of $f$ such that both $\rho$ and $\phi$ are continuous. $(\rho, \phi)$ is then a \emph{continuous sum-decomposition} of $f$.
\end{definition}

\begin{notation}
Denote the power set of a space $X$ (that is, the set of all subsets of $X$) by $2^X$.
\end{notation}


\subsubsection{Our Contribution}

As noted above, the Deep Sets model uses sum-decomposition to represent permutation-invariant functions.
Our contribution is to show that the universality of sum-decomposition is dependent on the dimensionality of the latent space $Z$.
Specifically, we show that $\dim(Z)$ must be at least $M$, where $M$ is the number of elements in the input sets.
We refine the analysis from \citet{Zaheer2017}, relying on the observation that some of the mappings used for the proofs of universality in the original analysis are highly discontinuous, and cannot be computed in practice.
By considering continuous mappings,\footnote{This function class includes, for instance, all functions which can be computed by neural networks with continuous activation functions, or by Gaussian processes with continuous kernels.} we provide a proof of the above lower bound on $\dim(Z)$.
Because this section directly follows on from the analysis given in \citet{Zaheer2017}, we first give a brief reproduction of the statement and proof of two key theorems from that work.


\subsubsection{Background Theorems}
\label{sec:original_theorems}

\citet{Zaheer2017} consider two cases. First, where $\mathbf{x}$ is a subset of, or drawn from, a \emph{countably infinite} universe $\mathfrak{U}$. Second, the case where $\mathfrak{U}$ is \emph{uncountably infinite}.

\begin{theorem}[Countable case]
\label{ori_countable_theorem}
Let $f: 2^{\mathfrak{U}} \to \mathbb{R}$ where $\mathfrak{U}$ is countable.
Then $f$ is sum-decomposable via $\mathbb{R}$.
\end{theorem}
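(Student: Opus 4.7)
The plan is to exhibit an explicit sum-decomposition $(\rho,\phi)$ with $Z=\mathbb{R}$. Because $\mathfrak{U}$ is countable, I can enumerate it as $\mathfrak{U}=\{u_1,u_2,\dots\}$ (finite or countably infinite). I then want to choose $\phi:\mathfrak{U}\to\mathbb{R}$ so that the induced map $\Phi(\mathbf{x})=\sum_{u\in\mathbf{x}}\phi(u)$ is injective on $2^{\mathfrak{U}}$; once that is achieved, I can simply define $\rho$ on the image of $\Phi$ by $\rho(\Phi(\mathbf{x})) := f(\mathbf{x})$ (and arbitrarily elsewhere), and the identity $f(\mathbf{x})=\rho(\Phi(\mathbf{x}))$ holds by construction.

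The natural candidate is $\phi(u_n) := 4^{-n}$. For any subset $\mathbf{x}\subseteq\mathfrak{U}$, the sum $\Phi(\mathbf{x})=\sum_{n:\,u_n\in\mathbf{x}} 4^{-n}$ converges absolutely (bounded by $\sum 4^{-n}=1/3$) and is essentially the base-$4$ expansion whose $n$-th digit is the indicator $\mathbf{1}[u_n\in\mathbf{x}]\in\{0,1\}$. The reason for using base $4$ rather than base $2$ is to avoid the classical ambiguity of dyadic expansions (e.g.\ $0.0111\ldots_2 = 0.1000\ldots_2$): when the allowable digits are only $\{0,1\}$ and the base is at least $3$, the representation is unique. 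So the key step is a short lemma establishing injectivity of $\Phi$ via this base-$4$ argument, which I expect to be the only technically delicate point in the proof.

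With injectivity of $\Phi$ in hand, the construction of $\rho$ is essentially free: set
\begin{equation*}
\rho(z) := \begin{cases} f\bigl(\Phi^{-1}(z)\bigr) & \text{if } z\in\Phi\bigl(2^{\mathfrak{U}}\bigr),\\ 0 & \text{otherwise.}\end{cases}
\end{equation*}
This is well-defined precisely because $\Phi$ is injective, and it yields $\rho(\Phi(\mathbf{x}))=f(\mathbf{x})$ for every $\mathbf{x}\in 2^{\mathfrak{U}}$, showing that $(\rho,\phi)$ is a sum-decomposition of $f$ with latent space $\mathbb{R}$.

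The main obstacle is genuinely just the injectivity of $\Phi$; everything else is bookkeeping. It is worth emphasising what this proof does \emph{not} deliver, since the rest of the section will hinge on this: the $\rho$ constructed above is wildly discontinuous (it looks like $f\circ\Phi^{-1}$ on a Cantor-like subset of $[0,1/3]$ and is $0$ elsewhere), and $\phi$ has no continuity properties either because its domain is discrete. Consequently this theorem gives universal \emph{representation} via a one-dimensional latent space, but it says nothing about continuous sum-decomposition — precisely the gap that motivates the subsequent refinement.
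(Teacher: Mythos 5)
Your proof is correct and takes essentially the same approach as the paper: enumerate $\mathfrak{U}$, set $\phi(u_n)=4^{-n}$, observe that $\Phi$ is injective by uniqueness of the resulting base-$4$ expansion, and take $\rho=f\circ\Phi^{-1}$ on the image of $\Phi$. Your closing remark about $\rho$ being wildly discontinuous is exactly the observation the paper exploits in the following section.
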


\begin{proof}
Since $\mathfrak{U}$ is countable, each $x \in \mathfrak{U}$ can be mapped to a unique element in $\mathbb{N}$ by a bijective function $c(x): \mathfrak{U} \to \mathbb{N}$.
If we can choose $\phi$ so that $\Phi$ is invertible, then we can set $\rho = f \circ \Phi^{-1}$, giving

\vspace{-5mm}
\begin{gather*}
f = \rho \circ \Phi
\end{gather*}

i.e. f is sum-decomposable via $\mathbb{R}$.

Now consider $\phi(x) = 4^{-c(x)}$. 
Under this mapping, each set $\mathbf{x} \subset \mathfrak{U}$ corresponds to a unique real number.
The real number $r := \Phi(\mathbf{x})$ can be decoded to the set $\mathbf{x}$ by looking at the base 4 expansion of $r$.
The element $c^{-1}(n) \in \mathfrak{U}$ belongs to $\mathbf{x}$ if and only if the $n$-th digit of $r$ is $1$.
This decoding procedure shows that $\Phi$ is invertible, and the conclusion follows.
\end{proof}

For the uncountable case, we consider $M$-element sets from the universe $\mathfrak{U}=[0,1]$.

\begin{theorem}[Uncountable case]
\label{ori_uncountable_theorem}
Let $M \in \mathbb{N}$, and let $f: [0,1]^M \to \mathbb{R}$ be a continuous permutation-invariant function.
Then $f$ is continuously sum-decomposable via $\mathbb{R}^{M+1}$.
\end{theorem}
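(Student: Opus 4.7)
The plan is to exploit the classical fact that the power sums $p_k(\mathbf{x}) = \sum_{i=1}^M x_i^k$ for $k = 0, 1, \ldots, M$ uniquely determine the multiset $\{x_1, \ldots, x_M\}$. Concretely, I would take $\phi(x) = (1, x, x^2, \ldots, x^M) \in \mathbb{R}^{M+1}$, so that $\Phi(\mathbf{x})$ is the vector of these $M+1$ power sums. The $M$ non-trivial power sums are related via Newton's identities to the elementary symmetric polynomials $e_1, \ldots, e_M$, and these are (up to sign) the coefficients of the polynomial $\prod_{i=1}^M (t - x_i)$, whose roots (with multiplicity) are exactly the elements of $\mathbf{x}$. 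Hence $\Phi$ is injective on permutation-equivalence classes.

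Given this injectivity, I would set $K := \Phi([0,1]^M) \subset \mathbb{R}^{M+1}$ and define $\rho_0: K \to \mathbb{R}$ by $\rho_0(\Phi(\mathbf{x})) = f(\mathbf{x})$, which is well-defined because $f$ is permutation-invariant. Both $\phi$ and $\Phi$ are continuous by construction, so $K$ is compact. What remains is to show that $\rho_0$ is continuous on $K$, and then to extend it continuously to the entire ambient space $\mathbb{R}^{M+1}$.

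The continuity of $\rho_0$ is the step I expect to require the most care. I would argue it topologically by passing to the quotient space $[0,1]^M / S_M$, which is compact (as a continuous image of $[0,1]^M$) and Hausdorff (since $S_M$ acts by homeomorphisms on a Hausdorff space). Both $\Phi$ and $f$ factor through this quotient; call the induced maps $\overline{\Phi}$ and $\overline{f}$. Then $\overline{\Phi}$ is a continuous bijection from a compact space to the Hausdorff space $K$, hence a homeomorphism by the closed map lemma, so $\overline{\Phi}^{-1}$ is continuous. Composing, $\rho_0 = \overline{f} \circ \overline{\Phi}^{-1}$ is continuous on $K$.

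Once $\rho_0$ is continuous on the closed set $K \subset \mathbb{R}^{M+1}$, the Tietze extension theorem provides a continuous extension $\rho: \mathbb{R}^{M+1} \to \mathbb{R}$ with $\rho|_K = \rho_0$, and the pair $(\rho, \phi)$ is then a continuous sum-decomposition of $f$ via $\mathbb{R}^{M+1}$. The only genuinely non-routine ingredient is the topological homeomorphism argument on the quotient; the power-sum construction and the final Tietze extension are each standard, and given the quotient argument the theorem follows immediately.
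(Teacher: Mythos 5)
Your proposal follows the same construction as the paper (which itself summarises the proof of Zaheer et al.): encode via the power-sum map $\phi(x) = (1, x, \ldots, x^M)$, observe that $\Phi$ is a continuous bijection from the quotient $[0,1]^M / S_M$ onto its compact image and hence a homeomorphism, and set $\rho = f \circ \Phi^{-1}$. Your extra details --- the explicit compact-to-Hausdorff closed-map argument for continuity of $\Phi^{-1}$, and the Tietze extension of $\rho_0$ from $K$ to all of $\mathbb{R}^{M+1}$ --- flesh out steps the paper's proof summary leaves implicit, but the route is identical.
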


Another way of stating \Cref{ori_uncountable_theorem}, which brings the terminology more in line with Deep Sets, is as follows:

\edef\mythmcount{\value{theorem}}
\setcounterref{theorem}{ori_uncountable_theorem}
\addtocounter{theorem}{-1}
\begin{theorem}[Deep Sets terminology]
Deep Sets can represent any continuous permutation-invariant function of $M$ elements if the dimension of the model's latent space is at least $M+1$.
\end{theorem}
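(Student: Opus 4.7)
The plan is to build an explicit continuous sum-decomposition by exploiting the fact that the first $M$ power sums $p_k(\mathbf{x}) = \sum_i x_i^k$ uniquely determine the multiset $\{x_1,\ldots,x_M\}$. Concretely, I would take $\phi(x) = (1, x, x^2, \ldots, x^M) \in \mathbb{R}^{M+1}$ (or, equivalently for this purpose, $\phi(x) = (x, x^2, \ldots, x^{M+1})$), so that $\Phi(\mathbf{x}) = (M, p_1(\mathbf{x}), \ldots, p_M(\mathbf{x}))$. The decoder $\rho$ is then obtained by inverting $\Phi$ on $S_M$-orbits and extending continuously to the whole latent space $\mathbb{R}^{M+1}$.

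The heart of the argument is to show that $\Phi$ separates $S_M$-orbits. If $\Phi(\mathbf{x}) = \Phi(\mathbf{y})$ then $p_k(\mathbf{x}) = p_k(\mathbf{y})$ for $k=1,\ldots,M$. Newton's identities provide a triangular system expressing the elementary symmetric polynomials $e_1,\ldots,e_M$ as polynomial functions of $p_1,\ldots,p_M$, so the $e_k(\mathbf{x})$ agree with the $e_k(\mathbf{y})$. These are precisely the coefficients of the monic polynomial $\prod_i (z - x_i)$, so $\mathbf{x}$ and $\mathbf{y}$ determine the same polynomial and hence the same multiset of roots in $[0,1]$. Consequently they lie in the same $S_M$-orbit.

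The remainder is point-set topology. The quotient $[0,1]^M / S_M$ is compact, and since $\Phi$ is continuous and $S_M$-invariant, it descends to a continuous bijection $\tilde{\Phi}$ from this quotient onto $\Phi([0,1]^M) \subset \mathbb{R}^{M+1}$. A continuous bijection from a compact space to a Hausdorff space is a homeomorphism, so $\tilde{\Phi}^{-1}$ is continuous. The permutation-invariant $f$ likewise descends to a continuous $\tilde{f}$ on the quotient, and $\rho_0 := \tilde{f} \circ \tilde{\Phi}^{-1}$ is continuous on the compact — hence closed — subset $\Phi([0,1]^M) \subset \mathbb{R}^{M+1}$. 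Tietze's extension theorem then gives a continuous extension $\rho : \mathbb{R}^{M+1} \to \mathbb{R}$, and by construction $f = \rho \circ \Phi$, witnessing the desired continuous sum-decomposition.

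The main obstacle is the injectivity step: one has to convert agreement of power sums into agreement of the $S_M$-orbit, which rests essentially on Newton's identities (or equivalently the fundamental theorem of symmetric polynomials) together with the fact that a monic polynomial is determined by its coefficients and has a well-defined multiset of roots. Once this is in hand, the remaining ingredients — compactness of the orbit space, the compact-to-Hausdorff homeomorphism argument, and the Tietze extension — are standard and can be assembled without difficulty.
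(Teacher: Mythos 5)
Your proposal is correct and takes essentially the same approach as the proof in \citet{Zaheer2017} that the paper summarises: the same power-sum encoding $\phi_q(x) = x^q$, the same injectivity-via-Newton's-identities argument, and the same compact-to-Hausdorff argument for the continuous inverse. Your addition of the Tietze extension to extend $\rho$ from $\Phi([0,1]^M)$ to all of $\mathbb{R}^{M+1}$ is a careful touch that the paper's summary glosses over, but it does not change the substance of the argument.
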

\setcounter{theorem}{\mythmcount}

\begin{proof}
The proof by \citet{Zaheer2017} of \Cref{ori_uncountable_theorem} is more involved than for \Cref{ori_countable_theorem}.
We do not include it here in detail, but we summarise the main points as follows.
\begin{enumerate}
    \item Show that the mapping $\Phi: [0,1]^M \to \mathbb{R}^{M+1}$ defined by $\Phi_q(\mathbf{x}) = \sum_{i=1}^{M} (x_i)^q$ for $q = 0,\dots,M$ is injective and continuous\footnote{In the original proof, $\Phi$ is denoted $E$.}.
    \item Show that $\Phi$ has a continuous inverse.
    \item Define $\rho: \mathbb{R}^{M+1} \to \mathbb{R}$ by $\rho = f \circ \Phi^{-1}$.
    \item Define $\phi(x): \mathbb{R} \to \mathbb{R}^{M+1}$ by $\phi_q(x) = x^q$.
    \item Note that, by definition of $\rho$ and $\phi$, $(\rho, \phi)$ is a continuous sum-decomposition of $f$ via $\mathbb{R}^{M+1}$.
\end{enumerate}
\end{proof}

\pagebreak

In light of these theorems, our key conclusions can be summarised as follows:

\begin{itemize}
    \item In \Cref{sec:continuity} we argue that the guarantee of sum-decomposability via $\mathbb{R}$ given by \Cref{ori_countable_theorem} cannot hold in practice. This is because the necessary mappings are highly discontinuous and cannot be computed.
    \item In \Cref{sec:continuous} we prove that the guarantee of sum-decomposability via $\mathbb{R}^{M+1}$ given by \Cref{ori_uncountable_theorem} is essentially the best possible. The dimension of the latent space must be at least $M$, or universality is lost.
\end{itemize}

Additionally, we contribute some results on universal representation with discontinuous sum-decompositions.
These results are not relevant for practice\footnote{One of our proofs depends on the axiom of choice, bringing us firmly beyond the realm of computability!} but are included for mathematical interest, and can be found in Appendix \ref{sec:proofs_appendix}.

\subsection{The Importance of Continuity}
\label{sec:continuity}

\Cref{ori_countable_theorem} appears to give a strong guarantee on the universality of sum-decomposition via $\mathbb{R}$.
The theorem applies even when the domain is allowed to be infinite, and so any task implemented on real hardware, where inputs belong to large but finite domains, satisfies the conditions of the theorem.
Nevertheless, an insurmountable problem arises when trying to apply the construction from the proof of \Cref{ori_countable_theorem}, with the consequence that \Cref{ori_countable_theorem} cannot be applied on real hardware. 
That is, sum-decomposition via $\mathbb{R}$ does \emph{not} provide a guarantee of universal function representation in practice.
We illustrate the problem by considering an example with the domain of all $256 \times 256$ pixel images in 8-bit greyscale, which we denote $\mathcal{I}$.

Suppose, as required by the construction used in the proof, that we have a counting function $c$ which assigns an integer to each element of $\mathcal{I}$.
There are $2^{2^{19}}$ elements in $\mathcal{I}$, so even if $c$ counts all ``interesting'' images\footnote{That is, images which are not just random noise, but have meaningful visual content.} first, there must be ``interesting'' images mapping to very large numbers under $c$.
Consider for instance the $2^{2^{16}}$th image $\iota := c^{-1}(2^{2^{16}})$, and suppose that, given a set of images $\mathbf{x}$, our target function $f$ needs to compute whether or not $\iota \in \mathbf{x}$.
Under the encoding/decoding procedure from the proof of \Cref{ori_countable_theorem}, the decoder function $\rho$ is given as input the real number $r := \Phi(\mathbf{x})$.
Recalling that the construction has $\phi(x) := 4^{-c(x)}$, the presence or absence of the image $\iota$ in the set $\mathbf{x}$ will affect the value of $r$ by one part in $4^{{2^{2^{16}}}}$.
Any decoder $\rho$ implemented on real hardware will therefore be unable to distinguish between sets containing $\iota$ and sets without $\iota$.

The essential problem here is an extreme degree of \emph{discontinuity}.
Intuitively speaking, a function is continuous if, at every point in the domain, the variation of the output is insensitive to small variations in the input.
In the example above, a successful decoder $\rho$ must be sensitive to vanishingly small perturbations in the input, across its entire input space, and this property renders it impossible to compute in practice.
We can avoid this problem by requiring that the functions under consideration are all continuous.
This requirement is ubiquitous in work on the function approximation properties of machine learning models, including the universal approximation theorem for neural networks \citep{cybenko1989approximation}, the universality of Gaussian processes with certain kernels \citep{Rasmussen2006GPML}, the literature on approximating functions on sets \citep{han2019universal, segol2020universal}, and indeed in function approximation results outside machine learning \citep[such as the well-known Stone-Weierstrass theorem,][]{stone1948generalized}.
In practice, neural networks with continuous activation functions\footnote{I.e. essentially all popular activation functions, including ReLU, ELU, sigmoid, and tanh.} can only compute continuous functions, and the same is true of Gaussian processes with continuous kernels.

To continue our analysis of sum-decomposition, we therefore adopt the assumption that the target functions and sum-decompositions under consideration are all continuous.
This aligns with the conditions in \Cref{ori_uncountable_theorem}, and in the next section we show that the conclusion of \Cref{ori_uncountable_theorem} is essentially the strongest result that can be achieved under these assumptions.

\subsection{Function Representation With Continuous Mappings}
\label{sec:continuous}

The following theorem is our central result on the function representation properties of continuous sum-decomposition.

\begin{restatable}{theorem}{maxnotdecomp}
\label{thm:max_not_decomposable}
Let $M, N \in \mathbb{N}$, with $M > N$. Then there exist permutation-invariant continuous functions $f : \mathbb{R}^M \to \mathbb{R}$ which are {\upshape\bfseries not} continuously sum-decomposable via $\mathbb{R}^N$.
\end{restatable}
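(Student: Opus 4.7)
The plan is to prove the theorem by contradiction, taking the witness function to be $f(\mathbf{x}) = \max_i x_i$, which is continuous and permutation-invariant. Suppose $f$ admits a continuous sum-decomposition $(\rho, \phi)$ via $\mathbb{R}^N$ with $N < M$, so that $f(\mathbf{x}) = \rho\bigl(\Phi(\mathbf{x})\bigr)$ with $\Phi(\mathbf{x}) = \sum_i \phi(x_i)$, $\phi \colon \mathbb{R} \to \mathbb{R}^N$, and $\rho \colon \mathbb{R}^N \to \mathbb{R}$ all continuous.

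First I would extract structural consequences. Since $\max$ is recovered as $\rho \circ \Phi$, the images $\Phi(B_c)$ of the level sets $B_c := \{\mathbf{x} : \max_i x_i = c\}$ must be pairwise disjoint as $c$ varies. Evaluating on the diagonal $(t, \ldots, t)$ gives $\rho(M\phi(t)) = t$, which forces $\phi$ to be a continuous injection, and a short compactness argument shows that $\phi(\mathbb{R})$ must be unbounded (otherwise $\rho$ would be continuous but unbounded on a compact set).

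The crux is turning $N < M$ into a concrete coincidence $\Phi(\mathbf{x}) = \Phi(\mathbf{y})$ with $\max(\mathbf{x}) \neq \max(\mathbf{y})$. Writing $\Phi(B_c) = \phi(c) + \Sigma_c$, where $\Sigma_c$ is the $(M-1)$-fold Minkowski sum of $\phi((-\infty, c])$ with itself, disjointness becomes $\phi(c_2) - \phi(c_1) \notin \Sigma_{c_2} - \Sigma_{c_1}$ for all $c_1 < c_2$. I would first settle the base case $M = 2$, $N = 1$ directly: a continuous injection $\phi \colon \mathbb{R} \to \mathbb{R}$ is strictly monotonic, say increasing, so $\phi((-\infty, c]) = (L, \phi(c)]$ for $L := \lim_{t \to -\infty} \phi(t) \in \{-\infty\} \cup \mathbb{R}$, and $\Phi(B_c) = (L + \phi(c), 2\phi(c)]$. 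If $L$ is finite, letting $c_2 \to c_1^+$ in the disjointness inequality $2\phi(c_1) \leq L + \phi(c_2)$ yields $\phi(c_1) \leq L$, contradicting strict monotonicity; if $L = -\infty$, then the intervals $\Phi(B_c) = (-\infty, 2\phi(c)]$ are nested and so overlap immediately. The plan is to propagate this ``escape-direction'' mechanism to general $N < M$.

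The main obstacle is this generalisation. When $\phi \colon \mathbb{R} \to \mathbb{R}^N$ is only continuous, its image can wander along complicated curves in $\mathbb{R}^N$, and $(M-1)$-fold Minkowski sums of such curves need not be semialgebraic or otherwise tractable. A successful proof will require either a topological-dimension argument --- showing that, after reducing to the case where $\phi(\mathbb{R})$ does not lie in a proper affine subspace, the Minkowski sum of $M - 1$ translates of an unbounded continuous curve in $\mathbb{R}^N$ with $N \le M - 1$ has non-empty interior and hence cannot admit a continuous disjoint foliation by a one-parameter family --- or an adaptation of the witness $f$, chosen to pigeon-hole directly against the $N$-dimensional latent space. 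Either way, the heart of the proof lies in executing this dimensional collision carefully; the rest is structural bookkeeping.
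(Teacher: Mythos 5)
Your choice of $\texttt{max}$ as the witness matches the paper's, and your observation that $\Phi$ must send the level sets of $\texttt{max}$ to pairwise disjoint regions of $\mathbb{R}^N$ is correct. But you stop at level-set disjointness, and this is exactly where the gap opens up: disjointness is strictly weaker than what is actually available, and that weakness is what pushes you into the Minkowski-sum machinery that you sketch but acknowledge you cannot complete.

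The missing idea is to bootstrap from level-set disjointness to \emph{full injectivity} of $\Phi$ on (ordered representatives of) sets, by induction on set size. Suppose $\Phi(\mathbf{x}) = \Phi(\mathbf{y})$. By your own observation, $\texttt{max}(\mathbf{x}) = \texttt{max}(\mathbf{y}) =: c$. Remove $c$ from both sets and subtract $\phi(c)$ from both sides: the sums over the remaining $M-1$ elements coincide, and the inductive hypothesis (injectivity of the $(M-1)$-element sum map) gives $\mathbf{x} = \mathbf{y}$. To run the induction at lower levels one needs the max-equality there too, which can be recovered by padding the remnants with an element smaller than everything present and invoking the size-$M$ decomposition on the padded sets. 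Once full injectivity of $\Phi$ is in hand, the theorem follows from a single off-the-shelf topological fact: the restriction of $\Phi$ to the open set $\{x_1 < x_2 < \cdots < x_M\}$, which is homeomorphic to $\mathbb{R}^M$, is then a continuous injection into $\mathbb{R}^N$, and no such injection exists when $M > N$. This is precisely the ``dimensional collision'' you correctly identify as the crux, but it comes for free once injectivity is established --- no Minkowski sums, no foliations, no escape-direction analysis. Your base-case argument for $M=2$, $N=1$ is sound as far as it goes, but it does not propagate; the injectivity bootstrap is what makes the general case tractable.
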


Restated in more practical terms, this implies that for Deep Sets to be capable of representing \emph{arbitrary} continuous functions on sets of size $M$, the dimension of the model's latent space (denoted $N$ in the theorem statement above) must be at least $M$.
A similar statement is also true for models based on the similar concept of \emph{max-decomposition}, as considered by \citet{Qi2017} -- details are provided in \Cref{app:max-decomp}.

In terms of concrete recommendations for practitioners, \Cref{thm:max_not_decomposable} suggests that when deploying a model based on Deep Sets, the choice of latent space should depend on the number of elements in the input sets under consideration.
Referring back to our depiction of Deep Sets in \Cref{fig:Janossy_K1}, this means that the cardinality of the input sets should be taken into account when choosing the length of the vectors output by the $\phi$ blocks.
Choosing a latent space which is not sufficiently high-dimensional may mean that the target function cannot be successfully modelled.
It must be noted, however, that \Cref{thm:max_not_decomposable} places restrictions on function \emph{representation}, but not on function \emph{approximation}.
Function approximation comes closer to describing what is needed in practice, and we will return to address this point in \Cref{sec:universal_approximation}, where we prove that this restriction on the latent space dimension also holds for function \emph{approximation} with Deep Sets.
We discuss the practical implications of \Cref{thm:max_not_decomposable} in more detail in \Cref{sec:implications_and_limitations}.

In addition to showing that an $M$-dimensional latent space is necessary, we adapt the proof from \citet{Zaheer2017} of \Cref{ori_uncountable_theorem} to strengthen the result in two ways. 
Proofs of these two results can be found in \Cref{sec:proofs_appendix}.
Firstly, we lower the bound on the sufficient latent space dimension by $1$, concluding that an $M$-dimensional latent space is \emph{necessary and sufficient} for universal representation.

\begin{restatable}{theorem}{oriunc}
\label{cor:ori_uncountable_theorem}
Let $M \in \mathbb{N}$, and let $f: \mathbb{R}^{M} \to \mathbb{R}$ be a continuous permutation-invariant function. Then $f$ is continuously sum-decomposable via $\mathbb{R}^M$.
\end{restatable}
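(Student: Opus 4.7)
The plan is to follow the structure of the proof of \Cref{ori_uncountable_theorem} but to remove the redundant $q = 0$ coordinate from the power-sum embedding. Define $\phi: \mathbb{R} \to \mathbb{R}^M$ by $\phi_q(x) = x^q$ for $q = 1, \ldots, M$, and let $\Phi(\mathbf{x}) = \sum_i \phi(x_i)$ be the associated vector of power-sum symmetric polynomials of degrees $1$ through $M$. Since the set size $M$ is fixed throughout, the $q = 0$ coordinate used by \citet{Zaheer2017} is constantly equal to $M$ and carries no information, so dropping it costs nothing.

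First I would verify that $\Phi$ descends to an injection on the orbit space $\mathbb{R}^M / S_M$. By Newton's identities, the power sums $p_1, \ldots, p_M$ determine the elementary symmetric polynomials $e_1, \ldots, e_M$, and those are exactly the (signed) coefficients of the monic polynomial whose roots are $x_1, \ldots, x_M$. The fundamental theorem of algebra then recovers the multiset of roots, so $\Phi$ separates permutation-equivalence classes. Continuity of $\Phi$ is immediate since each coordinate is polynomial.

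Next I would show that the induced map $\bar\Phi: \mathbb{R}^M / S_M \to \mathbb{R}^M$ is a homeomorphism onto its image. The key ingredient, and the main obstacle compared to the compact-domain argument of \Cref{ori_uncountable_theorem}, is establishing properness: if $K \subset \mathbb{R}^M$ is compact, then the $p_q$ are uniformly bounded on $\Phi^{-1}(K)$, hence by Newton's identities the $e_q$ are bounded there as well, and a standard bound on the roots of a monic polynomial in terms of the magnitudes of its coefficients shows that $\Phi^{-1}(K)$ is bounded; being also closed (as the preimage of a closed set under a continuous map), it is compact. A continuous proper map between locally compact Hausdorff spaces is closed, so $\bar\Phi$ is a continuous injection that is also closed, and therefore a homeomorphism onto its image. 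In particular $\Phi(\mathbb{R}^M)$ is a closed subset of $\mathbb{R}^M$.

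Finally, let $\bar{f}: \mathbb{R}^M/S_M \to \mathbb{R}$ be the continuous map induced by $f$ on the quotient, and define $\rho_0: \Phi(\mathbb{R}^M) \to \mathbb{R}$ by $\rho_0 = \bar{f} \circ \bar\Phi^{-1}$. Then $\rho_0$ is continuous on the closed set $\Phi(\mathbb{R}^M) \subseteq \mathbb{R}^M$, so by the Tietze extension theorem it extends to a continuous $\rho: \mathbb{R}^M \to \mathbb{R}$. By construction $(\rho, \phi)$ is a continuous sum-decomposition of $f$ via $\mathbb{R}^M$, completing the proof. The only nontrivial step is the properness argument underlying continuity of $\bar\Phi^{-1}$; once that is in place, closedness of the image and the Tietze extension are routine.
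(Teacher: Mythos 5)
Your proof is correct, and it in fact handles an issue that the paper's own proof largely glosses over. The paper's argument simply takes the homeomorphism result of \citet{Zaheer2017} for the power-sum map $\Phi$ with exponents $q=0,\ldots,M$ (proved on the compact ordered simplex $\Delta_M$), drops the constant $q=0$ coordinate, and declares $(f\circ\widetilde\Phi^{-1},\widetilde\phi)$ a sum-decomposition. For the statement as given, with domain $\mathbb{R}^M$ rather than $[0,1]^M$, that compactness-based homeomorphism argument no longer applies directly, and the paper does not address how to make $\rho$ continuous beyond the image of $\widetilde\Phi$. Your route fills both gaps: you re-derive injectivity of the power-sum map from Newton's identities plus the fundamental theorem of algebra, and, more importantly, you replace the compactness argument with a \emph{properness} argument — bounding the power sums on $\Phi^{-1}(K)$, transferring that to bounds on the elementary symmetric polynomials and hence the roots — which shows $\bar\Phi$ is a closed embedding of $\mathbb{R}^M/S_M$ into $\mathbb{R}^M$ with closed image, so the Tietze extension theorem gives a genuine $\rho:\mathbb{R}^M\to\mathbb{R}$. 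The underlying encoding map $\phi_q(x)=x^q$, $q=1,\ldots,M$, is identical in both proofs; what your version buys is a self-contained and rigorous treatment of the noncompact domain, at the modest cost of invoking properness and Tietze rather than citing the compact-domain result as a black box.
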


Secondly, we show that the model can deal with variable set sizes $\leq M$. 
That is, the model is not only capable of modelling functions on sets of size \emph{exactly} $M$, but also on sets of size \emph{up to} $M$.
At least in principle, the model is therefore applicable in settings where the input sets may contain variable numbers of points.

\begin{restatable}[Variable set size]{theorem}{arbitrary}
\label{thm:arbitrary_set_sizes}
Denote the set of subsets of $[0,1]$ containing at most $M$ elements by $[0,1]^{\leq M}$. Let $f: [0,1]]^{\leq M} \to \mathbb{R}$ be continuous\footnote{Note that we must take some care with the notion of continuity here -- see \Cref{sec:cont_set_fun_remark}.} and permutation-invariant. Then $f$ is continuously sum-decomposable via $\mathbb{R}^M$.
\end{restatable}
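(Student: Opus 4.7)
The plan is to extend the proof of \Cref{cor:ori_uncountable_theorem} to variable-size sets by choosing an encoding $\phi$ whose aggregate $\Phi(\mathbf{x}) := \sum_{x \in \mathbf{x}} \phi(x)$ is injective across all cardinalities $N \leq M$ simultaneously. The crucial change from the fixed-size case is to shift the domain so that no element is sent to the zero vector: I would take $\phi : [0,1] \to \mathbb{R}^M$ defined by $\phi_q(x) = (1+x)^q$ for $q = 1, \ldots, M$. Writing $y := 1 + x \in [1,2]$, the $q$th coordinate of $\Phi(\mathbf{x})$ is then the power sum $p_q = \sum_{x \in \mathbf{x}}(1+x)^q$ of the shifted set, which has elements safely bounded away from $0$.

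The heart of the argument is the injectivity of $\Phi$ on $[0,1]^{\leq M}$. Given $\mathbf{x}, \mathbf{x}'$ with $\Phi(\mathbf{x}) = \Phi(\mathbf{x}')$, form the polynomials $A(z) = \prod_{x \in \mathbf{x}}\bigl(1 - (1+x) z\bigr)$ and $B(z) = \prod_{x' \in \mathbf{x}'}\bigl(1 - (1+x') z\bigr)$, which have degree at most $M$ and constant term $1$. The standard identity $-\log A(z) = \sum_{k \geq 1} p_k(\mathbf{x}) z^k / k$ (and likewise for $B$) shows that matching the first $M$ power sums forces $A(z)/B(z) = 1 + O(z^{M+1})$. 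Hence $A - B$ is a polynomial of degree $\leq M$ vanishing to order at least $M+1$ at $0$, so $A = B$; comparing root multisets in $[1,2]$ and shifting back yields $\mathbf{x} = \mathbf{x}'$. The shift by $1$ is essential: without it, $\{0\}$ and $\emptyset$ would both map to the zero vector, so injectivity would fail between the two smallest strata.

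With injectivity established, the finish mirrors \Cref{cor:ori_uncountable_theorem}: under the topology on $[0,1]^{\leq M}$ treated in \Cref{sec:cont_set_fun_remark}, the space is compact and $\Phi$ is continuous, so $\Phi$ is a homeomorphism onto its compact image $K \subset \mathbb{R}^M$, and $\rho_0 := f \circ \Phi^{-1}$ is continuous on $K$. Tietze's extension theorem then produces a continuous $\rho : \mathbb{R}^M \to \mathbb{R}$ extending $\rho_0$, and by construction $(\rho, \phi)$ is a continuous sum-decomposition of $f$ via $\mathbb{R}^M$. The point I expect to be most delicate is not the algebraic injectivity argument but the topological setup itself — ensuring that the notion of continuity on $[0,1]^{\leq M}$ used in the theorem's hypothesis is the same one under which $\Phi$ and $\Phi^{-1}$ are continuous and the domain is compact, which is precisely the issue flagged in the footnote.
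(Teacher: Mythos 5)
Your approach is correct and takes a genuinely different route from the paper's. The paper handles variable set sizes by padding every set up to exactly $M$ elements with a placeholder value $k \notin [0,1]$, then replacing $\phi_q$ by $\widehat{\phi}_q(x) = x^q - k^q$ so the placeholder contributes zero to the sum; this forces an auxiliary extension $\widehat{f}$ of $f$ to $M$-tuples from a larger set $\mathcal{Y}$, and the final decomposition is built piecewise over set sizes. Your shift $x \mapsto 1+x$ is cleaner: it keeps $\phi$ bounded away from the zero vector, so no input ever ``looks empty,'' and the generating-function identity gives injectivity of $\Phi$ simultaneously across all cardinalities $\leq M$ — the degree of $\prod(1-(1+x)z)$ recovers the size and the roots recover the elements. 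You never need to enlarge the input domain or extend $f$; the padding and the $\widehat{f}$ step disappear. Both your argument and the paper's ultimately rest on the same power-sum / Newton-identity fact, just packaged differently.

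The point you flag as delicate is real, but it needs a sharper statement than ``the space is compact and $\Phi$ is continuous'': with the natural topology on $[0,1]^{\leq M}$ (the one in which $\{x, x+\varepsilon\} \to \{x\}$ as $\varepsilon \to 0$, which is what the coordinate-wise continuity notion of \Cref{sec:cont_set_fun_remark} encodes), $\Phi$ is discontinuous at every collapsing point, since $\Phi(\{x, x+\varepsilon\}) \to 2\phi(x) \neq \phi(x)$. Compactness and continuity of $\Phi$ cannot both hold for that space; this tension is present in the paper's proof too and is why the paper's construction implicitly works with multisets. The fix is exactly that: take the domain to be $\bigsqcup_{m=0}^{M}\,[0,1]^m/S_m$, on which $\Phi$ is continuous, and let $\tilde K$ be its (compact) image, which is a finite union of pairwise disjoint compacta $\tilde K_m$ by your injectivity argument. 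The paper's definition of continuity of $f$ says precisely that the induced $f_m : [0,1]^m \to \mathbb{R}$ (with coordinate collisions collapsed via the set map) is continuous, so $\rho_0 := f_m \circ (\Phi|_m)^{-1}$ is continuous on each $\tilde K_m$ and hence on $\tilde K$ since the pieces are separated. Tietze then gives $\rho$ as you describe, and $\rho(\Phi(X)) = f(X)$ because an honest set $X$ is just the repetition-free multiset. With this adjustment your proof is complete.
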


\subsubsection{Proof of \Cref{thm:max_not_decomposable}}

Our proof of \Cref{thm:max_not_decomposable} relies on showing that the permutation-invariant function \texttt{max}, applied to $M$-element sets of real numbers, is not sum-decomposable via $\mathbb{R}^{M-1}$.
To show this, we need the following lemma.

\begin{lemma}
\label{lem:injection}
Let $M, N \in \mathbb{N}$, and suppose $\phi : \mathbb{R} \to \mathbb{R}^N$, $\rho : \mathbb{R}^N \to \mathbb{R}$ are functions such that

\begin{equation}
\label{eq:max_rep}
\texttt{\emph{max}}(\mathbf{x}) = \rho \bigl( \sum_i \phi(x_i) \bigr).
\end{equation}

Recall that $\Phi(\mathbf{x}) = \sum_i \phi(x_i)$, and write $\Phi_M$ for the restriction of $\Phi$ to sets of size $M$.

Then $\Phi_M$ is injective for all $M$.
\end{lemma}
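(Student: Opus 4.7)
The plan is an induction on $M$, reading the hypothesis as the assertion that a single pair $(\rho,\phi)$ sum-decomposes $\texttt{max}$ on input sets of every cardinality; this is the only interpretation under which the conclusion ``$\Phi_M$ is injective for all $M$'' is meaningful, and it is the standard convention in the Deep Sets setting where a fixed encoder and decoder are meant to represent a target function across all sizes. The key structural observation is that from any equality $\Phi_M(\mathbf{x}) = \Phi_M(\mathbf{y})$ we can both (a) extract the maximum via $\rho$, and (b) subtract $\phi$ of that maximum to obtain a matched equality $\Phi_{M-1}(\mathbf{x}') = \Phi_{M-1}(\mathbf{y}')$ between the sets with one copy of the maximum peeled off. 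The problem therefore strictly shrinks at each step, and induction closes it.

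For the base case $M=1$, \Cref{eq:max_rep} specialises to $\rho(\phi(x)) = \texttt{max}(\{x\}) = x$, exhibiting $\rho$ as a left inverse of $\phi = \Phi_1$ and hence forcing $\Phi_1$ to be injective. For the inductive step, assume $\Phi_{M-1}$ is injective and suppose $\Phi_M(\mathbf{x}) = \Phi_M(\mathbf{y})$ for $M$-element sets $\mathbf{x},\mathbf{y}$. Applying $\rho$ to both sides and using \Cref{eq:max_rep} yields $\texttt{max}(\mathbf{x}) = \texttt{max}(\mathbf{y}) =: a$. Delete one copy of $a$ from each set to obtain $\mathbf{x}',\mathbf{y}'$ of size $M-1$; since $\Phi$ is a sum over element-wise images, $\Phi_{M-1}(\mathbf{x}') = \Phi_M(\mathbf{x}) - \phi(a) = \Phi_M(\mathbf{y}) - \phi(a) = \Phi_{M-1}(\mathbf{y}')$. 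The inductive hypothesis then gives $\mathbf{x}' = \mathbf{y}'$, and reinstating $a$ on each side delivers $\mathbf{x} = \mathbf{y}$.

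I do not expect any serious obstacle. The argument is purely algebraic --- it uses no topology, no dimension count, and nothing about the structure of $\mathbb{R}^N$ beyond the linearity of vector addition. The only delicate points are conceptual: one must commit to treating $\mathbf{x}$ as a multiset so that ``remove one copy of $a$'' is a well-defined operation when the maximum appears with multiplicity greater than one, and one must read the hypothesis as supplying a decomposition valid on sets of size $M$ and $M-1$ (and, by the conclusion's quantifier, every size) --- otherwise the recursion to the $(M-1)$-case could not even be stated. Both conventions are consistent with how sum-decomposition is used elsewhere in the paper.
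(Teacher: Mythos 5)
Your proof is correct and follows exactly the same route as the paper's: induction on $M$, applying $\rho$ to extract $\texttt{max}(\mathbf{x}) = \texttt{max}(\mathbf{y})$, then subtracting $\phi$ of the common maximum to reduce to $\Phi_{M-1}$ and invoke the inductive hypothesis. The only differences are cosmetic --- you spell out the base case (where the paper says ``clear'') and flag the multiset convention --- but the argument is identical.
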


\begin{proof}
We proceed by induction. The base case $M=1$ is clear.

Now let $M \in \mathbb{N}$, and suppose that $\Phi_{M-1}$ is injective. Suppose there are sets $\mathbf{x}, \mathbf{y}$ such that $\Phi_M(\mathbf{x}) = \Phi_M(\mathbf{y})$. First note that, by \Cref{eq:max_rep}, we must have

\begin{equation}
\label{eq:equal_max}
\texttt{max}(\mathbf{x}) = \texttt{max}(\mathbf{y}).
\end{equation}

So now write

\begin{equation}
\label{eq:decomp}
\mathbf{x} = \{x_\texttt{max}\} \cup \mathbf{x}_\text{rem} ~ ; ~ \mathbf{y} = \{y_\texttt{max}\} \cup \mathbf{y}_\text{rem},
\end{equation}

where $x_\texttt{max} = \texttt{max}(\mathbf{x})$, and $y_\texttt{max} = \texttt{max}(\mathbf{y})$. But now we have

\begin{displaymath}
\begin{split}
\Phi_M(\mathbf{x}) & = \Phi_{M-1}(\mathbf{x}_\text{rem}) + \phi(x_\texttt{max}) \\
          & = \Phi_{M-1}(\mathbf{y}_\text{rem}) + \phi(y_\texttt{max}) \\
          & = \Phi_M(\mathbf{y}).
\end{split}
\end{displaymath}

From the central equality, and \Cref{eq:equal_max}, we have

\begin{displaymath}
\Phi_{M-1}(\mathbf{x}_\text{rem}) = \Phi_{M-1}(\mathbf{y}_\text{rem}).
\end{displaymath}

Now by injectivity of $\Phi_{M-1}$, we have $\mathbf{x}_\text{rem} = \mathbf{y}_\text{rem}$. Combining this with Equations \eqref{eq:equal_max} and \eqref{eq:decomp}, we must have $\mathbf{x} = \mathbf{y}$, and so $\Phi_M$ is injective.
\end{proof}

Equipped with this lemma, we can now prove \Cref{thm:max_not_decomposable}.

\begin{proof}
We proceed by contradiction. Suppose that functions $\phi$ and $\rho$ exist satisfying \Cref{eq:max_rep}. Define $\Phi_M : \mathbb{R}^M \to \mathbb{R}^N$ by

\begin{displaymath}
\Phi_M(\mathbf{x}) := \Sigma_{i=1}^M \phi(x_i).
\end{displaymath}

Denote by $\mathbb{R}_\text{ord}^M$ the set of all $\mathbf{x} \in \mathbb{R}^M$ such that $x_1 < x_2 < ... < x_M$. Let $\Phi_M^\text{ord}$ be the restriction of $\Phi_M$ to $\mathbb{R}_\text{ord}^M$. Since $\Phi_M^\text{ord}$ is a sum of continuous functions, it is also continuous, and by Lemma \ref{lem:injection}, it is injective.

Now note that $\mathbb{R}_\text{ord}^M$ is a convex open subset of $\mathbb{R}^M$, and is therefore homeomorphic to $\mathbb{R}^M$. Therefore, our continuous injective $\Phi_M^\text{ord}$ can be used to construct a continuous injection from $\mathbb{R}^M$ to $\mathbb{R}^N$. But it is known that no such continuous injection exists when $M>N$. Therefore our decomposition from \Cref{eq:max_rep} cannot exist.
\end{proof}

\subsubsection{Implications and Limitations}
\label{sec:implications_and_limitations}

In light of Theorems \ref{thm:max_not_decomposable} and \ref{cor:ori_uncountable_theorem}, we now have a necessary and sufficient condition for universal function representation with Deep Sets.
Given a continuous target function $f$ on sets of size at most $M$, $f$ can be represented using a Deep Sets-based model if and only if the model's latent space is at least $M$-dimensional.
We now provide a brief discussion of the implications of this result and the limitations which must be taken into account when considering how the result applies in practice.

We can concisely summarise the practical implications of Theorems \ref{thm:max_not_decomposable} and \ref{cor:ori_uncountable_theorem} as follows. 
When deploying a Deep Sets-based model, the choice of latent space should depend on the cardinality of the sets being processed. 
\textbf{Larger input sets demand a larger latent space}.
We emphasise that, although our results give a precise figure for the necessary and sufficient latent dimension $M$, this should not be interpreted as meaning that exactly $M$ dimensions should always be used in practice, for the following reasons.

First note that \Cref{thm:max_not_decomposable} does not imply that \emph{all} functions require an $M$-dimensional latent space.
Some functions can be represented in a lower dimensional space -- take for instance the sum of all elements, which is trivially represented by setting both $\phi$ and $\rho$ to be the identity function.
The statement rather says that \emph{some} functions require an $M$-dimensional latent space.
It is still possible that, in a given application, the target function does not require $M$ dimensions.
While we do not characterise exactly which functions require an $M$-dimensional sum-decomposition, we do note that such functions need not be ``badly-behaved'' or difficult to specify.
Our proof specifically demonstrates that even $\text{\tt{max}}$, which is trivial to specify, is not continuously sum-decomposable with a latent space dimension less than $M$.

Second, although \Cref{cor:ori_uncountable_theorem} shows that an $M$-dimensional latent space suffices to model any target function, we also know from \Cref{thm:max_not_decomposable} that $M$ dimensions is the \emph{bare minimum} needed to guarantee this property.
In practice, a model using only this minimum capacity of $M$ dimensions is not guaranteed to provide good results.
One reason for this is that, while the necessary functions $\phi$ and $\rho$ certainly \emph{exist}, we have no guarantee that they can be successfully \emph{learned}.
It is entirely possible, even in light of our results, that increasing the latent dimension above $M$ leads to more efficient training and superior results in practice.

A final, major weakness of \Cref{thm:max_not_decomposable} is that, in following the mathematical framework developed in \citet{Zaheer2017}, it addresses function \emph{representation} rather than function \emph{approximation}.
This leaves open the possibility that, while an $M$-dimensional latent space is needed for exact representation, we can achieve arbitrarily good approximation of any target function using only a $1$-dimensional latent space.
Addressing this weakness requires significantly more complicated mathematics, and we devote the next section to proving that an analogue of \Cref{thm:max_not_decomposable} does indeed hold for universal approximation, and $M$ dimensions are still required.

\section{Universal Function Approximation with Deep Sets}
\label{sec:universal_approximation}

In the previous section, we considered the problem of function \emph{representation} -- showing that there exists an instance of a given model which exactly computes a target function. But exactly representing target functions is generally \emph{not} our goal -- typically, we are concerned with \emph{approximating} a target function. In this section, we extend our analysis of the Deep Sets architecture to assess its ability to perform approximation.

To see concretely why function approximation requires a separate analysis, we return to the \texttt{max} function on a set of $M$ elements. As proved above, for an exact sum-decomposition, the latent space must be at least $M$-dimensional. But as noted by \cite{Zaheer2017} in an appendix to their work, \texttt{max} can be approximated arbitrarily well with only a $2$-dimensional latent space. In fact, one dimension will suffice. Let $a \in \mathbb{R}$ and set $\phi$ and $\rho$ as follows:
\begin{align}
\label{eq:phi_rho_max_approx}
\phi(x) &= e^{ax} & \rho(x) &= \frac{\log x}{a}.
\end{align}
Thus, our sum-decomposition approximation for \texttt{max} is
\begin{equation}
\label{eq:max_1d_approx}
\widehat{f}_a(X) = \frac{1}{a} \text{log} \left( \sum_{x \in X} e^{ax} \right).
\end{equation}
For input sets of size $M$, it is easily shown that
\begin{equation*}
\texttt{max}(X) \leq \widehat{f}_a(X) \leq \texttt{max}(X) + \frac{\text{log} M}{a}.
\end{equation*}
That is, the worst-case error in this approximation is $\frac{\text{log} M}{a}$. For fixed set size $M$, we can therefore achieve arbitrarily good approximation by increasing the value of $a$, even though we have only used a $1$-dimensional latent space. 

In the specific case of \texttt{max}, this demonstrates that there is a wide gap between what is necessary for representation and what is necessary for approximation. \texttt{max} is in some sense ``as hard as possible'' to represent, requiring $M$ latent dimensions, but ``as easy as possible'' to approximate, requiring only $1$ latent dimension.
This naturally raises the question of whether the approximation problem is somehow fundamentally easier. We state a more precise form of this question as follows: \emph{is it the case that every permutation-invariant function of $M$ elements has an approximate sum-decomposition via fewer than $M$ dimensions}? Our analysis answers this question in the negative -- there exist functions which cannot be closely approximated with a lower-dimensional latent space. Moreover, low-dimensional sum-decomposition is guaranteed to fail very badly for these functions -- in terms of worst-case error, approximation with sum-decomposition performs as badly as approximation with a constant function.

\subsection{A Necessary Condition For Function Approximation by Sum-Decomposition}

The question above mentions \emph{approximate sum-decomposition} -- in order to state and prove our result, we must define precisely what we mean by this. In the following definitions, let $M$ be a positive integer, $U \subset \mathbb{R}^M$ be compact,\footnote{This compactness requirement is necessary when discussing universal function approximation. Of particular relevance here is the fact that the universal approximation theorem for neural networks also requires a compact domain \citep{cybenko1989approximation, funahashi1989approximate, hornik1989multilayer}. This compactness requirement is a fundamental constraint which is also necessary for other function approximation results, for example the Stone-Weierstrass theorem \citep{stone1948generalized}.} and $f : U \to \mathbb{R}$. We maintain the convention that $\Phi(\mathbf{x}) = \sum_{i=1}^M \phi(x_i)$.

\begin{definition}
Let $\epsilon > 0$. $(\phi, \rho)$ is a \emph{within-$\epsilon$ sum-decomposition} of $f$ if $|f(\mathbf{u}) - \rho(\Phi(\mathbf{u}))| < \epsilon$ for every $\mathbf{u} \in U$.
\end{definition}

For example, if we let $a = \frac{\text{log}M}{\epsilon}$, then the sum-decomposition defined by \Cref{eq:phi_rho_max_approx} is a within-$\epsilon$ continuous sum-decomposition of \texttt{max} via $\mathbb{R}$.

\begin{definition}
A sequence $(\phi, \rho)_k = \{(\phi_k, \rho_k) ; k \in \mathbb{N} \}$ is an \emph{approximate sum-decomposition} of $f$ if, for any $\epsilon > 0$, there is some $K \in \mathbb{N}$ such that $(\phi_K, \rho_K)$ is a within-$\epsilon$ sum-decomposition of $f$. We also require that $(\phi_k, \rho_k)$ is a within-$\epsilon$ sum-decomposition of $f$ for every $k \geq K$. Put more loosely, $(\phi_k, \rho_k)$ is a sequence of ever-closer approximations to $f$. The existence of an approximate sum-decomposition of $f$ guarantees that $f$ can be approximated arbitrarily closely by sum-decomposition.

Given a set $Y$, we say that the approximate sum-decomposition $(\phi, \rho)_k$ is \emph{via $Y$} if $(\phi_k, \rho_k)$ is via $Y$ for every $k$.
\end{definition}

For example, letting $a=1,2,\ldots$, the sequence $(\phi, \rho)_a$ as defined by \Cref{eq:phi_rho_max_approx} is a continuous approximate sum-decomposition of \texttt{max} via $\mathbb{R}$.

With these definitions in hand, we now state our main result on approximation.

\begin{theorem}
\label{thm:main_approximation_theorem}
Let $M,N \in \mathbb{N}$ with $M>N$, and let $I_M := [-1, 1]^M \subset \mathbb{R}^M$. Then there exists a continuous permutation-invariant function $f: I_M \to \mathbb{R}$ which has no continuous approximate sum-decomposition via $\mathbb{R}^N$.
\end{theorem}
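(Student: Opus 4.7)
My plan is to establish the non-approximability by an explicit construction combined with a quantitative topological obstruction, mirroring the spirit of the representation-theoretic argument for \Cref{thm:max_not_decomposable} but replacing pointwise injectivity arguments with ones that survive small perturbations.

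First I would reduce to the critical case $N = M - 1$. Any within-$\epsilon$ continuous sum-decomposition $(\phi, \rho)$ via $\mathbb{R}^N$ with $N < M - 1$ lifts to one via $\mathbb{R}^{M-1}$ simply by padding $\phi$ with zero coordinates and composing $\rho$ with the projection onto the first $N$ coordinates; consequently a function not continuously approximable via $\mathbb{R}^{M-1}$ is also not continuously approximable via any lower-dimensional latent space. So throughout the rest of the argument I may assume $N = M - 1$.

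Second, to fix ideas, consider the prototype case $M = 2$, $N = 1$, and take the piecewise-affine target $f(x_1, x_2) := |x_1 - x_2|$ on $[-1, 1]^2$, for which $\max f - \min f = 2$. For any continuous $\phi : [-1, 1] \to \mathbb{R}$, the image $I_\phi := 2\phi([-1, 1])$ is a closed interval of $\mathbb{R}$. By the intermediate value theorem there exists $a \in [-1, 1]$ with $\phi(a) = \tfrac{1}{2}\bigl(\phi(1) + \phi(-1)\bigr)$, and hence $\Phi(1, -1) = \phi(1) + \phi(-1) = 2\phi(a) = \Phi(a, a)$. Any within-$\epsilon$ sum-decomposition $(\phi, \rho)$ of $f$ therefore forces both $|0 - \rho \Phi(a, a)| < \epsilon$ and $|2 - \rho \Phi(1, -1)| < \epsilon$ with $\rho \Phi$ taking the same value at both inputs, yielding $\epsilon > 1 = (\max f - \min f)/2$. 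This already matches the naïve constant baseline exactly, and so rules out any continuous approximate sum-decomposition.

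For the general case I would analogously construct a piecewise-affine permutation-invariant $f : I_M \to \mathbb{R}$ with the property that, for every continuous $\phi : [-1, 1] \to \mathbb{R}^{M-1}$, there is a forced collision $\Phi(\mathbf{x}^*) = \Phi(\mathbf{y}^*)$ between points $\mathbf{x}^*, \mathbf{y}^* \in I_M$ with $\mathbf{x}^*$ attaining $\max f$ and $\mathbf{y}^*$ attaining $\min f$. Given any such pair, the two-inequality computation above transposes verbatim, delivering $\epsilon > (\max f - \min f)/2$ and completing the proof.

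The main obstacle is the forced-collision step in dimensions higher than one. The intermediate value theorem is precisely the right tool for $M = 2$, but it fails immediately for $M \geq 3$: even for innocuous $\phi$ like $\phi(x) = (x, x^2)$ the centroid of three points on the curve $\phi([-1,1])$ need not lie on the curve, so a direct analogue of the $(1, -1) \leftrightarrow (a, a)$ collision does not exist. The natural substitute is a Borsuk--Ulam-type theorem applied on a sphere $S^{M-1}$ suitably embedded in $I_M$, using the odd map $g(\theta) = \sum_i \bigl(\phi(\alpha\theta_i) - \phi(-\alpha\theta_i)\bigr)$ from $S^{M-1}$ to $\mathbb{R}^{M-1}$; Borsuk--Ulam gives a zero $\theta_0$, i.e.\ a collision $\Phi(\alpha\theta_0) = \Phi(-\alpha\theta_0)$. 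The subtle point is that this collision may be \emph{trivial}: the multisets $\{\alpha\theta_{0,i}\}$ and $\{-\alpha\theta_{0,i}\}$ can coincide, on which any permutation-invariant $f$ automatically agrees and no obstruction is extracted. Designing $f$ together with an equivariant parametrization so that the collision delivered by Borsuk--Ulam (or a refinement of it adapted to the $S_M$-action on $I_M$, avoiding the "balanced" locus) necessarily joins a maximum of $f$ to a minimum of $f$ is where the real work lies. Once such a non-trivial, $f$-separating collision is established, the remainder of the proof is the same two-line calculation as in the $M=2$ case.
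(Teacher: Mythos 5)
Your proposal is a correct roadmap that converges with the paper's actual approach right up to the point where you explicitly stop: the reduction to $N = M-1$ is sound, your $M=2$ intermediate-value argument with $f(x_1, x_2) = |x_1 - x_2|$ is correct and delivers the worst-case-error bound $\epsilon > 1 = V_f/2$ exactly as the paper's strong form does, and you have correctly identified Borsuk--Ulam as the right topological input for $M \geq 3$. You have also correctly diagnosed the subtlety that sinks the naive antipodal construction: a zero of the odd map $g(\theta) = \sum_i \bigl(\phi(\alpha\theta_i) - \phi(-\alpha\theta_i)\bigr)$ on $S^{M-1}$ may give a collision $\Phi(\alpha\theta_0) = \Phi(-\alpha\theta_0)$ in which $-\alpha\theta_0$ is merely a permutation of $\alpha\theta_0$, on which every permutation-invariant $f$ vacuously agrees and no obstruction is extracted.

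But then you write that resolving this trivial-collision problem ``is where the real work lies,'' and you leave it there. That is precisely where the paper's proof lives: the construction of the map $\nu_N : I_N \to \Delta_N$ in \Cref{lem:nu_n}, which converts the antipodal involution on $\partial I_N$ into the ``left-shift'' map $\alpha$ on the simplex boundary, so that the zero of $\Gamma_N \circ \nu_N$ produced by Borsuk--Ulam yields a collision between a point $\mathbf{x}^+ \in X_M^{+1}$ (where $f_* = +1$) and a point $\mathbf{x}^- \in X_M^{\smn1}$ (where $f_* = \smn1$). These sets are disjoint faces of $\Delta_M$ forced apart by the target $f_*(\mathbf{u}) = \mathbf{w}^\intercal \texttt{sort}(\mathbf{u}) + b$ with alternating weights, so the collision is never ``balanced'' in your sense. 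Building $\nu_N$ is a delicate induction with a bespoke piecewise-linear interpolation scheme; it is not a two-line patch. So your submission is an accurate analysis of the problem's structure, but it contains a conceded and unfilled gap at the crux of the argument, and as written it is not a proof.
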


Our proof of this theorem actually provides stronger conclusions than stated above. The following paragraphs explain these additional conclusions in detail, and a theorem statement incorporating these conclusions is included in \Cref{sec:approximation_appendix}.

Since the target function $f$ is continuous on a compact domain, we know that its output is bounded, say by $y_\text{min} \leq f(\mathbf{u}) \leq y_\text{max}$. A worst-case error\footnote{By worst-case error we mean $\text{max}_\mathbf{u} |f(\mathbf{u}) - \widehat{f}(\mathbf{u})|$.} of $E := \frac{y_\text{max} - y_\text{min}}{2}$ can trivially be achieved by approximating $f$ with the constant function $\widehat{f}(\mathbf{u}) = \frac{y_\text{max} + y_\text{min}}{2}$. We show that there exist permutation-invariant functions $f_*$ on $\mathbb{R}^M$ which have no within-$E$ continuous sum-decomposition via $\mathbb{R}^N$. In other words, any attempt to sum-decompose $f_*$ via $\mathbb{R}^N$ is doomed to have the same worst-case error as simply approximating $f_*$ by a constant value.

In addition, we explicitly construct a permutation-invariant function $f_*$ for which sum-decomposition via $\mathbb{R}^N$ must have worst-case error $E$ as defined above. Letting $U=[\smn1,1]^M$, this $f_*$ has a simple form:\footnote{The extra $\smn1$ for even $M$ is to ensure that $f_*$ is bounded between $\smn1$ and $1$. This isn't necessary, indeed the bias term could be removed from \Cref{eq:defn_bad_target} without affecting our result, but reasoning about $f_*$ is simpler if its range is fixed.}

\begin{equation}
\label{eq:defn_bad_target}
\begin{aligned}
f_*(\mathbf{u}) &:= \mathbf{w}^\intercal\texttt{sort}(\mathbf{u}) + b \\
w_i &:= (\smn1)^{i+1} \\
b &:= \begin{cases}
\smn1 & M~\text{even} \\
0 & M~\text{odd}
\end{cases}
\end{aligned}
\end{equation}

Here, \texttt{sort} sorts the elements of $\mathbf{u}$ in descending order. This definition may look a little arbitrary, but we will justify it in \Cref{sec:approximation_proof} and provide some intuition for why it is hard to approximate in \Cref{sec:contours}. The function $f_*$ is trivial to represent using the sorting framework mentioned in \Cref{sec:related_work},\footnote{We discuss sorting in more detail in \Cref{sec:sorting}.} but impossible to approximate using a continuous sum-decomposition via $\mathbb{R}^N$. This gives mathematical support to the proposition that there is no true ``one size fits all'' approach -- for a given task, performance may vary greatly between different methods of achieving permutation invariance.

Finally we note that poor approximation behaviour is not confined to a single point. This follows from the fact that both $f_*$ and the sum-decomposition are continuous, and so the approximation error also varies continuously -- if the error is high at one point, it must also be high in some region around that point.

Having stated the key conclusions, we now present the proof of \Cref{thm:main_approximation_theorem}. We omit some details, particularly in the proof of Lemma \ref{lem:nu_n}. Full details are included in \Cref{sec:approximation_appendix}.

\subsection{Proving Necessity for Universal Approximation}
\label{sec:approximation_proof}

\begin{figure}
     \centering
     \begin{subfigure}{0.45\textwidth}
         \centering
         \input{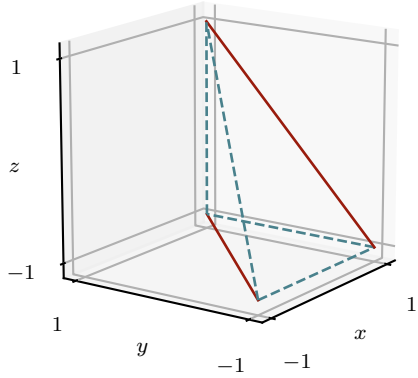}
         \caption{The boundary of $\Delta_3$.}
         \label{fig:delta_3}
     \end{subfigure}
     \hfill
     \begin{subfigure}{0.45\textwidth}
         \centering
\begingroup%
\makeatletter%
\begin{pgfpicture}%
\pgfpathrectangle{\pgfpointorigin}{\pgfqpoint{2.700000in}{2.700000in}}%
\pgfusepath{use as bounding box, clip}%
\begin{pgfscope}%
\pgfsetbuttcap%
\pgfsetmiterjoin%
\definecolor{currentfill}{rgb}{1.000000,1.000000,1.000000}%
\pgfsetfillcolor{currentfill}%
\pgfsetlinewidth{0.000000pt}%
\definecolor{currentstroke}{rgb}{1.000000,1.000000,1.000000}%
\pgfsetstrokecolor{currentstroke}%
\pgfsetdash{}{0pt}%
\pgfpathmoveto{\pgfqpoint{0.000000in}{0.000000in}}%
\pgfpathlineto{\pgfqpoint{2.700000in}{0.000000in}}%
\pgfpathlineto{\pgfqpoint{2.700000in}{2.700000in}}%
\pgfpathlineto{\pgfqpoint{0.000000in}{2.700000in}}%
\pgfpathclose%
\pgfusepath{fill}%
\end{pgfscope}%
\begin{pgfscope}%
\pgfsetbuttcap%
\pgfsetmiterjoin%
\definecolor{currentfill}{rgb}{1.000000,1.000000,1.000000}%
\pgfsetfillcolor{currentfill}%
\pgfsetlinewidth{0.000000pt}%
\definecolor{currentstroke}{rgb}{0.000000,0.000000,0.000000}%
\pgfsetstrokecolor{currentstroke}%
\pgfsetstrokeopacity{0.000000}%
\pgfsetdash{}{0pt}%
\pgfpathmoveto{\pgfqpoint{0.120000in}{0.120000in}}%
\pgfpathlineto{\pgfqpoint{2.580000in}{0.120000in}}%
\pgfpathlineto{\pgfqpoint{2.580000in}{2.580000in}}%
\pgfpathlineto{\pgfqpoint{0.120000in}{2.580000in}}%
\pgfpathclose%
\pgfusepath{fill}%
\end{pgfscope}%
\begin{pgfscope}%
\pgfpathrectangle{\pgfqpoint{0.120000in}{0.120000in}}{\pgfqpoint{2.460000in}{2.460000in}}%
\pgfusepath{clip}%
\pgfsetbuttcap%
\pgfsetroundjoin%
\pgfsetlinewidth{1.003750pt}%
\definecolor{currentstroke}{rgb}{0.286275,0.505882,0.549020}%
\pgfsetstrokecolor{currentstroke}%
\pgfsetdash{{3.700000pt}{1.600000pt}}{0.000000pt}%
\pgfpathmoveto{\pgfqpoint{0.231818in}{1.350000in}}%
\pgfpathlineto{\pgfqpoint{2.468182in}{1.350000in}}%
\pgfusepath{stroke}%
\end{pgfscope}%
\begin{pgfscope}%
\pgfsetbuttcap%
\pgfsetroundjoin%
\definecolor{currentfill}{rgb}{0.000000,0.000000,0.000000}%
\pgfsetfillcolor{currentfill}%
\pgfsetlinewidth{0.803000pt}%
\definecolor{currentstroke}{rgb}{0.000000,0.000000,0.000000}%
\pgfsetstrokecolor{currentstroke}%
\pgfsetdash{}{0pt}%
\pgfsys@defobject{currentmarker}{\pgfqpoint{0.000000in}{-0.048611in}}{\pgfqpoint{0.000000in}{0.000000in}}{%
\pgfpathmoveto{\pgfqpoint{0.000000in}{0.000000in}}%
\pgfpathlineto{\pgfqpoint{0.000000in}{-0.048611in}}%
\pgfusepath{stroke,fill}%
}%
\begin{pgfscope}%
\pgfsys@transformshift{0.231818in}{1.350000in}%
\pgfsys@useobject{currentmarker}{}%
\end{pgfscope}%
\end{pgfscope}%
\begin{pgfscope}%
\definecolor{textcolor}{rgb}{0.000000,0.000000,0.000000}%
\pgfsetstrokecolor{textcolor}%
\pgfsetfillcolor{textcolor}%
\pgftext[x=0.231818in,y=1.252778in,,top]{\color{textcolor}\rmfamily\fontsize{8.000000}{9.600000}\selectfont \(\displaystyle -1\)}%
\end{pgfscope}%
\begin{pgfscope}%
\pgfsetbuttcap%
\pgfsetroundjoin%
\definecolor{currentfill}{rgb}{0.000000,0.000000,0.000000}%
\pgfsetfillcolor{currentfill}%
\pgfsetlinewidth{0.803000pt}%
\definecolor{currentstroke}{rgb}{0.000000,0.000000,0.000000}%
\pgfsetstrokecolor{currentstroke}%
\pgfsetdash{}{0pt}%
\pgfsys@defobject{currentmarker}{\pgfqpoint{0.000000in}{-0.048611in}}{\pgfqpoint{0.000000in}{0.000000in}}{%
\pgfpathmoveto{\pgfqpoint{0.000000in}{0.000000in}}%
\pgfpathlineto{\pgfqpoint{0.000000in}{-0.048611in}}%
\pgfusepath{stroke,fill}%
}%
\begin{pgfscope}%
\pgfsys@transformshift{2.468182in}{1.350000in}%
\pgfsys@useobject{currentmarker}{}%
\end{pgfscope}%
\end{pgfscope}%
\begin{pgfscope}%
\definecolor{textcolor}{rgb}{0.000000,0.000000,0.000000}%
\pgfsetstrokecolor{textcolor}%
\pgfsetfillcolor{textcolor}%
\pgftext[x=2.468182in,y=1.252778in,,top]{\color{textcolor}\rmfamily\fontsize{8.000000}{9.600000}\selectfont \(\displaystyle 1\)}%
\end{pgfscope}%
\begin{pgfscope}%
\pgfpathrectangle{\pgfqpoint{0.120000in}{0.120000in}}{\pgfqpoint{2.460000in}{2.460000in}}%
\pgfusepath{clip}%
\pgfsetbuttcap%
\pgfsetroundjoin%
\definecolor{currentfill}{rgb}{0.600000,0.109804,0.050980}%
\pgfsetfillcolor{currentfill}%
\pgfsetlinewidth{1.003750pt}%
\definecolor{currentstroke}{rgb}{0.600000,0.109804,0.050980}%
\pgfsetstrokecolor{currentstroke}%
\pgfsetdash{}{0pt}%
\pgfsys@defobject{currentmarker}{\pgfqpoint{-0.027778in}{-0.027778in}}{\pgfqpoint{0.027778in}{0.027778in}}{%
\pgfpathmoveto{\pgfqpoint{0.000000in}{-0.027778in}}%
\pgfpathcurveto{\pgfqpoint{0.007367in}{-0.027778in}}{\pgfqpoint{0.014433in}{-0.024851in}}{\pgfqpoint{0.019642in}{-0.019642in}}%
\pgfpathcurveto{\pgfqpoint{0.024851in}{-0.014433in}}{\pgfqpoint{0.027778in}{-0.007367in}}{\pgfqpoint{0.027778in}{0.000000in}}%
\pgfpathcurveto{\pgfqpoint{0.027778in}{0.007367in}}{\pgfqpoint{0.024851in}{0.014433in}}{\pgfqpoint{0.019642in}{0.019642in}}%
\pgfpathcurveto{\pgfqpoint{0.014433in}{0.024851in}}{\pgfqpoint{0.007367in}{0.027778in}}{\pgfqpoint{0.000000in}{0.027778in}}%
\pgfpathcurveto{\pgfqpoint{-0.007367in}{0.027778in}}{\pgfqpoint{-0.014433in}{0.024851in}}{\pgfqpoint{-0.019642in}{0.019642in}}%
\pgfpathcurveto{\pgfqpoint{-0.024851in}{0.014433in}}{\pgfqpoint{-0.027778in}{0.007367in}}{\pgfqpoint{-0.027778in}{0.000000in}}%
\pgfpathcurveto{\pgfqpoint{-0.027778in}{-0.007367in}}{\pgfqpoint{-0.024851in}{-0.014433in}}{\pgfqpoint{-0.019642in}{-0.019642in}}%
\pgfpathcurveto{\pgfqpoint{-0.014433in}{-0.024851in}}{\pgfqpoint{-0.007367in}{-0.027778in}}{\pgfqpoint{0.000000in}{-0.027778in}}%
\pgfpathclose%
\pgfusepath{stroke,fill}%
}%
\begin{pgfscope}%
\pgfsys@transformshift{0.231818in}{1.350000in}%
\pgfsys@useobject{currentmarker}{}%
\end{pgfscope}%
\begin{pgfscope}%
\pgfsys@transformshift{2.468182in}{1.350000in}%
\pgfsys@useobject{currentmarker}{}%
\end{pgfscope}%
\end{pgfscope}%
\end{pgfpicture}%
\makeatother%
\endgroup%
         \caption{The image of $\Delta_3$ under $f_*$.}
         \label{fig:f_star_delta_3}
     \end{subfigure}
     \hfill
     \begin{subfigure}{0.45\textwidth}
         \centering
         \input{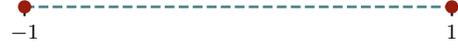}
         \caption{The image of $[\smn1,1]$ under $\phi$.}
         \label{fig:example_of_2d_phi}
     \end{subfigure}
     \hfill
     \begin{subfigure}{0.45\textwidth}
         \centering
         \input{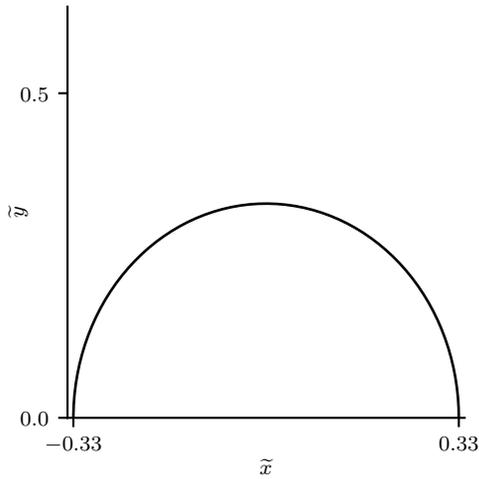}
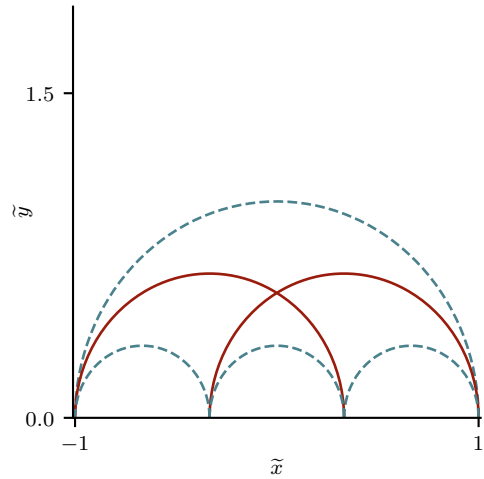
         \caption{The image of $\Delta_3$ under $\Phi$.}
         \label{fig:image_of_delta_3}
     \end{subfigure}
        \caption[A visualisation of sum-decomposition when the input sets are of size $3$ and the latent space is $2$-dimensional.]{A visualisation of sum-decomposition when the input sets are of size $3$ and the latent space is $2$-dimensional. For the purposes of this illustration, we pick $\phi$ to send the interval $[\smn1,1]$ to a semicircle. With this choice of $\phi$, we see that two disjoint regions of the surface of $\Delta_3$, highlighted with solid red lines, intersect under the application of $\Phi$. By contrast, $f_*$ separates these highlighted regions, sending them to opposite ends of the interval $[\smn1,1]$. Since these regions are already ``mixed together'' by $\Phi$, there is no function $\rho$ which can separate them out again to approximate $f_*$, so sum-decomposition of $f_*$ fails. Our proof demonstrates that an intersection under $\Phi$ which causes this problem for $f_*$ is forced for any continuous $\phi$ and in any number of dimensions, as long as the dimension of the latent space is less than the number of inputs.
        }
        \label{fig:topology_cartoon}
\end{figure}

The overall proof strategy is best understood by thinking topologically -- that is, by considering how the encoding map $\Phi$ and the target function $f$ deform the input space, squashing it onto lower-dimensional output spaces. To understand this perspective we will consider an example with $M=3$ and $N=2$, since this is high-dimensional enough to illustrate the key points and low-dimensional enough to visualise. For this example, we choose $\phi$ to send $[\smn1, 1]$ to a semicircle. This choice is made purely for easy visualisation of $\phi$ and $\Phi$.

Any continuous target function $f:[\smn1, 1]^3 \to \mathbb{R}$ may be thought of as deforming a cube into a line segment. We actually do not need to consider the whole cube -- since $f$ is permutation-invariant, it is fully determined by its behaviour on the set of points with descending-ordered coordinates. In general we will denote this set $\Delta_n$,

\begin{equation}
\label{eq:delta_n_defn}
\Delta_n := \left\lbrace \mathbf{x} \in \mathbb{R}^n : 1 \geq x_1 \geq x_2 \geq \ldots \geq x_n \geq \smn1 \right\rbrace.
\end{equation}

Considering $\Delta_3$ instead of the whole cube allows us to think more freely about $f$, without worrying about permutation invariance -- as long as $f$ is continuous on $\Delta_3$, it can be extended in a continuous and permutation-invariant way to $[\smn1, 1]^3$. In general, if we want to define a function $f$ by its behaviour on $\Delta_n$ and implicitly extend it to the rest of the space, we can simply write it as a function of $\texttt{sort}(\mathbf{x})$. This accounts for the appearance of $\texttt{sort}$ in \Cref{eq:defn_bad_target}, which can be thought of as saying ``$f_*(\mathbf{x}) = \mathbf{w}^\intercal \mathbf{x} + b$ on $\Delta_M$, and $f_*$ is permutation-invariant''.

In general $\Delta_n$ is an $n$-simplex, so $\Delta_3$ is a tetrahedron. From our topological perspective, sum-decomposition via $\mathbb{R}^2$ attempts to break down the tetrahedron-to-line deformation $f$ by first sending $\Delta_3$ onto the plane under $\Phi$, then sending the plane onto a line under $\rho$. \Cref{fig:example_of_2d_phi,fig:delta_3,fig:image_of_delta_3} visualise the action of $\Phi$, showing how the boundaries of $\Delta_3$ are mapped to the plane for our example choice of $\phi$. Crucially, note that some of these boundaries intersect under $\Phi$. 
Once two regions have been brought together by $\Phi$, they cannot be separated again -- $\rho$ cannot send the point of intersection to two places at once. Therefore if $f$ \emph{does} separate these regions, sending them to opposite ends of $[\smn1, 1]$ as illustrated in \Cref{fig:f_star_delta_3}, then the sum-decomposition $\rho \circ \Phi$ must fail with worst-case error at least $1$.

\Cref{fig:topology_cartoon} only shows that this intersection problem occurs for our example choice of $\phi$ and $M$, but we prove below that the general case also holds -- for every $M$, and for any continuous $\phi$ which maps into $\mathbb{R}^{M-1}$, there are two disjoint regions of the surface of $\Delta_M$ which intersect under $\Phi$.
We denote the two regions $X_M^{+1}$ and $X_M^{\smn1}$ -- the solid red regions in \Cref{fig:delta_3} are $X_3^{+1}$ and $X_3^{\smn1}$. Our $f_*$ from \Cref{eq:defn_bad_target} is defined so that it separates $X_M^{+1}$ and $X_M^{\smn1}$, and it is just one among many continuous functions which have this property. In some sense it is the simplest, since it is affine on $\Delta_M$.
Our proof that $\Phi(X_M^{+1})$ and $\Phi(X_M^{\smn1})$ always intersect relies on the structure of $\Phi$ as a sum over inputs.

\edef\myindent{\the\parindent}
\noindent\begin{minipage}{\textwidth}
\setlength{\parindent}{\myindent}

We define $X_M^{+1}$ and $X_M^{\smn1}$ as follows:

\begin{align}
X_M^{+1} & := \left\{ \mathbf{x} \in \Delta_M : 1 = x_1 \geq x_2 = x_3 \geq \ldots \text{\sfrac{$=$}{$\geq$}} ~x_M~ \text{\sfrac{$\geq$}{$=$}} \smn1 \right\} \label{eq:XN_plus_defn}
\\
X_M^{\smn1} & := \left\{ \mathbf{x} \in \Delta_M : 1 \geq x_1 = x_2 \geq x_3 = \ldots \text{\sfrac{$\geq$}{$=$}} ~x_M~ \text{\sfrac{$=$}{$\geq$}} \smn1 \right\} \label{eq:XN_minus_defn}
\end{align}

The rightmost equality $x_M=\smn1$ applies to $X_M^{+1}$ when $M$ is even, and $X_M^{\smn1}$ when $M$ is odd. We can view these equations as modifying the definition of $\Delta_M$ (Equation \ref{eq:delta_n_defn}) by changing either the odd- or the even-numbered $\geq$ signs into $=$ signs. It follows directly from the definition of $f_*$ (Equation \ref{eq:defn_bad_target}) that $f_*(X_M^{+1})=1$ and $f_*(X_M^{\smn1})=\smn1$.

\end{minipage}

The above discussion can be summarised by the following claim:

\begin{claim}
\label{clm:central_claim_approx_thm}
Let $N=M-1$. There exist two subsets $X_M^{+1}$ and $X_M^{\smn1}$ of $\Delta_M$ such that:

\begin{enumerate}[leftmargin=50pt]
    \item \textbf{$f_*$ separates $X_M^{+1}$ from $X_M^{\smn1}$:} $f_*(X_M^{+1})=1$ and $f_*(X_M^{\smn1})=\smn1$.
    \item \textbf{$\Phi$ cannot separate $X_M^{+1}$ from $X_M^{\smn1}$:} For any continuous $\phi: [\smn1, 1] \to \mathbb{R}^N$, $\Phi(X_M^{\smn1}) \cap \Phi(X_M^{+1})$ is non-empty.
\end{enumerate}

\Cref{thm:main_approximation_theorem} follows immediately, since the above points imply that $f_*$ has no within-1 continuous sum-decomposition via $\mathbb{R}^N$.

\end{claim}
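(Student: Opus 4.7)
The plan is to handle the two parts of the claim separately, with part 2 being the topological crux.

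For part 1 (that $f_*$ separates $X_M^{+1}$ from $X_M^{\smn1}$): this is a direct computation. Unpacking the definition, any $\mathbf{x} \in X_M^{+1}$ has $x_1 = 1$, pairwise equalities $x_{2j} = x_{2j+1}$ for as many pairs as fit in $M$ coordinates, and (when $M$ is even) $x_M = \smn1$. Substituting into $\mathbf{w}^\intercal \texttt{sort}(\mathbf{x}) = \sum_i (\smn1)^{i+1} x_i$, each paired contribution $\smn x_{2j} + x_{2j+1}$ cancels, leaving only the ``singleton'' coordinates. A short check handling both parities of $M$ then gives $f_*(\mathbf{x}) = 1$; the mirror argument gives $f_*(\mathbf{y}) = \smn1$ for $\mathbf{y} \in X_M^{\smn1}$.

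For part 2 (that $\Phi(X_M^{+1}) \cap \Phi(X_M^{\smn1}) \neq \emptyset$ for every continuous $\phi$): my plan is to parameterise each of $X_M^{+1}$ and $X_M^{\smn1}$ by its ``pair'' coordinates and package the intersection question as a zero-finding problem. Concretely, define
\[
H: X_M^{+1} \times X_M^{\smn1} \to \mathbb{R}^{M-1}, \qquad H(\mathbf{x}_+, \mathbf{x}_-) := \Phi(\mathbf{x}_+) - \Phi(\mathbf{x}_-).
\]
The claim becomes $0 \in H(X_M^{+1} \times X_M^{\smn1})$. Crucially, the domain is topologically a product of simplices whose combined dimension is exactly $M{-}1$, matching the codomain; this is the regime in which a degree-theoretic argument can be decisive.

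To execute the degree argument I would analyse the restriction of $H$ to the boundary of the product. Each boundary face corresponds to freezing some pair-coordinate at $\pm 1$; because $\Phi$ is additive, doing so peels off a known constant contribution and leaves behind a lower-order copy of the same construction. This sets up an induction on $M$: the inductive hypothesis controls the boundary behaviour of $H$, while a nonvanishing degree on the boundary forces $0$ to be attained in the interior. At the base case ($M=3$, so $N=2$) the situation reduces to a planar picture in which $\Phi(X_3^{+1})$ and $\Phi(X_3^{\smn1})$ are two curves in $\mathbb{R}^2$ related by the constant translation $v := \phi(1) - \phi(\smn1)$; one verifies the intersection by computing a winding number and exploiting the coordinate-swap symmetry $(s,t) \mapsto (t,s)$, under which $H - v$ is antipodally equivariant and so takes the value $0$ on the diagonal.

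The main obstacle will be making the degree/winding computation globally uniform in $\phi$: the relevant invariant must be nonzero for \emph{every} continuous $\phi$, not merely generically. This is the content of what the paper labels \Cref{lem:nu_n}, and the difficulty is that, in contrast to \Cref{thm:max_not_decomposable}, we cannot just invoke ``no continuous injection from $\mathbb{R}^M$ into $\mathbb{R}^{M-1}$''---an approximate sum-decomposition need not be injective, so the obstruction has to be extracted from the additive structure of $\Phi$ combined with the specific combinatorial shape of $X_M^{+1}$ and $X_M^{\smn1}$, propagated cleanly through the inductive step.
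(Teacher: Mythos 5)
Your plan for part 1 matches the paper, which simply observes that $f_*(X_M^{+1})=1$ and $f_*(X_M^{\smn1})=\smn1$ by definition of $f_*$ and the sets. Nothing to add there.

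For part 2 the topological viewpoint is the right one, and you correctly identify both the dimension count and the central obstacle: the degree (or Borsuk--Ulam obstruction) must be non-zero for \emph{every} continuous $\phi$, not just generically. But the concrete plan you sketch has a real gap, and it is precisely the gap the paper's Lemma~\ref{lem:nu_n} is designed to fill. The involution you propose for the base case, the coordinate swap $(s,t)\mapsto(t,s)$ on the product parameter space, has fixed points (the diagonal of the product, which meets the boundary at $(1,1)$ and $(\smn1,\smn1)$). A Borsuk--Ulam or odd-degree argument requires a \emph{free} involution on the boundary; otherwise the evenness $g(\iota(\cdot))=-g(\cdot)$ gives you no information beyond the trivial fact that $g$ vanishes on the fixed set. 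Indeed, your observation that ``$H-v$ takes the value $0$ on the diagonal'' is just the tautology $2\phi(s)-2\phi(s)=0$; it does not produce the needed zero of $H$, which requires $H-v=-v$ somewhere off the diagonal, and nothing in the coordinate-swap symmetry forces that.

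The paper explicitly considers and discards your parameterisation by the product $X_M^{+1}\times X_M^{\smn1}$ (see the discussion preceding Lemma~\ref{lem:gamma_zero_implication}), in favour of parameterising both sets jointly by a single simplex $\Delta_N$ on which the relevant symmetry is the left-shift $\alpha:\Delta_N^{(0)}\to\Delta_N^{(N)}$, not a coordinate swap. The real work is then the construction of the auxiliary map $\nu_N:I_N\to\Delta_N$ that converts the genuine, free antipodal involution on $\partial I_N$ into the left-shift behaviour under $\Gamma_N$, so that the standard Borsuk--Ulam theorem applies to $\Gamma_N\circ\nu_N$. You acknowledge this is ``the content of what the paper labels Lemma~\ref{lem:nu_n}'', but that acknowledgement means your proposal defers rather than proves the statement: the inductive-degree argument on the product domain is left at the level of hope, and the one explicit case you do offer ($M=3$) relies on a symmetry that cannot do the job. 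To repair the argument you would need to either (i) adopt the paper's $\Delta_N$ parameterisation and build $\nu_N$, or (ii) produce a genuinely fixed-point-free involution on the boundary of your product domain under which $H$ is odd, and verify such an involution exists for all $M$. Neither is currently in your write-up, so as it stands the proposal does not establish part 2.
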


As noted, point 1 is true by definition, so we only need to prove point 2 to complete the proof of \Cref{thm:main_approximation_theorem}. The proof of point 2 has two parts:

\begin{enumerate}[leftmargin=3\parindent]
    \item \textbf{Lemma \ref{lem:gamma_zero_implication}}: Construct a function $\Gamma_N$ which has a zero only if $\Phi(X_M^{\smn1}) \cap \Phi(X_M^{+1})$ is non-empty.
    \item \textbf{Lemma \ref{lem:nu_n}:} Show that $\Gamma_N$ has a zero.
\end{enumerate}

The main idea in constructing $\Gamma_N$ is to subtract $\Phi(X_M^{\smn1})$ from $\Phi(X_M^{+1})$. 
This clearly evaluates to zero only where the two sets intersect.
The subtraction idea is slightly obscured in the formal statement and proof of the following Lemma, but it is the key idea on which the Lemma is based.
The subscript on $\Gamma$ is $N$, rather than $M$, because there are a total of $N = M-1$ degrees of freedom in $X_M^{+1}$ and $X_M^{\smn1}$ -- this is most easily seen by considering $M=1$, which gives zero degrees of freedom.\footnote{\Cref{eq:XN_plus_defn,eq:XN_minus_defn} imply that $X_1^{+1} = \{1\}$ and $X_1^{\smn1} = \{\smn1\}$.}

Straightforwardly using this idea results in a function whose input space is $X_M^{+1} \times X_M^{\smn1}$, but we will instead define $\Gamma_N$ to have input space $\Delta_N$.
As shown below, any point $\mathbf{z} \in \Delta_N$ may be used to construct a pair of points $(\mathbf{x}^+, \mathbf{x}^-) \in X_M^{+1} \times X_M^{\smn1}$, and the subtraction idea can then be applied to this pair of points.
We define $\Gamma_N$ in this way because $\Delta_N$ turns out to have useful structure which we can exploit for the proof of Lemma \ref{lem:nu_n}.

Importantly, the following statement of Lemma \ref{lem:gamma_zero_implication} requires that $\phi(\smn1)=\mathbf{0}$. From now on we will assume this without loss of generality, since otherwise we can reason identically about $\widetilde{\phi}(x) := \phi(x) - \phi(\smn1)$, which does satisfy $\widetilde{\phi}(\smn1)=\mathbf{0}$.

\begin{lemma}
\label{lem:gamma_zero_implication}
Let $\phi:[\smn1,1]\to\mathbb{R}^N$, with $\phi(\smn1)=\mathbf{0}$. Define $\Gamma_N:\Delta_N \to \mathbb{R}^N$ by

\begin{equation}
\label{eq:def_of_gamma}
\Gamma_N(\mathbf{x}) := \sum_{i=1}^N (\shortminus1)^i \phi(x_i) + \frac{\phi(1)}{2}.
\end{equation}

Suppose $\Gamma_N$ has a zero in $\Delta_N$. Then there exist $\mathbf{x}^+ \in X_M^{+1}$, $\mathbf{x}^- \in X_M^{\smn1}$ with

\begin{equation*}
\Phi(\mathbf{x}^+) = \Phi(\mathbf{x}^-).
\end{equation*}

\end{lemma}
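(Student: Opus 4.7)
The plan is to explicitly parameterise pairs $(\mathbf{x}^+, \mathbf{x}^-) \in X_M^{+1} \times X_M^{\smn1}$ by points $\mathbf{z} \in \Delta_N$ in such a way that the identity $\Phi(\mathbf{x}^+) - \Phi(\mathbf{x}^-) = 2\,\Gamma_N(\mathbf{z})$ holds. Given this identity, the conclusion is immediate: any zero $\mathbf{z}$ of $\Gamma_N$ produces a pair with $\Phi(\mathbf{x}^+) = \Phi(\mathbf{x}^-)$, as required.

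The construction is essentially forced by the shape of $\Gamma_N$ in \Cref{eq:def_of_gamma}. Each $z_i$ appears there with coefficient $(\smn1)^i$, so I would arrange for the odd-indexed coordinates of $\mathbf{z}$ to contribute to $\Phi(\mathbf{x}^-)$ and the even-indexed ones to contribute to $\Phi(\mathbf{x}^+)$. The factor of $2$ needed to match the identity is produced by inserting each such $z_i$ with multiplicity two; the constant term $\phi(1)/2$ comes from a single $1$ prepended to $\mathbf{x}^+$; and one remaining slot, present only because $M = N+1$ has the opposite parity from $N$, is padded with $\smn1$, whose $\phi$-image vanishes and therefore contributes nothing. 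Concretely, for $M$ odd,
\begin{equation*}
\mathbf{x}^+ = (1, z_2, z_2, z_4, z_4, \ldots, z_N, z_N), \qquad \mathbf{x}^- = (z_1, z_1, z_3, z_3, \ldots, z_{N-1}, z_{N-1}, \smn1),
\end{equation*}
and for $M$ even the padding $\smn1$ sits on $\mathbf{x}^+$ instead of $\mathbf{x}^-$, with the two lists of duplicated $z_i$ adjusted symmetrically so that both tuples retain length $M$.

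I would then verify membership in $X_M^{+1}$ and $X_M^{\smn1}$. Because $\mathbf{z}$ belongs to $\Delta_N$, the chain $1 \geq z_1 \geq \cdots \geq z_N \geq \smn1$ together with the duplications ensures $\mathbf{x}^{\pm} \in \Delta_M$; the prescribed equalities then land exactly at the positions required by \Cref{eq:XN_plus_defn,eq:XN_minus_defn}, with the parity-dependent endpoint $\smn1$ falling on whichever of $\mathbf{x}^+, \mathbf{x}^-$ the definitions demand. A termwise application of $\phi$, using $\phi(\smn1) = \mathbf{0}$, yields $\Phi(\mathbf{x}^+) = \phi(1) + 2\!\sum_{i \text{ even}}\phi(z_i)$ and $\Phi(\mathbf{x}^-) = 2\!\sum_{i \text{ odd}}\phi(z_i)$; subtracting collapses the sums into $\phi(1) + 2\sum_{i=1}^N (\smn1)^i\phi(z_i) = 2\,\Gamma_N(\mathbf{z})$, completing the proof.

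There is no genuine obstacle in this lemma — the only real content is spotting the right parameterisation, and once the formula for $\Gamma_N$ is on the page the construction essentially writes itself, modulo a small parity-based case split on $M$. The substantive difficulty of the overall argument lies in the subsequent \Cref{lem:nu_n}, which must actually exhibit a zero of $\Gamma_N$; the present lemma merely reduces the geometric intersection problem $\Phi(X_M^{+1}) \cap \Phi(X_M^{\smn1}) \neq \emptyset$ to an analytic existence question about a single map $\Gamma_N$ on the simplex $\Delta_N$.
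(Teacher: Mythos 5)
Your construction is exactly the one in the paper: parameterise $(\mathbf{x}^+, \mathbf{x}^-)$ by $\mathbf{z}\in\Delta_N$ via $x^+_i = z_i$ for $i$ even and $x^-_i = z_i$ for $i$ odd, with the remaining coordinates forced by the equalities defining $X_M^{+1}$ and $X_M^{\smn1}$, then compare $\Phi(\mathbf{x}^\pm)$ using $\phi(\smn1)=\mathbf{0}$. Your explicit tuple layout and the summarising identity $\Phi(\mathbf{x}^+)-\Phi(\mathbf{x}^-)=2\,\Gamma_N(\mathbf{z})$ are a helpful restatement but not a different argument.
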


\begin{proof}
Let $\mathbf{z} \in \Delta_N$ with $\Gamma_N(\mathbf{z})=\mathbf{0}$. Define $\mathbf{x}^+ \in X_M^{+1}$ and $\mathbf{x}^- \in X_M^{\smn1}$:

\begin{align}
\begin{split}
\label{eq:x_plus_minus_from_z}
x^+_i &= z_i \quad \text{for even } i \\
x^-_i &= z_i \quad \text{for odd } i
\end{split}
\end{align}

The remaining coordinates of $\mathbf{x}^+$ and $\mathbf{x}^-$ are fixed by the equalities in \Cref{eq:XN_minus_defn,eq:XN_plus_defn}. Note in particular that \Cref{eq:XN_plus_defn} implies that $x^+_1=1$. Taking these constraints together with the condition that $\phi(\smn1)=\mathbf{0}$, we obtain

\begin{align}
\begin{split}
\label{eq:Phi_plus_minus}
\Phi(\mathbf{x}^+) &= \phi(1) + 2\sum_{i\text{ even}} \phi(x_i^+) \\
\Phi(\mathbf{x}^-) &= 2\sum_{i\text{ odd}} \phi(x_i^-)
\end{split}
\end{align}

Plugging $\mathbf{z}$ into \Cref{eq:def_of_gamma} and pulling the odd terms of the sum to the left hand side (and recalling that $\Gamma_N(\mathbf{z}) = \mathbf{0}$), we obtain

\begin{equation}
\sum_{i\text{ odd}} \phi(z_i) = \sum_{i\text{ even}} \phi(z_i) + \frac{\phi(1)}{2}.
\end{equation}

Now we apply \Cref{eq:x_plus_minus_from_z} to obtain

\begin{equation}
\sum_{i\text{ odd}} \phi(x_i^-) = \sum_{i\text{ even}} \phi(x_i^+) + \frac{\phi(1)}{2}.
\end{equation}

This equates the right hand sides of the two lines of \Cref{eq:Phi_plus_minus}, so the left hand sides must also be equal, i.e. $\Phi(\mathbf{x}^+) = \Phi(\mathbf{x}^-)$.

\end{proof}

All that remains is to show that $\Gamma_N$ has a zero. To understand $\Gamma_N$, we exploit the alternating sum form of \Cref{eq:def_of_gamma}, and in particular we will see how this constrains the behaviour of $\Gamma_N$ on the surface of $\Delta_N$. Each face of the surface is itself a simplex\footnote{At this stage of the argument, it is helpful to have a way of visualising high-dimensional simplices and their faces. We discuss such a visualisation in \Cref{sec:visualising_simplices}.} of dimension $N-1$, defined by changing one of the inequalities in \Cref{eq:delta_n_defn} into an equality, i.e. by applying the constraint $x_i = x_{i+1}$. We denote the corresponding face $\smash{\Delta_N^{(i)}}$. For $1 \leq i < N$, this constraint leads to the cancellation of two consecutive terms of the sum in \Cref{eq:def_of_gamma}. Each of these faces therefore has the same image under $\Gamma_N$ -- for example, the following points belong to different faces of $\Delta_4$, but because equal pairs cancel out they all go to the same point under $\Gamma_4$.

\begin{align*}
\big(\tfrac{2}{3}, \tfrac{2}{3}, \tfrac{1}{2}, 0\big) \in \Delta_4^{(1)} &&
\big(\tfrac{1}{2}, \tfrac{1}{3}, \tfrac{1}{3}, 0\big)  \in \Delta_4^{(2)} && 
\big(\tfrac{1}{2}, 0, \smn\tfrac{1}{2}, \smn\tfrac{1}{2}\big) \in \Delta_4^{(3)}
\end{align*}


There are two more faces to consider -- $\Delta_N^{(0)}$ and $\Delta_N^{(N)}$, corresponding to the constraints $x_1=1$ and $x_N=\smn1$ respectively.
We can observe an interesting behaviour of $\Gamma_N$ here, which is in fact a symmetry: if we apply a ``coordinate left shift'' to a point in $\Delta_N^{(0)}$, the output of $\Gamma_N$ is multiplied by $\smn1$. By ``coordinate left shift'', we mean the bijection $\alpha: \Delta_N^{(0)} \to \Delta_N^{(N)}$ defined as follows:

\begin{align*}
\alpha(\mathbf{x})_i = 
\begin{cases}
x_{i+1} & i < N \\
\smn1 & i = N
\end{cases}
\end{align*}

Applying $\alpha$ moves each coordinate one place to the left, deleting the $1$ in the leftmost place and padding with a $\smn1$ in the rightmost place. For example, consider the point $(1, \tfrac{2}{3}, 0, \smn\tfrac{1}{2}) \in \Delta_4^{(0)}$:

\begin{equation*}
\alpha\Big( (1, \tfrac{2}{3}, 0, \smn\tfrac{1}{2}) \Big) = (\tfrac{2}{3}, 0, \smn\tfrac{1}{2}, \smn1) \in \Delta_4^{(4)}
\end{equation*}

Now apply $\Gamma_4$ to $(1, \tfrac{2}{3}, 0, \smn\tfrac{1}{2})$ and $(\tfrac{2}{3}, 0, \smn\tfrac{1}{2}, \smn1)$:

\begin{align*}
\Gamma_4\Big( (1, \tfrac{2}{3}, 0, \smn\tfrac{1}{2}) \Big) &= -\phi(1) + \phi(\tfrac{2}{3}) - \phi(0) + \phi(\smn\tfrac{1}{2}) + \frac{\phi(1)}{2}  \\
\Gamma_4\Big( (\tfrac{2}{3}, 0, \smn\tfrac{1}{2}, \smn1) \Big) &= \quad\quad\quad -\phi(\tfrac{2}{3})   + \phi(0) - \phi(\smn\tfrac{1}{2}) + \phi(\smn1) + \frac{\phi(1)}{2}
\end{align*}
 
Simplifying the $\phi(1)$ terms, and recalling that $\phi(\smn1)=0$, we obtain

\begin{equation*}
 \Gamma_4\Big( (\tfrac{2}{3}, 0, \smn\tfrac{1}{2}, \smn1) \Big) = -\Gamma_4\Big( (1, \tfrac{2}{3}, 0, \smn\tfrac{1}{2}) \Big)
\end{equation*}

It is easily seen that this generalises to any $\mathbf{x} \in \Delta_N^{(0)}$:

\begin{equation}
\label{eq:gamma_alpha_negation}
\Gamma_N\big( \alpha(\mathbf{x}) \big) = -\Gamma_N\big( \mathbf{x} \big)
\end{equation}

We can exploit this symmetry to show that $\Gamma_N$ has a zero. To do this, we will need the Borsuk-Ulam theorem (\citealp{Borsuk1933} -- see \citealp{lloyd1978degree} for a detailed English-language treatment). One statement of the theorem is as follows.

\begin{theorem}[Borsuk-Ulam]
\label{thm:borsuk_ulam}
Let $I_n := [\smn1, 1]^n$ be the unit $n$-cube. Let $\partial I_n$ be the surface of $I_n$, $\partial I_n := \{ \mathbf{x} \in I_n : x_i = \pm1 \text{ for some }i \}$.
Suppose $f:I_n\to\mathbb{R}^n$ is continuous, and for every $\mathbf{x} \in \partial I_n$ we have

\begin{equation}
\label{eq:borsuk_ulam_condition}
f(\smn\mathbf{x}) = - f(\mathbf{x})
\end{equation}

Then there is some $\mathbf{z} \in I_n$ such that $f(\mathbf{z})=0$.
\end{theorem}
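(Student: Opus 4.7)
The plan is to proceed by contradiction and reduce the cube version of the statement to the classical sphere version of Borsuk-Ulam, whose punchline is a degree-theoretic obstruction.

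Suppose for contradiction that $f(\mathbf{z}) \neq \mathbf{0}$ for every $\mathbf{z} \in I_n$. Then normalisation yields a continuous map $g : I_n \to S^{n-1}$ defined by $g(\mathbf{x}) := f(\mathbf{x})/\|f(\mathbf{x})\|$, and the antipodal hypothesis on $\partial I_n$ transfers directly to give $g(\smn\mathbf{x}) = -g(\mathbf{x})$ for all $\mathbf{x} \in \partial I_n$. Next I would identify $I_n$ homeomorphically with the closed ball $D^n$ in a way that sends $\partial I_n$ to $S^{n-1}$ and commutes with the antipodal involution (for instance, the radial rescaling $\mathbf{x} \mapsto \mathbf{x}\cdot \|\mathbf{x}\|_2 / \|\mathbf{x}\|_\infty$ for $\mathbf{x} \neq \mathbf{0}$ and $\mathbf{0} \mapsto \mathbf{0}$). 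Under this identification, the restriction $g|_{\partial I_n}$ becomes a continuous antipodal self-map $h : S^{n-1} \to S^{n-1}$, and the whole of $g$ furnishes a continuous extension $\widetilde{h} : D^n \to S^{n-1}$ of $h$ to the filled ball.

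To derive a contradiction, I would invoke the key topological fact that no continuous antipodal self-map of $S^{n-1}$ extends continuously to a map $D^n \to S^{n-1}$. The standard argument goes via degree theory: any continuous map $S^{n-1} \to S^{n-1}$ that extends across $D^n$ is null-homotopic (since $D^n$ is contractible, the extension provides the homotopy to a constant), hence has degree zero, while a continuous antipodal self-map of $S^{n-1}$ has odd degree. Comparing odd with zero closes the contradiction. Translating back, our initial assumption must fail, so $f$ has a zero in $I_n$.

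The main obstacle is the auxiliary fact that antipodal self-maps of $S^{n-1}$ have odd degree. I would prove this by induction on $n$ using the suspension isomorphism, with base case $n=1$ reducing to the intermediate value theorem (an antipodal continuous $f:[\smn 1,1]\to\mathbb{R}$ satisfies $f(\smn 1)=-f(1)$, so either vanishes at an endpoint or changes sign on the interval). For an audience that prefers to sidestep algebraic topology entirely, a fully elementary alternative is a combinatorial proof via Tucker's lemma on antipodally symmetric triangulations of $I_n$, which yields Borsuk-Ulam by a parity count without invoking homology. Either route ultimately hinges on a parity-type invariant that separates antipodal maps from null-homotopic ones.
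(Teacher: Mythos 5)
The paper does not prove this theorem: it states it and refers the reader to \citet{Borsuk1933} and \citet{lloyd1978degree}, treating Borsuk--Ulam as an imported tool. Your proposal is of a different character in that it actually supplies a proof, and the argument you sketch is the standard degree-theoretic one, correct in outline: assume $f$ has no zero, normalise to a map $g: I_n \to S^{n-1}$, identify $I_n$ antipodally with $D^n$, observe that $g|_{\partial I_n}$ becomes an antipodal self-map of $S^{n-1}$ admitting a continuous extension over $D^n$, and compare the degree zero forced by the extension (null-homotopy through the contractible $D^n$) with the odd degree forced by Borsuk's odd-map theorem. One computational slip: the radial homeomorphism should be $\mathbf{x} \mapsto \mathbf{x}\cdot\|\mathbf{x}\|_\infty/\|\mathbf{x}\|_2$ rather than the reciprocal, since the former sends $\partial I_n$ (where $\|\mathbf{x}\|_\infty = 1$) exactly onto the Euclidean unit sphere, whereas the formula as written sends those points to Euclidean distance $\|\mathbf{x}\|_2^2 \geq 1$. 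The genuine content, as you rightly note, is the odd-degree theorem for antipodal self-maps of $S^{n-1}$, which you only sketch; the two routes you name (inductive degree argument via suspension, or Tucker's lemma for a combinatorial proof) are the standard ones, so your outline correctly locates where the work lives, even if a fully self-contained proof would require carrying one of them out. For the paper's purposes the cited-reference treatment is the right editorial choice, but your proof sketch is sound.
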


There is a clear similarity between \Cref{eq:borsuk_ulam_condition,eq:gamma_alpha_negation}, but we cannot directly apply Borsuk-Ulam to $\Gamma_N$.
We need to bridge the two equations, relating the left-shift $\alpha$ on the left hand side of \Cref{eq:gamma_alpha_negation} to the negation on the left hand side of \Cref{eq:borsuk_ulam_condition}. We can do this by finding a continuous function $\nu_N: I_N \to \Delta_N$ which turns negation on $\partial I_N$ into a left-shift on $\partial \Delta_N$:\footnote{Here, we are glossing over the fact that $\alpha$ is only defined on part of $\partial\Delta_N$. We will come back to this when proving Lemma \ref{lem:nu_n}.}

\begin{equation}
\label{eq:bridge}
\nu_N(\smn\mathbf{x}) = \alpha(\nu_N(\mathbf{x}))
\end{equation}

That is, we want to be able to apply Borsuk-Ulam to $\Gamma_N \circ \nu_N$:

\begin{align*}
f(\smn\mathbf{x}) := \Gamma_N\Big( \nu_N(\smn\mathbf{x}) \Big) &= \Gamma_N\Big( \alpha\big( \nu_N(\mathbf{x}) \big) \Big) \\
&= -\Gamma_N\Big( \nu_N(\mathbf{x}) \Big) = -f(\mathbf{x})
\end{align*}

Showing that a suitable $\nu_N$ exists will complete the proof.

\begin{restatable}{lemma}{lemmanu}
\label{lem:nu_n}
Let $n \in \mathbb{N}$. Let $\Gamma_n$ be defined as in \Cref{eq:def_of_gamma}. Then there exists a continuous function $\nu_n : I_n \to \Delta_n$ such that, for $\mathbf{x} \in \partial I_n$

\begin{equation}
\label{eq:gamma_borsuk_ulam_condition}
\Gamma_n\big( \nu_n( \smn\mathbf{x} ) \big) = -\Gamma_n\big( \nu_n( \mathbf{x} ) \big)
\end{equation}

\end{restatable}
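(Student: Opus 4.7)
The plan is to construct $\nu_n$ so that $\Gamma_n \circ \nu_n$ satisfies the hypotheses of the Borsuk--Ulam theorem, i.e.\ is continuous on $I_n$ and antipodally odd on $\partial I_n$. The algebraic tool in hand is the identity $\Gamma_n(\alpha(\mathbf{y})) = -\Gamma_n(\mathbf{y})$ on $\Delta_n^{(0)}$, together with the fact that on the faces $\Delta_n^{(i)}$ for $1 \leq i < n$ a consecutive pair of terms in the alternating sum (\ref{eq:def_of_gamma}) cancels, so $\Gamma_n$ only sees the remaining coordinates there. The task is therefore to arrange the image of $\nu_n$ on $\partial I_n$ so that, modulo these cancellations, negation on the cube matches $\alpha$ on the simplex.

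My strategy is to partition $\partial I_n$ into the two closed subsets $A^+ := \{\mathbf{x} \in \partial I_n : \max_i x_i = 1\}$ and $A^- := \{\mathbf{x} \in \partial I_n : \min_i x_i = -1\}$, which cover $\partial I_n$ (every boundary point has some extreme coordinate) and are exchanged by negation. On $A^+$ I would define $\nu_n$ so that its image lies in $\Delta_n^{(0)}$, choosing the first coordinate to be $1$ and the remaining $n-1$ coordinates as a continuous function of $\mathbf{x}$ (e.g.\ built from sorting-descending, suitably adjusted). On $A^-$ the value is forced by $\nu_n(\mathbf{x}) := \alpha(\nu_n(-\mathbf{x}))$, which automatically lies in $\Delta_n^{(n)}$. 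With this setup, for $\mathbf{x} \in A^+$ we get $\Gamma_n(\nu_n(-\mathbf{x})) = \Gamma_n(\alpha(\nu_n(\mathbf{x}))) = -\Gamma_n(\nu_n(\mathbf{x}))$ directly from (\ref{eq:gamma_alpha_negation}), and the symmetric argument covers $A^-$. Once $\nu_n$ is continuous on $\partial I_n$, convexity and hence contractibility of $\Delta_n$ (as an absolute retract) lets me extend continuously to all of $I_n$; what the extension does in the interior is irrelevant to (\ref{eq:gamma_borsuk_ulam_condition}).

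The main obstacle is consistency on the overlap $A^+ \cap A^-$ --- points whose coordinates simultaneously attain $+1$ and $-1$. A value here must lie in $\Delta_n^{(0)} \cap \Delta_n^{(n)}$ (so both $y_1 = 1$ and $y_n = -1$), and the $A^+$-definition must agree with $\alpha$ applied to the $A^+$-definition at $-\mathbf{x}$. A naïve sort-descending choice fails this compatibility check already for $n = 3$, so the construction must be more delicate: one has to exploit the freedom afforded by the cancellations on the faces $\Delta_n^{(i)}$ with $1 \leq i < n$ to slide the image through intermediate face configurations while preserving both continuity across all $2n$ faces of the cube and the $\alpha$-intertwining. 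Getting this patching right is where essentially all the work of the proof lies; once it is in place, the final verification of (\ref{eq:gamma_borsuk_ulam_condition}) is a one-line application of (\ref{eq:gamma_alpha_negation}) together with the assumption $\phi(-1) = \mathbf{0}$.
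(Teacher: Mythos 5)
Your high-level plan—restrict $\nu_n$ on $A^+$ to land in $\Delta_n^{(0)}$, define it on $A^-$ by $\nu_n(\mathbf{x}) := \alpha\big(\nu_n(\smn\mathbf{x})\big)$ so that \Cref{eq:gamma_alpha_negation} gives \Cref{eq:gamma_borsuk_ulam_condition} for free, and extend into the interior using the fact that the convex set $\Delta_n$ is an absolute retract—is a reasonable skeleton, and that last observation is a genuine simplification over the paper's explicit interior construction. However, the overlap problem you flag is not merely a matter of being ``more delicate'': under your stated constraints it is outright unsatisfiable. Take $\mathbf{x} \in A^+ \cap A^-$ (nonempty once $n \geq 2$; e.g.\ $\mathbf{x} = (1, \smn1)$ for $n=2$). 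Write $\nu_n(\mathbf{x}) = (1, y_2, \ldots, y_n)$ and $\nu_n(\smn\mathbf{x}) = (1, z_2, \ldots, z_n)$, both required to lie in $\Delta_n^{(0)}$. The compatibility condition $\nu_n(\mathbf{x}) = \alpha\big(\nu_n(\smn\mathbf{x})\big)$ gives $z_2 = 1$, $y_j = z_{j+1}$ for $2 \leq j < n$, and $y_n = \smn1$; the symmetric condition at $\smn\mathbf{x}$ gives $y_2 = 1$, $z_j = y_{j+1}$, and $z_n = \smn1$. Chaining the two recurrences from $y_2 = z_2 = 1$ forces $y_j = z_j = 1$ for every $j$, contradicting $y_n = \smn1$. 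So no map of $A^+$ into $\Delta_n^{(0)}$ can satisfy your overlap condition, regardless of how cleverly the remaining degrees of freedom are spent; the failure you saw for a sort-based choice at $n=3$ is structural, not a defect of that particular choice.

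The escape hatch you gesture at—letting the image slide off $\Delta_n^{(0)}$ onto faces $\Delta_n^{(i)}$ with $1 \leq i < n$ and relying on the cancellations there—is exactly where the actual content of the lemma lives, and your plan leaves it open. It also breaks the claimed one-line finish: \Cref{eq:gamma_alpha_negation} is established only for points of $\Delta_n^{(0)}$, so once the image leaves that face you need a substitute argument for the $\Gamma$-identity. The paper sidesteps the consistency problem entirely by choosing a \emph{non-overlapping} decomposition of the boundary: $\nu_n$ is pinned down on the two opposite single faces $I_n^\pm = \{x_n = \pm1\}$ via a recursive call to $\nu_{n-1}$ (landing in $\Delta_n^{(0)}$ and $\Delta_n^{(n)}$ respectively), and on the lateral boundary it is filled in by a median-based piecewise-linear interpolation along ``surface verticals'', chosen precisely so that $\Gamma_n \circ \nu_n$ is constant on each such segment. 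The Borsuk--Ulam condition on the lateral boundary is then inherited from its validity on $I_n^\pm$ rather than enforced face by face, and the overlap issue you are fighting never arises.
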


We present a partial proof here, with full details available in \Cref{sec:approximation_appendix}.
\vspace{\baselineskip}

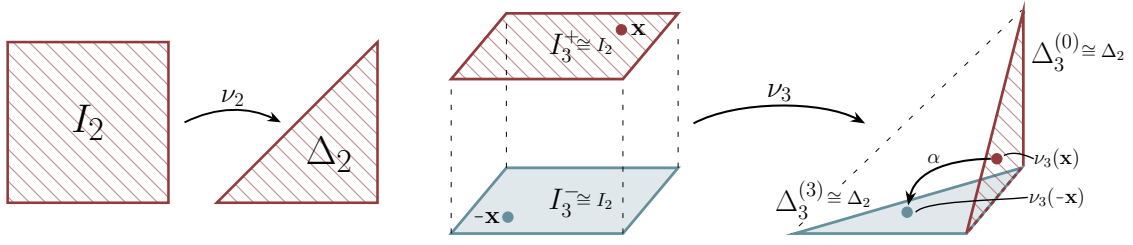
\begin{figure}
    \centering
    \begin{subfigure}{0.33\textwidth}
        \resizebox{\textwidth}{!}{\begin{tikzpicture}[x=50pt,y=50pt]

\begin{scope}[pattern color=edred, color=edred]
\draw[fill,ultra thick,pattern=south east lines,fill opacity=0.6]
(-1,-1) -- (-1, 1) -- (1, 1) -- (1, -1) -- cycle;
\node at (0,0) [font=\Huge,color=black] {$I_2$};

\draw[color=black, very thick, -Stealth] (1.2, 0) .. controls (1.6, 0.2) and (2, 0.2) .. (2.4, 0);
\node at (1.8, 0.3) [font=\LARGE,color=black] {$\nu_2$};

\tikzset{shift={(2.6,0,0)}}

\draw[fill,ultra thick,pattern=south east lines,fill opacity=0.6]
(-1,-1) -- (1, 1) -- (1, -1) -- cycle;
\node at (0.4,-0.4) [font=\Huge,color=black] {$\Delta_2$};

\end{scope}

\end{tikzpicture}}
    \end{subfigure}
    \hspace{0.04\textwidth}
    \begin{subfigure}{0.6\textwidth}
        \resizebox{\textwidth}{!}{\begin{tikzpicture}[z={(90:45pt)},x={(0:50pt)},y={(50:25pt)},
                    point/.style={circle,draw,fill,inner sep=0.75mm}]

\begin{scope}[canvas is xy plane at z=1,pattern color=edred,color=edred]
\draw[fill,ultra thick,pattern=south east lines,fill opacity=0.6]
(-1,-1) -- (-1, 1) -- (1, 1) -- (1, -1) -- cycle;
\node at (0,0) [font=\LARGE,color=black] {$I_3^+$};
\node at (0.35,0) [font=\normalsize,color=black] {$\cong I_2$};
\node[point] (xnode) at (0.5, 0.5) {};
\node (xlabel) at (0.7, 0.5) [font=\Large,color=black] {$\mathbf{x}$};
\end{scope}

\begin{scope}[canvas is xy plane at z=-1, fill=edblue, color=edblue]
\draw[fill,ultra thick,fill opacity=.2]
(-1,-1) -- (-1, 1) -- (1, 1) -- (1, -1) -- cycle;
\node at (0,0) [font=\LARGE,color=black] {$I_3^-$};
\node at (0.35,0) [font=\normalsize,color=black] {$\cong I_2$};
\node[point] (minxnode) at (-0.5, -0.5) {};
\node (xlabel) at (-0.75, -0.5) [font=\Large,color=black] {$\smn\mathbf{x}$};
\end{scope}

\begin{scope}[canvas is xz plane at y=-1, color=black]
\draw[loosely dashed] (-1,-1) -- (-1, 1) ;
\draw[loosely dashed] (1,-1) -- (1, 1) ;
\end{scope}

\begin{scope}[canvas is xz plane at y=1, color=black]
\draw[loosely dashed] (-1,-1) -- (-1, 1) ;
\draw[loosely dashed] (1,-1) -- (1, 1) ;
\end{scope}

\draw[color=black, very thick, -Stealth] (1.5, 0, 0) .. controls (2, 0, 0.3) and (3, 0, 0.3) .. (3.5, 0, 0);
\node at (2.5, 0, 0.4) [font=\LARGE,color=black] {$\nu_3$};

\tikzset{shift={(4,0,0)}}

\begin{scope}[canvas is xy plane at z=-1, fill=edblue, color=edblue]
\draw[fill,ultra thick,fill opacity=.2]
(-1,-1) -- (1, 1) -- (1, -1) -- cycle;
\end{scope}

\begin{scope}[canvas is yz plane at x=1,pattern color=edred,color=edred]
\draw[fill,ultra thick,pattern=south east lines,fill opacity=0.6]
(-1,-1) -- (1.06, 1) -- (1.06, -1);
\draw[ultra thick, densely dashed]
(1,-1) -- (-1, -1);
\end{scope}

\node at (1.4, 1, 0.5) [font=\LARGE,color=black] {$\Delta_3^{(0)}$};
\node at (1.95, 1, 0.5) [font=\normalsize,color=black] {$\cong \Delta_2$};
\node at (-0.97, -0.9, -0.6) [font=\LARGE,color=black] {$\Delta_3^{(3)}$};
\node at (-0.4, -0.9, -0.6) [font=\normalsize,color=black] {$\cong \Delta_2$};

\node[point,color=edred] at (1, 0.1, -0.5) {};
\node at (1.7, 0.1, -0.5) [font=\large,color=black] {$\nu_3(\mathbf{x})$};
\draw[color=black, thick] (1.05, 0.2, -0.5) .. controls (1.1, 0.3, -0.5) and (1.2, 0.3, -0.5) .. (1.43, 0.1, -0.5);

\node[point,color=edblue] at (0.1, -0.35, -1) {};
\node at (1.7, 0.1, -1) [font=\large,color=black] {$\nu_3(\smn\mathbf{x})$};
\draw[color=black, thick] (0.2, -0.35, -1) .. controls (0.2, 0, -1) and (1, 0, -1) .. (1.36, 0.1, -1);

\draw[color=black, very thick, -Stealth] (0.93, 0.1, -0.5) .. controls (0.7, 0.1, -0.5) and (0.26, -0.5, -0.3) .. (0.16, -0.5, -0.85);
\node at (0.3, 0, -0.46) [font=\large,color=black] {$\alpha$};

\draw[loosely dashed] (-0.94, -0.9, -0.9) -- (-1.04,-1, -1);
\draw[loosely dashed] (-0.59,-0.55, -0.55) -- (1, 1, 1.04);

\end{tikzpicture}}
    \end{subfigure}
    
    \caption[Beginning the construction of $\nu_n$ from $\nu_{n-1}$.]{For the first part of the construction of $\nu_n$, we use $\nu_{n-1}$ to define $\nu_n$ on the top and bottom faces of $I_n$. The $n=3$ case is illustrated here. $\nu_n$ uses $\nu_{n-1}$ to send the top face of $I_n$ to $\Delta_n^{(0)}$, and the bottom face to $\Delta_n^{(n)}$. We construct $\nu_n$ so that opposite points $\mathbf{x}$ and $\smn\mathbf{x}$ on the top and bottom faces go to points in $\Delta_n$ which are related by the left-shift function $\alpha$ -- that is, $\nu_n(\smn\mathbf{x}) = \alpha(\nu_n(\mathbf{x}))$.}
    \label{fig:nu_cartoon}
\end{figure}
\begin{figure}
    \centering
    \begin{subfigure}{0.3\textwidth}
        \centering
        \resizebox{\textwidth}{!}{\begin{tikzpicture}[z={(90:45pt)},x={(0:50pt)},y={(50:25pt)},
                    point/.style={circle,draw,fill,inner sep=0.75mm}]

\begin{scope}[canvas is xy plane at z=1,color=black]
\draw[ultra thick]
(-1,-1) -- (-1, 1) -- (1, 1) -- (1, -1) -- cycle;
\node[point,color=edred] at (0.7, -1) {};
\node[point,color=edblue] at (-0.7, 1) {};
\end{scope}

\begin{scope}[canvas is xy plane at z=-1,color=black]
\draw[ultra thick]
(-1,-1) -- (-1, 1) -- (1, 1) -- (1, -1) -- cycle;
\node[point,color=edred] at (0.7, -1) {};
\node[point,color=edblue] at (-0.7, 1) {};
\end{scope}

\begin{scope}[canvas is xz plane at y=-1,color=edred]
\draw[thick, dashed]
(0.7, 1) -- (0.7, -1);
\node[point] at (0.7, -0.45) {};
\end{scope}

\begin{scope}[canvas is xz plane at y=1,color=edblue]
\draw[thick, dashed]
(-0.7, 1) -- (-0.7, -1);
\node[point] at (-0.7, 0.45) {};
\end{scope}

\draw[loosely dashed] (1, 1, -1) -- (1, 1, 1);
\draw[loosely dashed] (1, -1, -0.65) -- (1, -1, 1);
\draw[loosely dashed] (1, -1, -0.85) -- (1, -1, -1);
\draw[loosely dashed] (-1, -1, -1) -- (-1, -1, 1);
\draw[loosely dashed] (-1, 1, -1) -- (-1, 1, 1);

\node at (0.92, -1, 1.2) [font=\Large,color=edred] {$\mathbf{x}^+$};
\node at (0.92, -1, -0.8) [font=\Large,color=edred] {$\mathbf{x}^-$};
\node at (0.85, -1, -0.3) [font=\Large,color=edred] {$\mathbf{x}$};

\node at (-0.45, 1, 1.2) [font=\Large,color=edblue] {$\mathbf{y}^+$};
\node at (-0.45, 1, -0.8) [font=\Large,color=edblue] {$\mathbf{y}^-$};
\node at (-0.35, 1, 0.6) [font=\Large,color=edblue] {$\mathbf{y}\scriptstyle{\textcolor{black}{=}\textcolor{edred}{\smn\mathbf{x}}}$};

\node at (0, 0, 0) [font=\huge,color=black] {$I_n$};

\end{tikzpicture}}
    \end{subfigure}
    \hspace{0.1\textwidth}
    \begin{subfigure}{0.25\textwidth}
        \centering
        \resizebox{\textwidth}{!}{\begin{tikzpicture}[x=50pt,y=50pt,point/.style={circle,draw,fill,inner sep=0.75mm}]

\draw[ultra thick]
(-1,-1) -- (-1, 1) -- (1, 1) -- (1, -1) -- cycle;

\node[point,color=edred] at (0.7, -1) {};
\node[point,color=edblue] at (-0.7, 1) {};

\node at (0.8, -0.8) [font=\Large,color=edred] {$\overline{\mathbf{x}}$};
\node at (-0.4, 1.2) [font=\Large,color=edblue] {$\overline{\mathbf{y}}\scriptstyle{\textcolor{black}{=}\textcolor{edred}{\smn\overline{\mathbf{x}}}}$};

\node at (0,0) [font=\huge,color=black] {$I_{n-1}$};

\end{tikzpicture}}
    \end{subfigure}    
    
    \caption[Continuing the construction of $\nu_n$ from $\nu_{n-1}$.]{For the second part of the construction of $\nu_n$, we note that any point $\mathbf{x}$ lies on a ``vertical'' line, i.e. an $n$-th-axis-aligned line, meeting the top face at $\mathbf{x}^+$ and the bottom face at $\mathbf{x}^-$. We complete the construction of $\nu_n$ by interpolating along these lines. For a pair of opposite points $\mathbf{y}=\smn\mathbf{x}$, we see that $\mathbf{y}^- = \smn\mathbf{x}^+$. To ensure that $\nu_n$ has the desired properties, we must pay particular attention to the surface of $I_n$ without the top and bottom faces, i.e. $\partial I_n \backslash (I_n^+ \cup I_n^-)$. If $\mathbf{x}$ lies on this part of the surface (as in this figure), then the point $\overline{\mathbf{x}}$, which is the projection of $\mathbf{x}$ along the vertical, lies on the surface $\partial I_{n-1}$.}
    \label{fig:vertical_lines}
\end{figure}
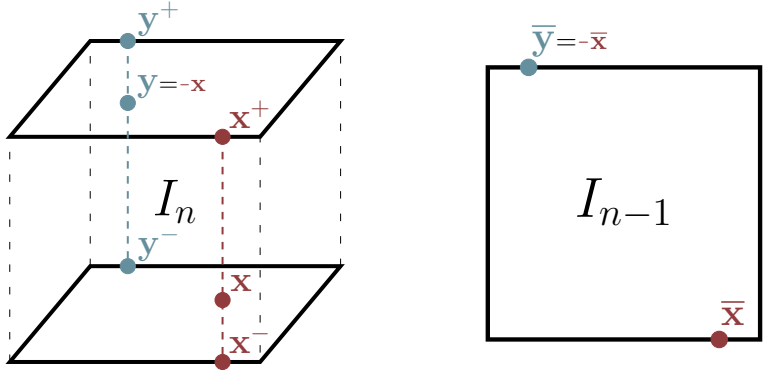

\begin{proof}
First, we must point out that the proof strategy is not quite as simple as suggested above. Since $\alpha$ is only defined on $\Delta_n^{(0)}$, and not on the rest of $\partial \Delta_n$, we cannot in fact satisfy \Cref{eq:bridge} for every $\mathbf{x}$ on $\partial I_n$. Nevertheless, we can still satisfy \Cref{eq:gamma_borsuk_ulam_condition} for every $\mathbf{x}$ on $\partial I_n$, and we reach this conclusion by showing that \Cref{eq:bridge} holds on part of $\partial I_n$.

\clearpage

We proceed by induction -- the base case $\nu_1(\mathbf{x}) := \mathbf{x}$ is trivial. Consider the ``top'' and ``bottom'' faces of $I_n$:

\begin{align*}
    \phantom{\text{The top face.}} && I_n^+ &:= \{ \mathbf{x} \in I_n : x_n = 1 \} && \text{The top face.} \\
    \phantom{\text{The bottom face.}} && I_n^- &:= \{ \mathbf{x} \in I_n : x_n = \smn1 \} && \text{The bottom face.}
\end{align*}

Note that $I_n^+$ and $I_n^-$ are naturally equivalent to $I_{n-1}$ -- we can map bijectively from either face to $I_{n-1}$ by dropping the last coordinate of $\mathbf{x}$. For any point $\mathbf{x}$, we'll denote the corresponding point in $I_{n-1}$ by $\overline{\mathbf{x}}$ -- that is, $\overline{\mathbf{x}}$ is $\mathbf{x}$ with the last coordinate removed. 

Note also that $\Delta_n^{(0)}$ and $\Delta_n^{(n)}$ are naturally equivalent to $\Delta_{n-1}$ -- any point $\mathbf{x} \in \Delta_n^{(0)}$ can be written as $(1, \mathbf{x}^\prime)$, where $\mathbf{x}^\prime \in \Delta_{n-1}$. Similarly any point in $\Delta_n^{(n)}$ can be written as $(\mathbf{x}^\prime, \smn1)$. We can use these equivalences to the lower-dimensional case to come up with a partial definition of $\nu_n$ in terms of $\nu_{n-1}$. 

The idea of how to begin constructing $\nu_n$ using these equivalences is illustrated in \Cref{fig:nu_cartoon}. In the following, we will choose the definition of $\nu_n$ on $I_n^+$ and $I_n^-$ to satisfy what is shown in this figure. That is, by viewing $I_n^+$ as a copy of $I_{n-1}$, and $\Delta_n^{(0)}$ as a copy of $\Delta_{n-1}$, we can map from $I_n^+$ to $\Delta_n^{(0)}$ using $\nu_{n-1} : I_{n-1} \to \Delta_{n-1}$. Crucially, note that the truth of \Cref{eq:bridge} for $\mathbf{x} \in I_n^+$ depends on how $\nu_n$ behaves on $I_n^-$. There is only one way to define $\nu_n$ on $I_n^-$ so that \Cref{eq:bridge} is true on $I_n^+$, and this is the definition that we choose.

Written mathematically, we define $\nu_n$ on $I_n^+$ and $I_n^-$ as follows:

\begin{align*}
\nu_n(\mathbf{x}) &:= 
\begin{cases}
\big( 1, \nu_{n-1}(\overline{\mathbf{x}}) \big) \quad \text{for } \mathbf{x} \in I_n^+ \\
\alpha\big( \nu_n(\smn\mathbf{x}) \big) = \big( \nu_{n-1}(\smn\overline{\mathbf{x}}), \smn1 \big) \quad \text{for } \mathbf{x} \in I_n^-
\end{cases}
\end{align*}

So by definition, $\nu_n$ satisfies \Cref{eq:bridge} for $\mathbf{x} \in I_n^+$. This immediately implies \Cref{eq:gamma_borsuk_ulam_condition} on $I_n^+$ and, by symmetry, on $I_n^-$. 

We have established that \Cref{eq:gamma_borsuk_ulam_condition} holds on $I_n^+ \cup I_n^-$ for our partially defined $\nu_n$. We now need to extend $\nu_n$ to the rest of $I_n$. We will define this extension by interpolating along the vertical lines joining $I_n^+$ and $I_n^-$. By ``vertical'', we mean parallel to the $n$-th coordinate axis.

Consider a point $\mathbf{x}$ lying on the part of $\partial I_n$ that we have not yet considered, i.e. $\partial I_n \backslash (I_n^+ \cup I_n^-)$. Any such point lies on a vertical line which meets $\partial I_n^+$ and $\partial I_n^-$ at points $\mathbf{x}^+$ and $\mathbf{x}^-$, as depicted in \Cref{fig:vertical_lines}. We refer to these vertical line segments on $\partial I_n$ as \emph{surface verticals}. Crucially, $\partial I_n \backslash (I_n^+ \cup I_n^-)$ is entirely covered by surface verticals.

We want to ensure that \Cref{eq:gamma_borsuk_ulam_condition} holds on all of $\partial I_n$, and because we have already shown this for $I_n^+ \cup I_n^-$, it only remains to ensure that it holds on all of the surface verticals. 
One way to achieve this is to make $\Gamma_n \circ \nu_n$ constant on each surface vertical. 
To see that this would imply \Cref{eq:gamma_borsuk_ulam_condition}, consider a pair of antipodal\footnote{That is, such that $\mathbf{y} = \smn\mathbf{x}$.} points $\mathbf{x}$ and $\mathbf{y}$ on $\partial I_n \backslash (I_n^+ \cup I_n^-)$. 
As illustrated in \Cref{fig:vertical_lines}, $\mathbf{x}^+$ and $\mathbf{y}^-$ are also antipodal. 
Since $\mathbf{x}^+$ and $\mathbf{y}^-$ belong to $I_n^+ \cup I_n^-$, we know that \Cref{eq:gamma_borsuk_ulam_condition} holds: 

\begin{equation*}
\Gamma_n\big( \nu_n( \mathbf{x}^+ ) \big) = -\Gamma_n\big( \nu_n( \mathbf{y}^- ) \big)
\end{equation*}

If $\Gamma_n \circ \nu_n$ is constant on surface verticals, it follows that $\Gamma_n\big( \nu_n(\mathbf{x}^+) \big) = \Gamma_n\big( \nu_n(\mathbf{x}) \big)$ and $\Gamma_n\big( \nu_n(\mathbf{y}^-) \big) = \Gamma_n\big( \nu_n(\mathbf{y}) \big)$, and therefore \Cref{eq:gamma_borsuk_ulam_condition} also holds for $\mathbf{x}$ and $\mathbf{y}$:

\begin{equation*}
\Gamma_n\big( \nu_n( \mathbf{x} ) \big) = \Gamma_n\big( \nu_n( \mathbf{x}^+ ) \big) = -\Gamma_n\big( \nu_n( \mathbf{y}^- ) \big) = -\Gamma_n\big( \nu_n( \mathbf{y} ) \big)
\end{equation*}

Because the ends of a surface vertical, $\mathbf{x}^+$ and $\mathbf{x}^-$, belong to $I_n^+$ and $I_n^-$, the behaviour here is already fixed by our partial definition of $\nu_n$. Therefore, to have any hope of holding $\Gamma_n \circ \nu_n$ constant on surface verticals, we must first show that $\Gamma_n\big( \nu_n(\mathbf{x}^+) \big) = \Gamma_n\big( \nu_n( \mathbf{x}^- ) \big)$ under our existing partial definition of $\nu_n$.

By expanding the definition of $\Gamma_n$, consider $\Gamma_n\big( \nu_n(\mathbf{x}^+) \big)$:

\begin{align*}
\Gamma_n\big( \nu_n(\mathbf{x}^+) \big) &= \Gamma_n\Big( \big( 1, \nu_{n-1}(\overline{\mathbf{x}}) \big) \Big) \\
&= -\phi(1) - \Big( \Gamma_{n-1}\big( \nu_{n-1}( \overline{ \mathbf{x}}) \big) - \frac{\phi(1)}{2} \Big) + \frac{\phi(1)}{2} \\
&= -\Gamma_{n-1}\big( \nu_{n-1}( \overline{ \mathbf{x} } ) \big)
\end{align*}

Similarly we find that for $\mathbf{x}^-$:

\begin{align*}
\Gamma_n\big( \nu_n( \mathbf{x}^- ) \big) &= \Gamma_{n-1}\big( \nu_{n-1}( \smn\overline{ \mathbf{x} } ) \big)
\end{align*}

But note that $\overline{\mathbf{x}} \in \partial I_{n-1}$, as illustrated in \Cref{fig:vertical_lines}. By induction, \Cref{eq:gamma_borsuk_ulam_condition} therefore applies:

\begin{align}
\Gamma_{n-1}\big( \nu_{n-1}( \smn\overline{ \mathbf{x} } ) \big) &= -\Gamma_{n-1}\big( \nu_{n-1}( \overline{ \mathbf{x} } ) \big) \nonumber \\
\Gamma_n\big( \nu_n(\mathbf{x}^-) \big) &= \Gamma_n\big( \nu_n( \mathbf{x}^+ ) \big) \label{eq:gamma_equivalent_each_end}
\end{align}

We have shown that $\Gamma_n \circ \nu_n$ takes the same value at either end of each surface vertical. It remains to extend $\nu_n$ so that $\Gamma_n \circ \nu_n$ remains constant along the entire surface vertical. We can perform this extension by interpolating along the vertical between $I_n^+$ and $I_n^-$. Linear interpolation does \emph{not} respect this condition of constancy -- the definition and justification of the correct interpolation scheme are given in \Cref{sec:approximation_appendix}.

\end{proof}

With $\nu_N$ in hand, the proof of \Cref{thm:main_approximation_theorem} is complete. We briefly summarise the chain of reasoning from this section, which is now fully justified by the proof of the above Lemma.

\begin{enumerate}[leftmargin=3\parindent]
    \item We are attempting to model permutation-invariant functions on the input space $[\smn1,1]^M$ using sum-decomposition, where the dimension of the latent space, $N$, is one less than that of the input space, $M$.
    \item We have shown that, for any choice of continuous encoding map $\phi : [\smn1,1] \to \mathbb{R}^N$, the function $\Gamma_N$ (which depends on $\phi$) has a zero.
    \item Because of the way $\Gamma_N$ was constructed, this implies that there are two disjoint regions of the input space\footnote{Strictly speaking, two disjoint regions of the ordered input space $\Delta_M$.} $[\smn1,1]^M$ which collide under the application of $\Phi$.
    \item Any target function which widely separates these regions is therefore impossible to approximate by sum-decomposition -- there will always be a pair of points which collide under the sum-decomposition but are widely separated by the target.
    \item One example of such a function, which therefore cannot be approximated via $\mathbb{R}^N$, is given by \Cref{eq:defn_bad_target}.
\end{enumerate}

\subsection{Some Visual Intuition}
\label{sec:contours}

\begin{figure}
    \centering
    \input{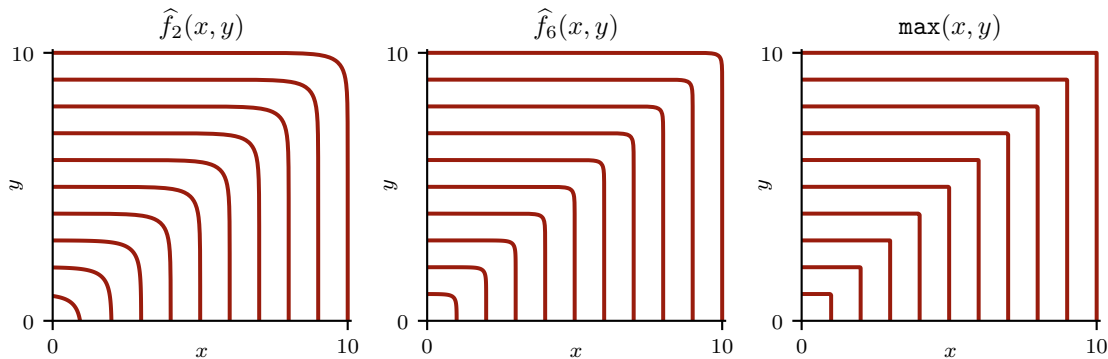}
    \caption[Contours of sum-decomposition approximations to \texttt{max} compared to the true \texttt{max} function.]{Contours of sum-decomposition approximations to \texttt{max} compared to the true \texttt{max} function. The sum-decompositions $\widehat{f}_2$ and $\widehat{f}_6$ are defined by \Cref{eq:max_1d_approx}, with $a=2, 6$ respectively. As $a$ increases, the corners become sharper, but never perfectly sharp.}
    \label{fig:max_contours}
\end{figure}

We have shown mathematically that there exist functions which cannot be approximately sum-decomposed via a low-dimensional latent space. In contrast to the heavy mathematical content above, we now provide some less formal intuition as to why this particular function is so hard to approximate. To that end, we will gloss over some details for the sake of readability.

We consider the contours of the encoding map $\Phi = \sum \phi$. This provides a useful visual perspective on sum-decomposition, helping to develop some intuition for the behaviour of functions represented in this form. Contours of $\Phi$ must also be contours of the full sum-decomposition $\rho \circ \Phi$: if $\Phi(X) = \Phi(Y)$ then $\rho \circ \Phi(X) = \rho \circ \Phi(Y)$.

We plot the contours of \texttt{max} in \Cref{fig:max_contours}, as well as the contours of two terms from an approximate sum-decomposition of \texttt{max} via $\mathbb{R}$. As can be seen here, the contours of \texttt{max} have sharp corners on the line $y=x$, whereas the contours of the approximations have a soft corner here. It is easily seen that a sum-decomposition via $\mathbb{R}$ cannot have the same sharp contours as \texttt{max} -- we can push the contours as close to the corner as desired, providing ever-better approximation, but we cannot make the contours sharp like the true function. This picture of pushing the sum-decomposition contours closer to the true contours gives some intuition for why arbitrarily close approximation can be possible even when exact representation is not.


\begin{figure}
    \centering
    \input{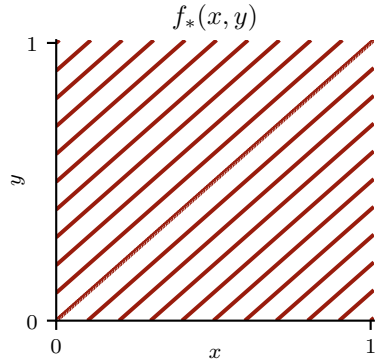}
    \caption{Contours of the function $f_*$ as defined by \Cref{eq:defn_bad_target}.}
    \label{fig:f_star_contours}
\end{figure}

This contour perspective can also give some intuition for the case we have addressed in this section, where even approximation is not possible. As long as the sum-decomposition $\widehat{f}$ is differentiable and goes via $\mathbb{R}$, any contour of $\widehat{f}$ which meets the line $y=x$ must do so at right angles, as is the case for the sum-decompositions plotted in \Cref{fig:max_contours}.\footnote{Neural networks can, of course, represent non-differentiable functions, but if we need to construct a $\phi$ which is non-differentiable in infinitely many places, things get more complicated.} \texttt{max} does not violate this condition so badly that we cannot approximate it -- its contours do at least cross $y=x$. But as \Cref{fig:f_star_contours} shows, the function $f_*$ from our proof of \Cref{thm:main_approximation_theorem} violates this condition very badly, having a contour along $y=x$, perpendicular to the contours of any sum-decomposition. Again, what we show in \Cref{fig:max_contours} and \Cref{fig:f_star_contours} does not prove anything, as we have glossed over many details -- such as the ability of neural networks to represent non-differentiable functions. But perhaps this provides some intuition for why this function should be so difficult to model with sum-decomposition.

\section{Other Methods for Achieving Permutation Invariance}

\label{sec:other_methods}

In this section we discuss methods for achieving permutation invariance beyond $k$-ary Janossy pooling.
In particular, we cover two paradigms mentioned in passing in \Cref{sec:related_work}, which are also discussed in \cite{Murphy2018}: methods which achieve exact permutation invariance through sorting, and methods which do not achieve exact permutation invariance, but only approximate this property.
We also provide a brief discussion of the relationship between learning permutation-invariant functions for sets and learning on graphs.

\subsection{Sorting}
\label{sec:sorting}

When a set is represented as an ordered structure, it is generally assumed that the order can be arbitrary. That is, two representations $\mathbf{x}$ and $\mathbf{y}$ of the same set, where the elements are ordered differently, are both regarded as valid representations of the set. However, one could also say that each set has only one valid representation, corresponding to some canonical ordering of the elements. If it is ensured that only this valid representation is seen by the model by first sorting the elements, then permutation invariance is guaranteed.
This is, in most cases, computationally cheaper than permuting and pooling. The only additional computation necessary is the sorting of the elements into their canonical representation, which is $O(n\log n)$. This method, visualised in \Cref{fig:sorting}, does have some drawbacks.
\begin{figure}
\includegraphics[width=\textwidth]{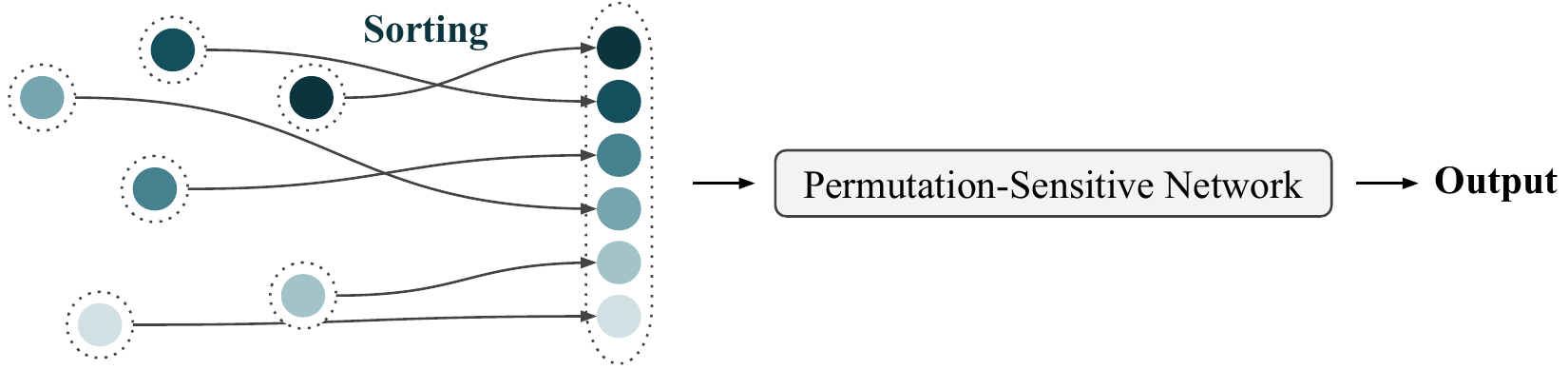}
\caption{Achieving permutation invariance via sorting. Finding or learning a canonical way of sorting the elements of a set before feeding it into a permutation-sensitive network yields a permutation-invariant output -- if the sorting does not depend on the initial order of the elements.}
\label{fig:sorting}
\end{figure}

First, there is an inherent ambiguity of how to pick the canonical ordering. 
This choice can be made manually, but the choice of ordering may in fact affect the performance of the model.
This is because the ordering may or may not be meaningful in the context of the task.
To illustrate this, consider two ways of sorting the integers.
First, sorting according to the usual order relation on the integers ($1 < 2 < 3$), and second, sorting alphabetically by the representation of each integer as an English word (``one'' $<$ ``three'' $<$ ``two'').
If we want to compute the \texttt{max} of a set of integers, the first choice of sort operation renders the task trivial, while the second choice of sort operation does not.

To avoid poorly-specified orderings, the ordering can instead be learned -- or more straightforwardly, a function can be learned giving a score to each element, which is then used to sort the elements by their scores.
This raises the issue of how to learn such a scoring function. More specifically, how are the gradients for this scoring function obtained?
The ranking of elements according to their sorting score is a piecewise constant function,\footnote{That is, the function taking a list of elements to a list of ranks. For alphabetical sorting, for example, we have $(\text{bat}, \text{cat}, \text{ant}) \mapsto (2,3,1)$.} meaning that the gradients are zero almost everywhere and undefined at the remaining locations. This would not be a problem if there were labels for the perfect ranking during training time -- in that case the model could just predict the ranking and gradients could be obtained by comparing the model's predictions to the ground truth. But, in general, there are no labels for the perfect ranking. A good ranking is whatever allows the decoder (the permutation-sensitive network) to perform well. This makes getting proper gradients significantly more difficult.

Backpropagating through piece-wise constant functions, however, is an established task in deep learning. The straight-through estimator, for example, is a viable tool to apply here \citep{BengioStraightThrough,UnderstandingStraightThrough}. Recently, a cheap differentiable sorting operation with a computational cost of $O(n \log n)$ was also proposed in \cite{blondel2020}. To the best of our knowledge, this has not been applied to standard set-based deep learning tasks, but this area could certainly provide interesting applications of this method.

\cite{NonLocal2020} propose yet another approach: after ranking the inputs according to a learned score function, a 1D convolution is applied to the ranked values. However, in order to get gradients for the score function, \cite{NonLocal2020} multiply each value with its score before feeding it into the convolutional layer. Interestingly, while the gradients do backpropagate into the score function, they do not come from the sorting. The gradients come from the scores being used as features later on.

\subsection{Approximate Permutation Invariance}
\label{sec:approx}

One motivation for $k$-ary Janossy pooling is its low computational cost relative to full $M$-ary Janossy pooling.
Recall that the high computational cost of $M$-ary Janossy pooling is due to the fact that $S_M$ consists of $M!$ permutations, each of which must be computed.
\cite{Murphy2018} propose an alternative method of reducing this computational cost: instead of pooling over all permutations in $S_M$, a fixed number $p<M!$ of permutations are randomly sampled. 
The pooling is then performed only over those random samples, yielding approximate permutation invariance.
\cite{Murphy2018} show that this method can provide good empirical results even when setting $p=1$. It is also possible to set, e.g., $p=1$ during training and $p>1$ at test time when fidelity of our predictions is more important, akin to ensemble methods. This could even be a potentially useful way to obtain uncertainty estimates for the model's predictions.

\cite{Pabbaraju2020} approach approximate permutation invariance in a different way. Inspired by Generative Adversarial Networks \citep{GANs}, they propose to use an adversary that tries to find the permutation of the input data which maximises the loss of the main task. This encourages the model to become close to permutation-invariant to minimise the impact of the adversary.

\subsection{From Sets to Graphs}
\label{sec:graphs}

\begin{figure}
\includegraphics[width=\textwidth]{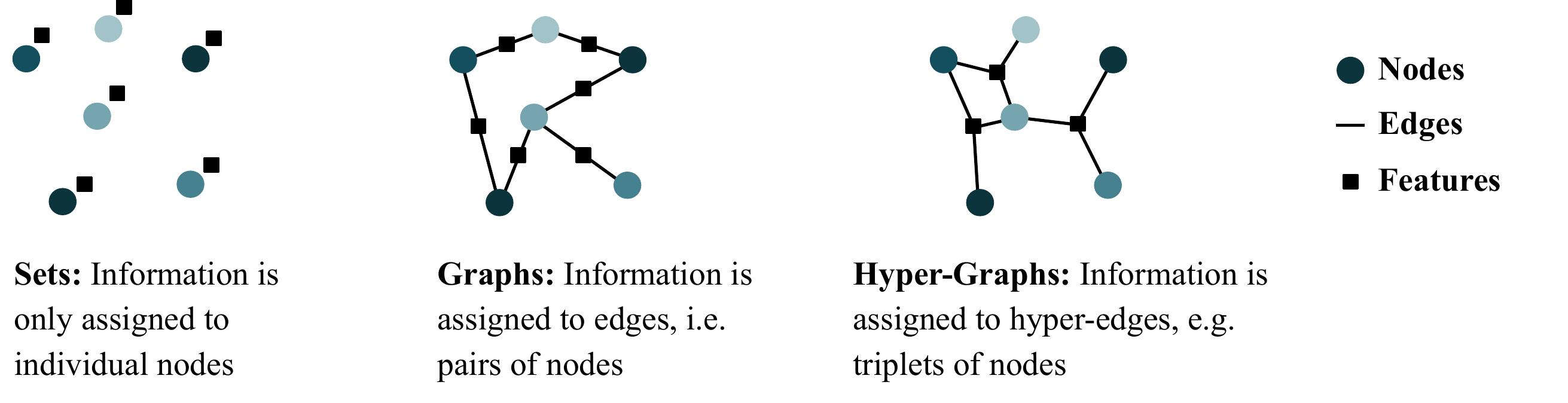}
\caption[From sets to graphs and hyper-graphs.]{From sets to graphs and hyper-graphs. Sets can be extended to graphs by introducing the notion of edges. A permutation of the input now corresponds to a permutation of the nodes together with the respective edges.}
\label{fig:graphs}
\end{figure}

Even though this paper is focussed on sets, not graphs, there is an obvious link between these two topics, which we briefly discuss.
Graphs typically consist of nodes (which form sets) and edges, where each edge connects two nodes. This can be extended to hyper-graphs which include information attached to triplets of nodes \citep{HyperGCN,LambdaNet}. From this perspective, it becomes clear that a set can be seen as a graph without edges (\Cref{fig:graphs}).

Whereas a set of $M$ nodes (without edge information) can be represented as a vector of length $M$, a fully-connected bi-directional graph has $M(M-1)$ edges, which can be captured in an $M \times M$ adjacency matrix. Analogously, hyper-graph data can be represented with higher order adjency tensors. The concept of permutations can now easily be extended from sets to graphs: in addition to nodes, the adjacency matrices capturing the edge information are transformed by the same permutation. This raises the question whether permutation invariant architectures from the sets literature can be applied to graphs as well. Interestingly, there is no canonical way to apply Deep Sets to graphs as it is not clear how to process the edge information. With self-attention algorithms \citep{Vaswani2017}, on the other hand, it is straightforward to include edge information \citep{velikovi2017graph, fuchs2020}.

Notably, \cite{Maron2019} study the set of independent linear, permutation-invariant or equivariant functions (i.e. matrices) transforming vectorised versions of the adjacency tensors. In fact, there is an orthogonal basis of such linear, permutation-equivariant functions. The number of basis vectors is connected to the \textit{Bell number} and is, remarkably, independent of the number of nodes in the (hyper-)graph. As an example, the edge information of a bi-directional graph can be written as an $M \times M$ adjacency matrix or, after vectorising this matrix, as a vector of length $M^2$. This vector can now be linearly transformed by multiplying it with a matrix of size $M^2 \times M^2$. \cite{Maron2019} show that there is an orthogonal basis of 15 different $M^2 \times M^2$ matrices which transform the (vectorised) adjacency matrix in an equivariant manner. The number 15 is the fourth Bell number, and is independent of the size $M$ of the graph.

This result shows a contrast between linear and nonlinear permutation-invariant functions. The complexity of the space of linear functions does not change as the number of inputs increases. By contrast, at least from the perspective of Deep Sets, the ``complexity'' of the nonlinear function space as measured by the necessary latent dimension increases with the number of inputs, as is shown in \Cref{sec:universal_representation}. Our main result in \Cref{sec:universal_approximation} shows that this is the case even if we consider not the whole function space, but also any dense subspace.



\section{Universality Beyond Deep Sets}

We have seen that Deep Sets is theoretically capable of representing all permutation-invariant functions if the latent space is at least as large as the cardinality of the sets being processed. 
For large input set sizes, this can be prohibitively slow and memory-consuming. 
Moreover, even for small set sizes, the theoretical universality does not guarantee that this is the best choice in practice. 
For example, from relational reasoning experiments \citep{fuchs2019endtoend}, it is known that Deep Sets is not always best at learning about interactions between input elements.
Especially when dependencies between elements are important for the task, other architectures, such as self-attention, are often preferred. 
This raises the question of how this empirical evidence of superior performance on some tasks relates to general theoretical limitations of these models.
In the following, we provide a summary of the sufficiency and necessity criteria for universal function representation for different categories of set-based learning approaches.

\label{sec:universal_representation_other}

\subsection{Janossy Pooling}

We consider $k$-ary Janossy pooling as depicted in Figure \ref{fig:permuting} and described in Section \ref{sec:related_work}. We assume a universal function approximator processing the permuted input subsets, a sum over the processed inputs, and a final component which is again a universal function approximator acting on the sum.

\begin{definition}
Let $M, N, k \in \mathbb{N}$, $\mathbf{x} \in \mathbb{R}^M$. Let $f:\mathbb{R}^M \to \mathbb{R}$. Write $T_k(\mathbf{x})$ for the set of all $k$-tuples of coordinates of $\mathbf{x}$. We say that $f$ has a \emph{continuous $k$-ary Janossy representation via $\mathbb{R}^N$} if there exist continuous functions $\phi: \mathbb{R}^k \to \mathbb{R}^N$ and $\rho: \mathbb{R}^N \to \mathbb{R}$ such that:

\begin{equation*}
f(\mathbf{x}) = \rho \Big( \sum_{\mathbf{t} \in T_k(\mathbf{x})} \phi(\mathbf{t}) \Big) 
\end{equation*}
\end{definition}

\subsubsection{Case $k=1$}

This case is Deep Sets. In Sections \ref{sec:universal_representation} and \ref{sec:universal_approximation}, we provided an in-depth analysis of this case. In a nutshell, Janossy pooling for $k=1$ can approximate all permutation-invariant functions if the latent space $N$ is at least as large as the number of inputs $M$. Moreover, we show that a smaller latent space ($N<M$) does not suffice in general.

\subsubsection{Case $1<k<M$}

This inherits sufficiency from the case $k=1$: the network processing inputs of cardinality $k$ can ignore all inputs except the first. This recovers Deep Sets, though each element is seen more than once, so the model must divide its output by the number of inputs having the same first element. In Janossy pooling, this number is $(M-1)\cdot(M-2)\cdot \ldots \cdot(M-2)(M-k) = (M-1)!/(M-k-1)!$. This gives the following statement:

\begin{theorem}
Let $f: \mathbb{R}^{M} \to \mathbb{R}$ be continuous and permutation-invariant. Then $f$ has a continuous $k$-ary Janossy representation via $\mathbb{R}^M$ for any choice of $k$.
\end{theorem}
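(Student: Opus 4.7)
The plan is to reduce the $k$-ary case to the $k=1$ case (Deep Sets) via \Cref{cor:ori_uncountable_theorem}, by constructing a $\phi$ that ignores all but one coordinate of its input tuple. First I would invoke \Cref{cor:ori_uncountable_theorem} to obtain continuous functions $\widetilde{\phi}: \mathbb{R} \to \mathbb{R}^M$ and $\widetilde{\rho}: \mathbb{R}^M \to \mathbb{R}$ such that
\begin{equation*}
f(\mathbf{x}) = \widetilde{\rho}\Bigl( \sum_{i=1}^M \widetilde{\phi}(x_i) \Bigr).
\end{equation*}

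Next I would do the combinatorial bookkeeping that makes this reduction work. Let $C := P(M-1, k-1) = (M-1)!/(M-k)!$ be the number of ordered $k$-tuples of distinct coordinates of $\mathbf{x}$ having a fixed element as their first entry; equivalently, it counts how many times each fixed coordinate $x_i$ appears in the first slot as we range over all of $T_k(\mathbf{x})$. Then I would define $\phi : \mathbb{R}^k \to \mathbb{R}^M$ and $\rho : \mathbb{R}^M \to \mathbb{R}$ by
\begin{align*}
\phi(t_1, \ldots, t_k) &:= \frac{1}{C}\,\widetilde{\phi}(t_1), \\
\rho &:= \widetilde{\rho}.
\end{align*}
Both $\phi$ and $\rho$ are continuous because $\widetilde{\phi}$ and $\widetilde{\rho}$ are, and projection onto the first coordinate is continuous.

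With these choices, a short calculation shows that the $k$-ary Janossy sum collapses to the Deep Sets sum. Specifically,
\begin{equation*}
\sum_{\mathbf{t} \in T_k(\mathbf{x})} \phi(\mathbf{t}) \;=\; \frac{1}{C}\sum_{\mathbf{t} \in T_k(\mathbf{x})} \widetilde{\phi}(t_1) \;=\; \frac{1}{C}\sum_{i=1}^M \widetilde{\phi}(x_i) \cdot \bigl|\{\mathbf{t} \in T_k(\mathbf{x}) : t_1 = x_i\}\bigr| \;=\; \sum_{i=1}^M \widetilde{\phi}(x_i),
\end{equation*}
where the last equality uses that the cardinality in question equals $C$ by definition. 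Applying $\rho = \widetilde{\rho}$ to both sides recovers $f(\mathbf{x})$, giving the desired continuous $k$-ary Janossy representation via $\mathbb{R}^M$.

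There is essentially no obstacle here: the construction is just padding a Deep Sets representation to ignore extra coordinates, and the only subtlety is getting the multiplicity $C$ right so the sum does not overcount. Note that the latent dimension $M$ is inherited directly from \Cref{cor:ori_uncountable_theorem} and is not claimed to be tight for $k > 1$; investigating whether $k$-ary Janossy pooling admits a smaller sufficient latent dimension is a separate question, flagged elsewhere in this section as an open problem.
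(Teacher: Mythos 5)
Your proof takes exactly the paper's approach: have $\phi$ depend only on the first coordinate of the $k$-tuple, divide by the resulting overcounting factor, and thereby collapse the $k$-ary Janossy sum to the Deep Sets sum from \Cref{cor:ori_uncountable_theorem}. The paper states this only as a brief inline sketch; you have supplied the missing bookkeeping, and you have done it correctly. One small but genuine discrepancy in your favour: the paper's sketch gives the multiplicity as $(M-1)(M-2)\cdots(M-k) = (M-1)!/(M-k-1)!$, but the number of ordered $k$-tuples of distinct indices whose first slot is fixed is $P(M-1,k-1)=(M-1)(M-2)\cdots(M-k+1)=(M-1)!/(M-k)!$, which is your $C$. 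The paper's formula is off by one factor (it counts $k$ choices for the $k-1$ remaining slots), so your value is the correct one; fortunately the error is harmless to the paper's conclusion since the only thing that matters is that the multiplicity is a nonzero constant independent of the input, which is true either way.
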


\subsubsection{Case $k=M$}
\label{sec:KisM}
In this case, the first part of the neural network sees the entire input, and can therefore directly approximate the desired function. The model must then divide by a factor which takes into account the sum over the permuted terms, namely $M!$. This means that there is no requirement anymore on the size of the latent space, and we can make the following statement:

\begin{theorem}
Let $f: \mathbb{R}^{M} \to \mathbb{R}$ be continuous and permutation-invariant. Then $f$ has a continuous $M$-ary Janossy representation via $\mathbb{R}$.
\end{theorem}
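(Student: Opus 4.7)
The plan is to construct the required representation directly by exploiting the fact that, when $k=M$, the inner function $\phi$ can be evaluated on the full input tuple at once, so it can simply recover $f$ itself (up to a normalising constant that compensates for the summation over permutations). Concretely, I would set $\phi(\mathbf{y}) := f(\mathbf{y})/M!$ for all $\mathbf{y} \in \mathbb{R}^M$, and take $\rho$ to be the identity on $\mathbb{R}$.

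Next, I would verify the identity $f(\mathbf{x}) = \rho\bigl(\sum_{\mathbf{t} \in T_M(\mathbf{x})} \phi(\mathbf{t})\bigr)$. The set $T_M(\mathbf{x})$ consists of all $M$-tuples obtained by choosing an ordering of the $M$ distinct positional coordinates of $\mathbf{x}$, so $|T_M(\mathbf{x})| = M!$ and its elements are precisely $\{\pi(\mathbf{x}) : \pi \in S_M\}$. Using the permutation-invariance of $f$, each summand equals $f(\pi(\mathbf{x}))/M! = f(\mathbf{x})/M!$, so the sum evaluates to $f(\mathbf{x})$, and applying the identity $\rho$ yields $f(\mathbf{x})$ as required.

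It then remains only to observe the continuity conditions in the definition: $\phi$ is a continuous map $\mathbb{R}^M \to \mathbb{R}$ because it is $f$ scaled by a nonzero constant and $f$ is assumed continuous, and $\rho$ is trivially continuous. This gives a continuous $M$-ary Janossy representation via $\mathbb{R}$, completing the proof.

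There is essentially no hard step here — the entire argument is a direct translation of the informal sketch already given in Section~\ref{sec:KisM}. The only thing to be careful about is correctly interpreting $T_M(\mathbf{x})$ as having exactly $M!$ elements (so that the $1/M!$ normalisation cancels properly), which follows from the convention established by the earlier $k$-ary definition in \Cref{eq:k_ary_janossy} that tuples are drawn from distinct positions of the input.
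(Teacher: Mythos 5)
Your proof is correct and matches the paper's own argument exactly: set $\phi := f/M!$, take $\rho$ to be the identity, and use that $T_M(\mathbf{x})$ is precisely the $M!$ permutations of $\mathbf{x}$ together with permutation-invariance of $f$ to collapse the sum. The continuity observations are straightforward and complete the proof.
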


\begin{figure}
	\centering
	\begin{subfigure}[h]{0.47\textwidth}
		\centering
		\includegraphics[width=0.99\textwidth]{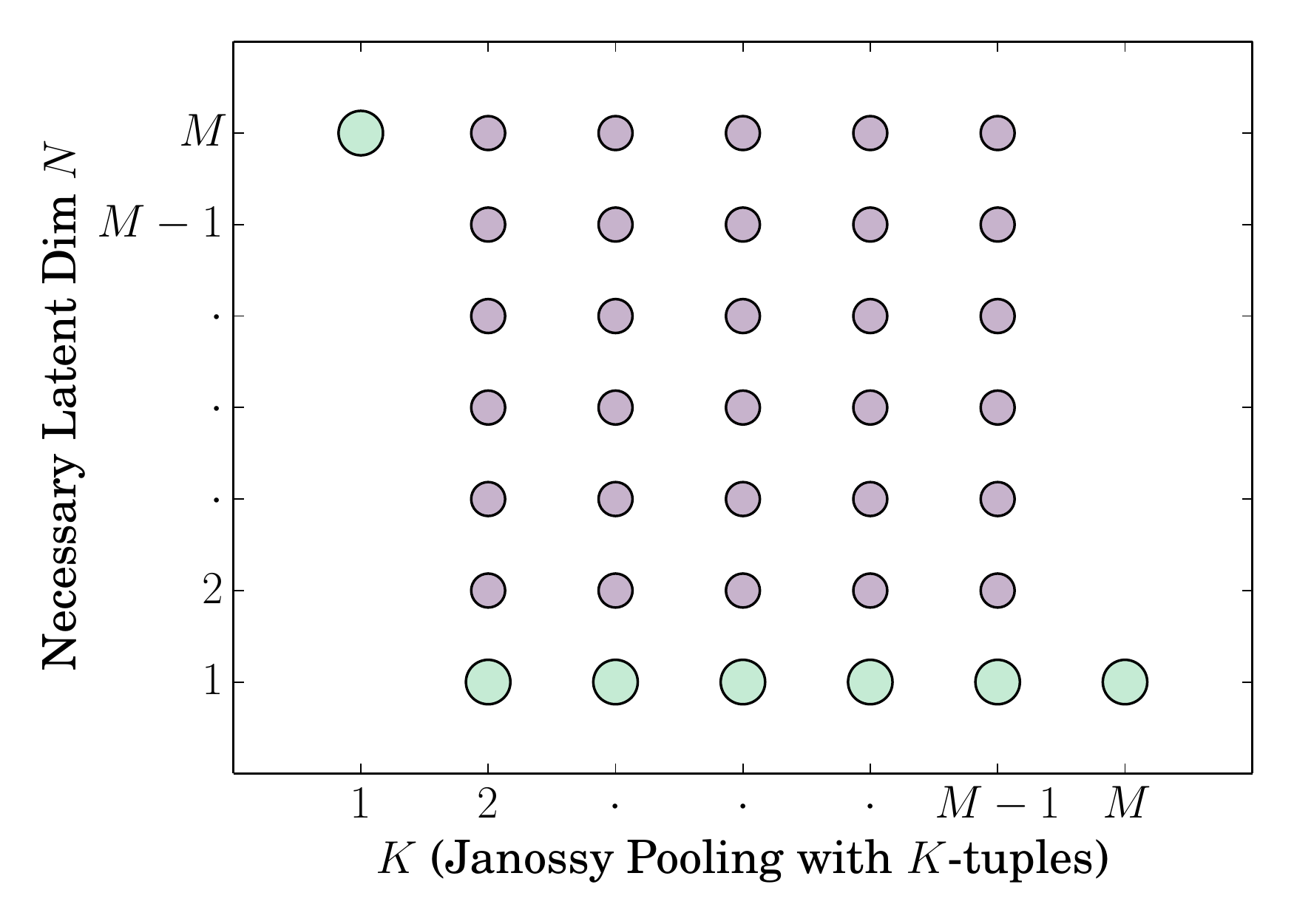}
		\caption{Minimum Necessity}
		\label{fig:influence_local_instability_all}
	\end{subfigure} 
	\begin{subfigure}[h]{0.47\textwidth}
		\centering
		\includegraphics[width=0.99\textwidth]{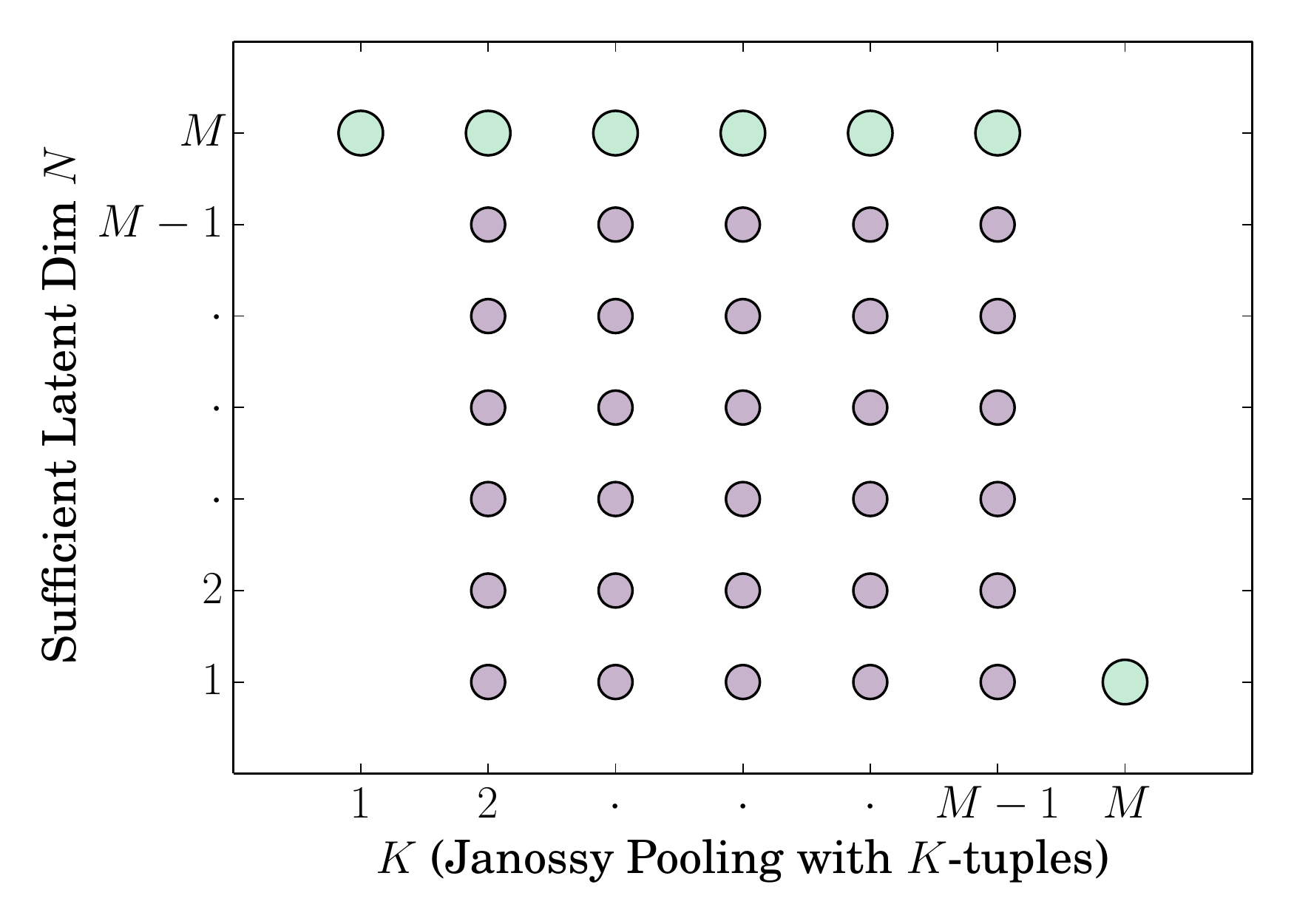}
		\caption{Maximum Sufficiency}
		\label{fig:influence_local_instability_easy}
	\end{subfigure} 
	\caption[Necessary and sufficient conditions for the size of the latent dimension for Janossy pooling.]{Necessary and sufficient conditions for the size of the latent dimension for Janossy pooling. The large, green circles show the minimum and maximum boundaries as according to our proofs. The smaller, purple circles show possible true necessary and sufficient conditions for universal function representation.}
	\label{fig:K_suff_necc}
\end{figure}

\subsubsection{Open Questions}
For the extreme cases of $k$-ary Janossy pooling, $k=1$ and $k=M$, we know a necessary and sufficient condition on the latent space dimension required for universal approximation. 
For $1<k<M$, however, there is a gap between the largest known necessary condition and the lowest known sufficient condition. Figure \ref{fig:K_suff_necc} visualises the conditions we derived and what the true, strongest statements might be. 
As an example: for $k=2$, we know that a latent dimension of $N=M$ is sufficient and a latent dimension of $N=1$ is necessary in order to be able to represent all permutation-invariant functions. 
However, it is possible that $N=2$ is sufficient, or that $N=M-1$ is necessary.
This is an interesting open question and we leave it for future work to further investigate this.

The proofs of our results on Deep Sets rely on the input elements being 1-dimensional. It is known that Deep Sets is universal for approximation of set functions with higher-dimensional inputs, and indeed sufficient conditions on the latent space dimension are known for both invariant \citep{han2019universal} and equivariant \citep{segol2020universal, maron2019provably} set functions. It is unknown, however, whether these known sufficiency bounds are also necessary -- indeed, no non-trivial necessary bound is known, other than the 1-dimensional case presented in \Cref{sec:universal_approximation}.

\subsection{Sorting}
In Section \ref{sec:sorting}, we described a category of methods which achieve permutation invariance through sorting the inputs. We assume that a (potentially learned) scoring function is used to sort the inputs. A list of sorted inputs is then fed to a universal function approximator (e.g. a sufficiently large neural network). As the order of the sorted input is invariant with respect to permutations of the input, the whole architecture is automatically permutation-invariant. Since the second part of the model is a universal approximator, this architecture choice is therefore a universal function approximator for permutation-invariant functions and we arrive at the following conclusion:

\begin{theorem}
Let $f: \mathbb{R}^{M} \to \mathbb{R}$ be continuous and permutation-invariant. Then for any sorted version of the inputs, $f$ has a continuous representation which acts on these sorted inputs.
\end{theorem}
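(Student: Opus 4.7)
The plan is to construct the continuous function acting on sorted inputs by direct definition, exploiting two properties of the canonical sort: the sorted tuples form a closed subset of $\mathbb{R}^M$ that is fixed pointwise by $\sigma$, and $\sigma$ itself is a continuous, permutation-invariant map.

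First I would fix $\sigma : \mathbb{R}^M \to \mathbb{R}^M$ to be the ascending-sort operation, whose image is $\Delta_M^\uparrow := \{ \mathbf{y} \in \mathbb{R}^M : y_1 \le \cdots \le y_M \}$. The map $\sigma$ is continuous because each coordinate $\sigma_k(\mathbf{x})$ is the $k$-th order statistic of $\mathbf{x}$, and order statistics depend continuously on the input. Next I would define $g := f|_{\Delta_M^\uparrow}$, the restriction of the target to the sorted cone. Because $f$ is continuous on $\mathbb{R}^M$, the restriction $g$ is automatically continuous on its domain. Using permutation-invariance of $f$ (and the fact that $\sigma(\mathbf{x})$ is always a permutation of $\mathbf{x}$), we get $f(\mathbf{x}) = f(\sigma(\mathbf{x})) = g(\sigma(\mathbf{x}))$ for every $\mathbf{x}$, so $g$ is the desired continuous representation acting on sorted inputs. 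Well-definedness of $g$ on $\Delta_M^\uparrow$ is immediate since every point there is its own sort.

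For the universality statement, I would appeal to the Tietze extension theorem to extend $g$ continuously from the closed set $\Delta_M^\uparrow$ to all of $\mathbb{R}^M$. Then, on any compact subset, $g$ can be approximated uniformly by a universal function approximator $\hat{g}$ — for example, a neural network with continuous activations via the classical universal approximation theorem. The composition $\hat{g} \circ \sigma$ is permutation-invariant by construction and uniformly approximates $f$ on the compact input domain, closing out the universality claim.

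The main obstacle is in interpreting ``any sorted version'' in full generality. The argument above works cleanly for any fixed sort $\sigma$ that is simultaneously continuous, permutation-invariant as a map, and \emph{orbit-injective} (i.e.\ $\sigma(\mathbf{x}) = \sigma(\mathbf{y})$ forces $\mathbf{y}$ to be a permutation of $\mathbf{x}$, so that the defining equation $g(\sigma(\mathbf{x})) := f(\mathbf{x})$ is unambiguous); the standard ascending sort satisfies all three. Learned sortings that decide the order via a score function may violate continuity (the ranking jumps when two scores cross) or orbit-injectivity, which is the pathology discussed around the \texttt{max}/alphabetical-sort example in \Cref{sec:sorting}; the theorem as stated thus lives in the regime where $\sigma$ is a genuine canonical-representative map, and I would flag this hypothesis explicitly in the formal statement.
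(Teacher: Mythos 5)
Your proof is correct and is essentially the rigorous version of the paper's informal argument: the paper asserts the theorem after an architecture-level discussion (sorting canonicalises the input, the downstream map is then free to be a universal approximator), and you supply the underlying construction — continuity of $\sigma$ via continuity of order statistics, the observation that $g := f|_{\Delta_M^\uparrow}$ is continuous and satisfies $f = g \circ \sigma$ by permutation-invariance, and a Tietze extension for the universal-approximation follow-up. Your flagged hypothesis that the sort must be continuous, permutation-invariant, and orbit-injective is a sharper articulation of the paper's own caveat that score-based sortings only preserve universality when the scoring is a smooth bijection.
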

Caution must be exercised in the case where the sorting does not operate directly on the inputs, but instead on features derived from the inputs. Universality still holds if the features are a smooth bijective mapping from the inputs.

\subsection{Approximate Permutation Invariance}

In Section \ref{sec:approx}, we mentioned two approaches to approximate permutation invariance.
These models are in some sense \emph{too} expressive -- they may, depending on the exact implementation details, be capable of approximating the permutation-invariant target function, but they may also be capable of approximating functions which are not permutation-invariant. 

In the approach of \cite{Pabbaraju2020}, permutation invariance is encouraged by the training process, but the model class is not constrained to be permutation-invariant -- if a permutation-sensitive function approximator is used to build the model, then the trained model may retain the ability to distinguish between different permutations of the same set. Whether the model is capable of approximating the target function depends entirely on the model class chosen -- \cite{Pabbaraju2020} choose an LSTM, but other models could be trained with a similar adversarial method.

In the case of approximate Janossy pooling it is similarly clear that, if a universal approximator is used for the function whose outputs are pooled (i.e. for the function $f$ in the notation of Equation \ref{eq:janossy}), then the overall model inherits this universality.
Although permutation invariance is not guaranteed, the model is guaranteed to approximate permutation invariance in the following sense.
Write $p$ for the number of permutations sampled, and write $f$ for the permutation-sensitive function whose outputs are pooled.
For a given set $\mathbf{x}$, there are $M!$ possible permutations $\pi$, and therefore $M!$ possible outputs $f\big(\pi(\mathbf{x})\big)$.
When $p=1$, we get an output sampled uniformly from these possible outputs, and we can view this output as a random variable with variance $\sigma^2$.
For $p>1$, the output is the mean of a random sample of size $p$ from the set of outputs of $f$.
The variance of this random variable is

\begin{equation*}
    \frac{M! - p}{M! - 1} \cdot \frac{\sigma^2}{p}.
\end{equation*}

This goes monotonically to $0$ as $p \to M!$.

\section{Conclusion}
\label{sec:conclusions}

This work provides a theoretical characterisation of the representation and approximation of permutation-invariant functions. 
To this end, we focus on the Deep Sets architecture from \citet{Zaheer2017} and derive sufficient and necessary conditions for continuous function representation. 
This is achieved by noting the importance of considering continuous transformations. 
We find that Deep Sets is only able to represent all continuous functions if the latent space is at least as large as the number of inputs. 
Further, we relate these insights back to the broader paradigm of Janossy pooling, as introduced by \citet{Murphy2018}.

Continuing our focus on continuous transformations, we adopt a topological perspective to answer the question of whether requiring universal function \textit{representation} is an overly strong criterion, and whether universal function \textit{approximation} would be possible even with smaller latent spaces. 
We turn again to the Deep Sets architecture and answer this question in the negative. 
We show that functions exist which can only be approximated very poorly (i.e. the worst case error is no better than a constant baseline) when making the latent space any smaller than the number of inputs.

We hope that the analytical arguments in this work inspire future theoretical treatments of the subject while also providing practical guidance for machine learning practitioners. 
Regarding future work, it would be interesting to extend the theoretical analysis beyond scalar-valued set elements, to vector-valued set elements, as commonly encountered in practical applications.

\section*{Acknowledgements}

We thank André Henriques, Jan Steinebrunner and Tom Zeman for a helpful mathematical discussion, including proofs that the function $\Gamma_N$ defined in \Cref{eq:def_of_gamma} has a zero when $N=2$.
This research was funded by the EPSRC AIMS Centre for Doctoral Training at the University of Oxford.

\clearpage
\appendix

\section{Mathematical Remarks}

\subsection{Infinite Sums}
\label{sec:infinite_sums}

Throughout this paper we consider expressions of the form

\begin{equation}
\label{eq:setsum}
\Phi(X) = \Sigma_{x \in X} \phi(x)
\end{equation}

where $X$ is an arbitrary set. The meaning of this expression is clear when $X$ is finite, but when $X$ is infinite, we must be precise about what we mean.

\subsubsection{Countable Sums}

We usually denote countable sums as e.g. $\Sigma_{i=1}^\infty x_i$. Note that there is an ordering of the $x_i$ here, whereas there is no ordering in \Cref{eq:setsum}. The reason that we consider sums is for their permutation invariance in the finite case, but note that in the infinite case, permutation invariance of sums does not necessarily hold! For instance, the alternating harmonic series $\Sigma_{i=1}^\infty \frac{(-1)^i}{i}$ can be made to converge to any real number simply by reordering the terms of the sum. For expressions like \eqref{eq:setsum} to make sense, we must require that the sums in question are indeed permutation invariant. This property is known as \emph{absolute convergence}, and it is equivalent to the property that the sum of absolute values of the series converges. We therefore require everywhere that $\Sigma_{x \in X} |\phi(x)|$ is convergent. For any $X$ where this is not the case, we will set $\Phi(X) = \infty$.

\subsubsection{Uncountable Sums}

It is well known that a sum over an uncountable set of elements only converges if all but countably many elements are 0. Allowing sums over uncountable sets is therefore of little interest, since it essentially reduces to the countable case.

\subsection{Continuity of Functions on Sets}
\label{sec:cont_set_fun_remark}

We are interested in functions on subsets of $\mathbb{R}$, i.e. elements of $2^\mathbb{R}$, and the notion of continuity on $2^\mathbb{R}$ is not straightforward. 
As a convenient shorthand, we discuss ``continuous'' functions $f$ on $2^\mathbb{R}$, by which we mean that the function $f_M$ induced by $f$ on $\mathbb{R}^M$ by $f_N(x_1, ..., x_M) = f(\{x_1, ..., x_M\})$ is continuous for every $M \in \mathbb{N}$.

\subsection{Remark on \Cref{ori_countable_theorem}}
\label{sec:multisets}

The proof for \Cref{ori_countable_theorem} from \citet{Zaheer2017} can be extended to dealing with multi sets, i.e. sets with repeated elements. To that end, we replace the mapping to natural numbers $c(X): \mathbb{R}^M \to \mathbb{N}$ with a mapping to prime numbers $p(X): \mathbb{R}^M \to \mathbb{P}$. We then choose $\phi(x_m) = -\log p(x_m)$. Therefore,
\begin{equation}
\Phi(X) = \sum_{m=1}^{M} \phi (x_m) = \log \prod_{m=1}^{M} \frac{1}{p(x_m)}
\end{equation}
which takes a unique value for each distinct $X$ therefore extending the validity of the proof to multi-sets. This choice of $\phi$ diverges with infinite set size.

In fact, it is straightforward to show that there is no function $\phi$ for which $\Phi$ provides a unique mapping for arbitrary multi-sets while at same time guaranteeing convergence for infinitely large sets. Assume a function $\phi$ and an arbitrary point $x$ such that $\phi(x) = a \neq 0$. Then, the multiset comprising infinitely many identical members $x$ would give: 

\begin{equation}
\Phi(X) = \sum_{i=1}^{\infty} \phi (x_m) = \sum_{i=1}^{\infty} a = \pm\infty
\end{equation}

\begin{figure}
    \centering
    \begin{tikzpicture}
    
    \newlength\ultwo
    \setlength{\ultwo}{1.25em}
           
    \draw[pattern=south east lines,pattern color=edred,draw=edred,very thick, rounded corners=1.25\ultwo,fill opacity=0.5] 
            (0, 7.5\ultwo) -- (7\ultwo, 4\ultwo) -- (7\ultwo, 1.5\ultwo) -- (-7\ultwo, 1.5\ultwo) -- (-7\ultwo, 4\ultwo) -- cycle;
            
    \draw[fill=edblue,fill opacity=0.15,color=edblue,very thick, rounded corners=1\ultwo] 
            (-3\ultwo, -7\ultwo) -- (-7\ultwo, -1.5\ultwo) -- (-7\ultwo, 0.5\ultwo) -- 
            (7\ultwo, 0.5\ultwo) -- (7\ultwo, -1.5\ultwo) -- (3\ultwo, -7\ultwo) -- cycle;
            
    \graph[nodes={draw,very thick,fill=gray!20!white,text=black,circle,inner sep=0.2\ultwo},
           edges={very thick},
           radius=5\ultwo] {
           subgraph K_n [clockwise, V={$[3]_6$,$[5]_6$,$[6]_6$,$[4]_6$,$[2]_6$,$[0]_6$,$[1]_6$}]
           };
            
    \node[color=edred] at (6\ultwo, 6\ultwo) {\Large $X_6^{+1}$};
    \node[color=edblue!80!black] at (6\ultwo, -6\ultwo) {\Large $X_6^{\smn 1}$};
    
    \end{tikzpicture}
    \caption[The $6$-dimensional simplex $\Delta_6$ visualised as the complete graph $K_7$.]{The $6$-dimensional simplex $\Delta_6$, and the low-dimensional faces $X_6^{+1}$ and $X_6^{\smn 1}$, visualised using the complete graph $K_7$. Here, $[m]_6$ denotes the vertex of $\Delta_6$ whose coordinates in $\mathbb{R}^6$ consist of $m$ ones followed by $6-m$ negative ones.}
    \label{fig:simplex_graph_vis}
\end{figure}
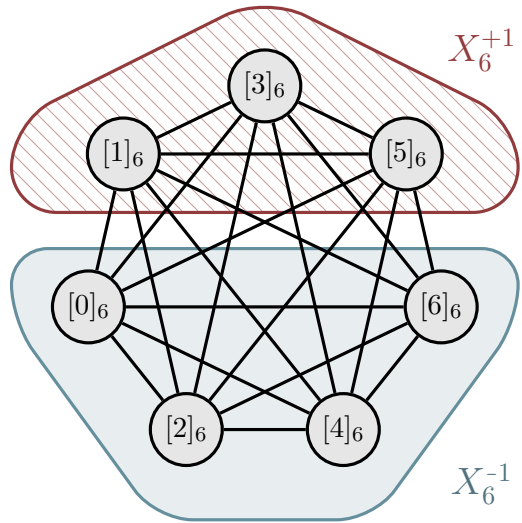

\subsection{Visualising High-dimensional Simplices\label{sec:visualising_simplices}}

In the proof of \Cref{thm:main_approximation_theorem}, we make heavy use of the $N$-dimensional simplex $\Delta_N$, its $N-1$-dimensional faces $\Delta_N^{(i)}$, and the two opposite faces $X_M^{+1}$ and $X_M^{\smn 1}$ of the $M$-dimensional simplex $\Delta_M$. To mentally keep track of all these different objects, it is helpful to have a way of visualising them. Although these simplices are high-dimensional objects, they can be visualised easily as graphs, as shown in \Cref{fig:simplex_graph_vis}.

An $N$-dimensional simplex can be defined as the convex hull of $N+1$ points in general position.\footnote{That is, no three points are collinear, no four points are coplanar, etc.} We can represent this as the complete graph $K_{N+1}$, where just as any pair of vertices of the simplex are joined by an edge, so are any pair of vertices of the graph. An $N-1$-dimensional face of the simplex is just the convex hull of $N$ of the $N+1$ vertices, and this can be viewed as an induced $K_N$ subgraph of $K_{N+1}$. Lower-dimensional faces can be viewed similarly. 

Under our definition of $\Delta_N$, the vertices of the simplex are the points whose coordinates are a (possibly empty) sequence of ones followed by a sequence of negative ones. Any vertex can be written in shorthand as $[m]_N$, indicating the point in dimension $N$ consisting of $m$ ones followed by all negative ones. For example, $[2]_3 = (1, 1, \smn1)$, the vertex of $\Delta_3$ beginning with two ones.

\Cref{fig:simplex_graph_vis} illustrates this way of visualising $\Delta_N$, showing the 6-dimensional simplex $\Delta_6$ represented as a complete graph with 7 vertices. With the vertices arranged appropriately, the faces $\smash{X_6^{+1}}$ and $\smash{X_6^{\smn1}}$ correspond to the indicated subgraphs, and it is visually clear that it is sensible to describe them as ``opposite'' faces of the simplex. The faces $\smash{\Delta_6^{(0)}}$ and $\smash{\Delta_6^{(6)}}$ are obtained by removing $[0]_6$ and $[6]_6$ respectively. 

\subsection{Extending \Cref{thm:main_approximation_theorem} to Higher Dimensions}

As discussed in the main text, an interesting line of future work lies in extending \Cref{thm:main_approximation_theorem} to sets of higher-dimensional elements.
We briefly make note of a small obstacle to extending the arguments of this work to the higher-dimensional case.

Our proof of \Cref{thm:main_approximation_theorem} makes use of the fact that an arbitrary continuous function $f$ on $\mathbb{R}^M$ can be used to construct a continuous permutation-invariant function by composing with $\texttt{sort}$.
This simplifies the process of defining a hard-to-approximate continuous permutation-invariant target function $f_*$, as in \Cref{eq:defn_bad_target}.
For $d$-dimensional inputs with $d>1$, however, this simplification is not possible because there is no continuous sorting function on $\mathbb{R}^d$.

In fact we can say something slightly stronger than this.
The important property of \texttt{sort} for the purpose above is that it maps all permutations of a given set to the same canonical permutation. 
That is, it has the following two properties:

\begin{enumerate}
    \item It is permutation-invariant.
    \item For any set $\mathbf{x}$, there is a permutation $\pi$ such that $\texttt{sort}\big(\pi(\mathbf{x})\big) = \pi(\mathbf{x})$.
\end{enumerate}

We refer to any function satisfying these properties as a \emph{quasi-sorting function}, since this definition includes many functions which cannot be defined by sorting according to any order relation.
For sets of $d$-dimensional elements, there is not only no continuous sorting function, there is also no continuous quasi-sorting function.
To see this, we give an informal argument which can easily be made precise.

\begin{figure}
    \centering
    \begin{subfigure}[b]{0.3\textwidth}
        \resizebox{\textwidth}{!}{\begin{tikzpicture}[point1/.style={draw, circle, very thick, inner sep=0.5em,fill=edblue!60!white},
                        point2/.style={draw=edred,circle, very thick, inner sep=0.5em,pattern=south east lines, pattern color=edred},
                        mat/.style={matrix of math nodes, font=\large, left delimiter=(, right delimiter=),
                                    inner sep=0.05em, row sep=0.5em, column sep=0.25em,
                                    row 1/.style={nodes={color=edblue}},
                                    row 2/.style={nodes={color=edred}}}]
    \tikzmath{\x1 = 135;}
    \tikzmath{\x2 = \x1 + 180;
              \start1 = \x1 - 20;
              \xend1 = \start1 - 25;
              \start2 = \x2 - 20;
              \xend2 = \start2 - 25;}
    
    \node[point1] at (\x1:1.4) {1};
    \node[point2] at (\x2:1.4) {2};
    
    \matrix[mat, anchor=north] (m) at (0, -2.5) {
            \smn\text{½}\sqrt{2} && \text{½}\sqrt{2} \\
            \text{½}\sqrt{2} && \smn\text{½}\sqrt{2} \\
            };
            
    \draw[->,very thick] (\start1:1.4) arc[start angle=\start1, end angle=\xend1, radius=1.4];
    \draw[->,very thick] (\start2:1.4) arc[start angle=\start2, end angle=\xend2, radius=1.4];
    
    \node[anchor=east,font=\large] at (m.west) {$g(\mathbf{x})=~$};
    
    \node[inner sep=7em] at (0, 0) {};
\end{tikzpicture}}
        \caption{}
        \label{fig:two_points_rotating_a}
    \end{subfigure}
    \quad
    \begin{subfigure}[b]{0.3\textwidth}
        \resizebox{\textwidth}{!}{\begin{tikzpicture}[point1/.style={draw, circle, very thick, inner sep=0.5em,fill=edblue!60!white},
                        point2/.style={draw=edred,circle, very thick, inner sep=0.5em,pattern=south east lines, pattern color=edred},
                        mat/.style={matrix of math nodes, font=\large, left delimiter=(, right delimiter=),
                                    inner sep=0.15em, row sep=0.75em, column sep=0.5em,
                                    row 1/.style={nodes={color=edblue}},
                                    row 2/.style={nodes={color=edred}}}]
    \tikzmath{\x1 = 90;}
    \tikzmath{\x2 = \x1 + 180;
              \start1 = \x1 - 20;
              \xend1 = \start1 - 25;
              \start2 = \x2 - 20;
              \xend2 = \start2 - 25;}
    
    \node[point1] at (\x1:1.4) {1};
    \node[point2] at (\x2:1.4) {2};
    
    \matrix[mat, anchor=north] (m) at (0, -2.5) {
            0 && 1 \\
            0 && \smn1 \\
            };
            
    \draw[->,very thick] (\start1:1.4) arc[start angle=\start1, end angle=\xend1, radius=1.4];
    \draw[->,very thick] (\start2:1.4) arc[start angle=\start2, end angle=\xend2, radius=1.4];
    
    \node[anchor=east,font=\large] at (m.west) {$g(\mathbf{x})=~$};
    
    \node[inner sep=7.5em] at (0, 0) {};
\end{tikzpicture}}
        \caption{}
        \label{fig:two_points_rotating_b}
    \end{subfigure}
    \quad
    \begin{subfigure}[b]{0.3\textwidth}
        \resizebox{\textwidth}{!}{\begin{tikzpicture}[point1/.style={draw, circle, very thick, inner sep=0.5em,fill=edblue!60!white},
                        point2/.style={draw=edred,circle, very thick, inner sep=0.5em,pattern=south east lines, pattern color=edred},
                        mat/.style={matrix of math nodes, font=\large, left delimiter=(, right delimiter=),
                                    inner sep=0.05em, row sep=0.5em, column sep=0.25em,
                                    row 1/.style={nodes={color=edblue}},
                                    row 2/.style={nodes={color=edred}}}]
    \tikzmath{\x1 = 45;}
    \tikzmath{\x2 = \x1 + 180;
              \start1 = \x1 - 20;
              \xend1 = \start1 - 25;
              \start2 = \x2 - 20;
              \xend2 = \start2 - 25;}
    
    \node[point2] at (\x1:1.4) {1};
    \node[point1] at (\x2:1.4) {2};
    
    \matrix[mat, anchor=north] (m) at (0, -2.5) {
            \smn\text{½}\sqrt{2} && \smn\text{½}\sqrt{2} \\
            \text{½}\sqrt{2} && \text{½}\sqrt{2} \\
            };
            
    \draw[->,very thick] (\start1:1.4) arc[start angle=\start1, end angle=\xend1, radius=1.4];
    \draw[->,very thick] (\start2:1.4) arc[start angle=\start2, end angle=\xend2, radius=1.4];
    
    \node[anchor=east,font=\large] at (m.west) {$g(\mathbf{x})=~$};
    
    \node[inner sep=7em] at (0, 0) {};
\end{tikzpicture}}
        \caption{}
        \label{fig:two_points_rotating_c}
    \end{subfigure}
    
    \medskip
    
    \begin{subfigure}[b]{0.3\textwidth}
        \resizebox{\textwidth}{!}{\begin{tikzpicture}[point1/.style={draw, circle, very thick, inner sep=0.5em,fill=edblue!60!white},
                        point2/.style={draw=edred,circle, very thick, inner sep=0.5em,pattern=south east lines, pattern color=edred},
                        mat/.style={matrix of math nodes, font=\large, left delimiter=(, right delimiter=),
                                    inner sep=0.15em, row sep=0.75em, column sep=0.5em,
                                    row 1/.style={nodes={color=edblue}},
                                    row 2/.style={nodes={color=edred}}}]
    \tikzmath{\x1 = 0;}
    \tikzmath{\x2 = \x1 + 180;
              \start1 = \x1 - 20;
              \xend1 = \start1 - 25;
              \start2 = \x2 - 20;
              \xend2 = \start2 - 25;}
    
    \node[point2] at (\x1:1.4) {1};
    \node[point1] at (\x2:1.4) {2};
    
    \matrix[mat, anchor=north] (m) at (0, -2.5) {
            \smn1 && 0 \\
            1 && 0 \\
            };
            
    \draw[->,very thick] (\start1:1.4) arc[start angle=\start1, end angle=\xend1, radius=1.4];
    \draw[->,very thick] (\start2:1.4) arc[start angle=\start2, end angle=\xend2, radius=1.4];
    
    \node[anchor=east,font=\large] at (m.west) {$g(\mathbf{x})=~$};
    
    \node[inner sep=7.5em] at (0, 0) {};
\end{tikzpicture}}
        \caption{}
    \end{subfigure}
    \quad
    \begin{subfigure}[b]{0.3\textwidth}
        \resizebox{\textwidth}{!}{\begin{tikzpicture}[point1/.style={draw, circle, very thick, inner sep=0.5em,fill=edblue!60!white},
                        point2/.style={draw=edred,circle, very thick, inner sep=0.5em,pattern=south east lines, pattern color=edred},
                        mat/.style={matrix of math nodes, font=\large, left delimiter=(, right delimiter=),
                                    inner sep=0.05em, row sep=0.5em, column sep=0.25em,
                                    row 1/.style={nodes={color=edblue}},
                                    row 2/.style={nodes={color=edred}}}]
    \tikzmath{\x1 = -45;}
    \tikzmath{\x2 = \x1 + 180;
              \start1 = \x1 - 17;
              \xend1 = \start1 - 25;
              \start2 = \x2 - 17;
              \xend2 = \start2 - 25;}
    
    \node[point2] at (\x1:1.4) {1};
    \node[point1] at (\x2:1.4) {2};
    
    \matrix[mat, anchor=north] (m) at (0, -2.5) {
            \smn\text{½}\sqrt{2} && \text{½}\sqrt{2} \\
            \text{½}\sqrt{2} && \smn\text{½}\sqrt{2} \\
            };
    
    \node[inner sep=7em] at (0, 0) {};
    
    \node[anchor=east,font=\large] at (m.west) {$g(\mathbf{x})=~$};
\end{tikzpicture}}
        \caption{}
        \label{fig:two_points_rotating_e}
    \end{subfigure}
    
    \caption[Illustration of a quasi-sorting function $g$ on $\mathbb{R}^{2\times 2}$.]{Rotating a pair of points around their midpoint, and seeing how the canonical ordering changes. Figures \ref{fig:two_points_rotating_a} and \ref{fig:two_points_rotating_e} represent the same set, so the canonical ordering $g(\mathbf{x})$ must be the same in both cases. This is visually represented by the colours of the points, where the solid blue point occupies the first row of $g(\mathbf{x})$ and the striped red point occupies the second row of $g(\mathbf{x})$. In order for the colours in \ref{fig:two_points_rotating_a} and \ref{fig:two_points_rotating_e} to match, point ``$1$'' must switch from blue to red at some point during the 180-degree rotation from \ref{fig:two_points_rotating_a} to \ref{fig:two_points_rotating_e}. This switching of colours represents swapping the rows of $g(\mathbf{x})$, which is a discontinuity.}
    \label{fig:two_points_rotating}
\end{figure}
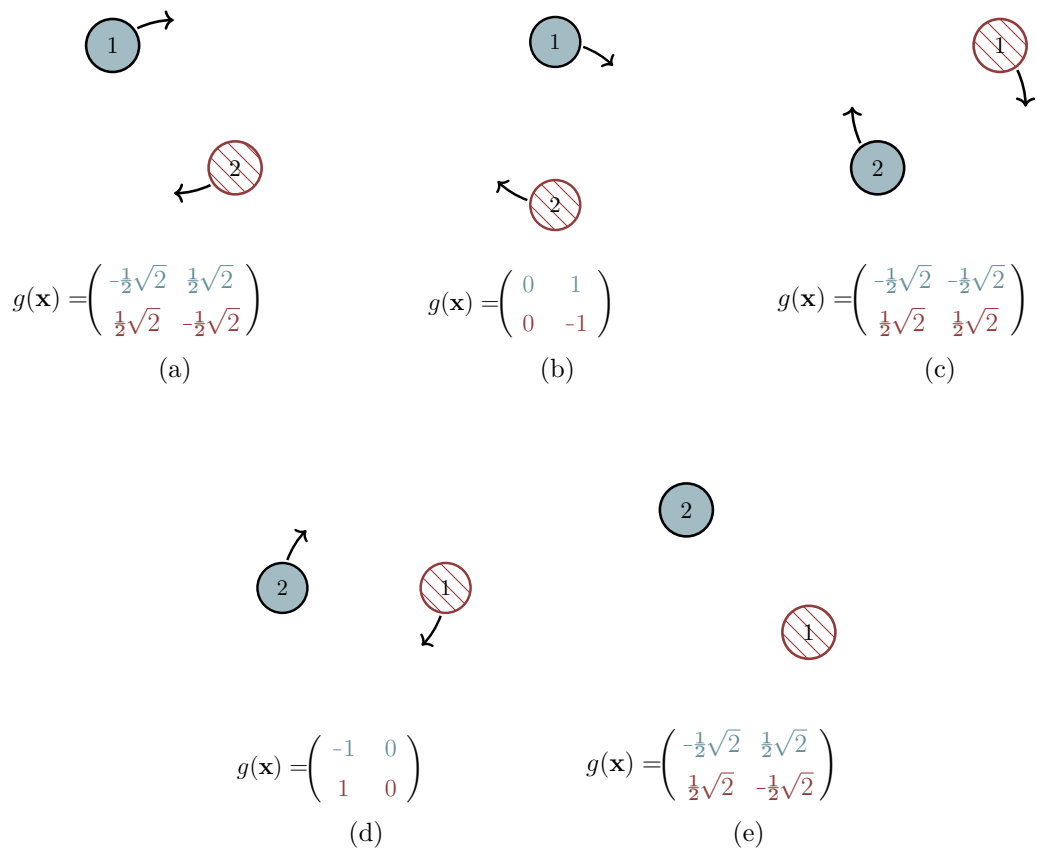

Consider the simplest case, a set of two 2-dimensional elements, and let $g$ be a quasi-sorting function.
We can visualise a set of two 2-dimensional elements as being a pair of points in the plane.
As is the case throughout this paper, we are working with an ordered representation of this set, in this case as an element $\mathbf{x}$ of $\mathbb{R}^{2\times 2}$. 
That is, we have a $2\times 2$ matrix $\mathbf{x}$, where each row gives the coordinates of one point.

Now rotate this pair of points around their midpoint, in a continuous motion, through an angle of 180 degrees.
This motion is illustrated in \Cref{fig:two_points_rotating}, with the point in the first row of $\mathbf{x}$ labelled ``$1$'' and the point in the second row labelled ``$2$''.
The points are coloured according to which row they belong to in the canonical permutation $g(\mathbf{x})$, with solid blue indicating the first row and striped red indicating the second row.

After a 180 degree rotation, the set is the same as it was at the beginning.
The canonical permutation, i.e. the colouring, is also the same as at the beginning.
This means that at some point during the rotation, the colour of each point has switched -- in \Cref{fig:two_points_rotating}, this happens between \ref{fig:two_points_rotating_b} and \ref{fig:two_points_rotating_c}.
This switching of colours corresponds to swapping the rows of the output matrix.
Since the rows of the matrix are distinct (because the points representing each row are a constant distance apart), this swapping is discontinuous, and therefore $g$ is not continuous.

\section{Proofs and Additional Results\label{sec:proofs_appendix}}

\begin{restatable}{theorem}{qdiscontinuous}
\label{thm:q_discontinuous}
There exist functions $f : 2^\mathbb{Q} \to \mathbb{R}$ such that, whenever $(\rho, \phi)$ is a sum-decomposition of $f$ via $\mathbb{R}$, $\phi$ is discontinuous at every point $q \in \mathbb{Q}$.
\end{restatable}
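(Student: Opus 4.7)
The plan is to choose $f$ to be an injective encoding of subsets of $\mathbb{Q}$ and then show that continuity of $\phi$ at any $q_0 \in \mathbb{Q}$ collapses two distinct subsets to the same $f$-value. Concretely, I would fix a bijection $c \colon \mathbb{Q} \to \mathbb{N}$ and set
\[
f(\mathbf{x}) := \sum_{q \in \mathbf{x}} 4^{-c(q)},
\]
which is well-defined, bounded, and injective on $2^{\mathbb{Q}}$ (the base-$4$ digits of $f(\mathbf{x})$ recover membership in $\mathbf{x}$). By \Cref{ori_countable_theorem}, $f$ admits at least one sum-decomposition via $\mathbb{R}$, so the statement is non-vacuous.

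Let $(\rho, \phi)$ be any sum-decomposition of $f$ and suppose for contradiction that $\phi$ is continuous at some $q_0 \in \mathbb{Q}$; set $a := \phi(q_0)$. If $a = 0$, then $\Phi(\{q_0\}) = 0 = \Phi(\emptyset)$, giving $f(\{q_0\}) = \rho(0) = f(\emptyset)$, which contradicts the injectivity of $f$. If $a \neq 0$, continuity yields $\delta > 0$ with $|\phi(q) - a| < |a|/2$ for every $q \in U := \mathbb{Q} \cap (q_0 - \delta, q_0 + \delta)$; each such $\phi(q)$ then has the same sign as $a$ and magnitude exceeding $|a|/2$. Because $U$ is countably infinite (by density of $\mathbb{Q}$), $\sum_{q \in U} |\phi(q)| = \infty$, and the convention of \Cref{sec:infinite_sums} gives $\Phi(U) = \Phi(U \setminus \{q_0\}) = \pm \infty$ with matching sign. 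Hence $f(U) = \rho(\Phi(U)) = \rho(\Phi(U \setminus \{q_0\})) = f(U \setminus \{q_0\})$, again contradicting injectivity.

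The delicate step is the use of the convention $\Phi(X) = \infty$ for divergent set-sums to equate $f(U)$ with $f(U \setminus \{q_0\})$. If one instead insists that $\Phi$ be real-valued on every $X \in 2^{\mathbb{Q}}$ -- the natural reading when $\rho \colon \mathbb{R} \to \mathbb{R}$ -- then absolute summability of $\phi$ over $\mathbb{Q}$ is forced, and the $a \neq 0$ case collapses even more directly: the local lower bound $\sum_{q \in U} |\phi(q)| \geq \sum_{q \in U} \tfrac{|a|}{2} = \infty$ already contradicts $\sum_{q \in \mathbb{Q}} |\phi(q)| < \infty$. Either formulation closes the argument, so $\phi$ must be discontinuous at every $q_0 \in \mathbb{Q}$.
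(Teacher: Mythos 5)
Your proof is correct, and it takes a genuinely different route from the paper. The paper uses $f=\texttt{sup}$ (restricted to bounded sets), first shows $\phi$ is everywhere nonzero, then shows $\Phi$ is finite on bounded sets, then shows only finitely many $x$ in any bounded set can have $\phi(x)>1/n$, and finally exhibits a sequence $q_n\to q$ with $\phi(q_n)\to 0\neq\phi(q)$ to witness discontinuity. You instead reuse the injective encoding $f(\mathbf{x})=\sum_{q\in\mathbf{x}}4^{-c(q)}$ from the proof of \Cref{ori_countable_theorem} as the target function, and run the contradiction the other way: assuming continuity at $q_0$, either $\phi(q_0)=0$ (which collapses $f(\{q_0\})$ onto $f(\emptyset)$) or $\phi$ is bounded away from $0$ on a rational neighbourhood of $q_0$ (which forces $\sum_{q\in U}|\phi(q)|=\infty$, incompatible with $\Phi$ being well-defined on all of $2^{\mathbb{Q}}$). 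Both proofs exploit the same underlying tension --- $\phi$ must be nonzero everywhere yet summable on infinite sets --- but your injectivity-based target makes the ``$\phi(q_0)\neq 0$'' step immediate and the divergence step a one-line estimate, whereas the paper's monotone target requires the intermediate ``only finitely many large values'' lemma. One thing worth tightening: the paper's convention in \Cref{sec:infinite_sums} sets $\Phi(X)=\infty$ (not $\pm\infty$) whenever the sum fails to converge absolutely, so your ``with matching sign'' clause is unnecessary; more importantly, since $f$ is defined on \emph{all} of $2^{\mathbb{Q}}$ and $\rho:\mathbb{R}\to\mathbb{R}$, your cleaner second formulation (absolute summability over $\mathbb{Q}$ is forced, contradicting the local lower bound) is the one to lead with, as it avoids any appeal to an extended-real-valued $\rho$.
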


\begin{restatable}{theorem}{uncdiscontinuous}
\label{thm:uncountable_discontinuous}
Let $f : \mathbb{R}^{\mathcal{F}} \to \mathbb{R}$. Then $f$ is sum-decomposable via $\mathbb{R}$.
\end{restatable}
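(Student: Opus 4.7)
The plan is to follow the strategy of the countable case (Theorem \ref{ori_countable_theorem}), namely to construct $\phi$ such that $\Phi(\mathbf{x}) = \sum_{x \in \mathbf{x}} \phi(x)$ is injective on finite subsets of $\mathbb{R}$, and then set $\rho = f \circ \Phi^{-1}$ on the image of $\Phi$ (and arbitrarily elsewhere). Since $\mathbb{R}$ is uncountable, the base-$4$ trick used for $\mathbb{Q}$ is unavailable, so I would replace the counting function with a tool powerful enough to witness injectivity of $\Phi$ on all finite subsets of $\mathbb{R}$ simultaneously.

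Concretely, my construction would invoke the axiom of choice to pick a Hamel basis $H \subset \mathbb{R}$ of $\mathbb{R}$ as a vector space over $\mathbb{Q}$. Standard cardinal arithmetic gives $|H| = |\mathbb{R}|$, so I can fix a bijection $\phi : \mathbb{R} \to H$. I would then set $\Phi(\mathbf{x}) := \sum_{x \in \mathbf{x}} \phi(x)$ and verify injectivity as follows: given finite subsets $S_1, S_2 \subset \mathbb{R}$ with $\Phi(S_1) = \Phi(S_2)$, let $T_i = S_i \setminus S_{3-i}$, so that $\sum_{x \in T_1} \phi(x) - \sum_{x \in T_2} \phi(x) = 0$. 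Because $T_1$ and $T_2$ are disjoint and $\phi$ is injective, this equation is a $\mathbb{Z}$-linear relation among distinct elements of $H$ with coefficients in $\{-1, +1\}$. Linear independence of $H$ over $\mathbb{Q}$ forces all coefficients to vanish, giving $T_1 = T_2 = \emptyset$, hence $S_1 = S_2$. Defining $\rho$ as above then yields $f = \rho \circ \Phi$.

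The step that carries the real content is the existence of a $\mathbb{Q}$-linearly independent set in $\mathbb{R}$ of cardinality continuum, which is exactly what the Hamel-basis construction (and hence the axiom of choice) provides. There is no hard analysis left to do after this: the injectivity calculation is a one-line consequence of linear independence, and the construction of $\rho$ is purely set-theoretic. It is worth emphasising in the write-up that the resulting $\phi$ is as wild as possible (it is a bijection onto a Hamel basis, which is necessarily non-measurable and nowhere continuous), reinforcing the paper's point that the jump from Theorem \ref{ori_countable_theorem} to Theorem \ref{thm:uncountable_discontinuous} buys universality only at the cost of all computability, and complementing Theorem \ref{thm:q_discontinuous}.
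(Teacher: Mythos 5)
Your proof is correct, and it reaches the same destination as the paper's but by a cleaner road. The paper runs a bespoke Zorn's Lemma argument: it defines a set to be \emph{finite-sum-distinct} (f.s.d.) if no two distinct finite subsets share the same sum, partially orders the f.s.d. subsets of $\mathbb{R}$ by inclusion, shows chains have upper bounds, extracts a maximal f.s.d.\ set $D_\text{max}$, and then separately proves $|D_\text{max}| = \mathfrak{c}$ via a counting argument on the ``forbidden elements'' a smaller f.s.d.\ set could still absorb. You replace this entire construction with a single invocation of the standard fact that $\mathbb{R}$ has a Hamel basis $H$ over $\mathbb{Q}$ with $|H| = \mathfrak{c}$, and observe that $\mathbb{Q}$-linear independence is strictly stronger than the f.s.d.\ property (a $\mathbb{Z}$-linear relation with coefficients in $\{\smn1, +1\}$ among distinct basis elements is in particular a nontrivial $\mathbb{Q}$-linear relation). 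What your route buys is brevity and the ability to cite off-the-shelf results for both existence and cardinality, rather than re-deriving a Zorn-plus-cardinal-arithmetic argument in the special case; what the paper's route buys is self-containedness and a construction that uses exactly the property needed (f.s.d.) rather than a stronger one, which may be a pedagogically useful distinction given the paper's emphasis on how much set-theoretic machinery is required here. Both proofs live equally far beyond computability, as you correctly flag, which is precisely the point the paper wants to make in pairing this result with Theorem~\ref{thm:q_discontinuous}.
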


\begin{restatable}{theorem}{uncfinite}
\label{thm:uncountable_only_finite_subsets}
If $\mathfrak{U}$ is uncountable, then there exist functions $f : 2^\mathfrak{U} \to \mathbb{R}$ which are not sum-decomposable. This holds even if the sum-decomposition $(\rho, \phi)$ is allowed to be discontinuous.
\end{restatable}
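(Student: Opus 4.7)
The plan is to exploit the fact, noted in the remark on infinite sums, that a sum of reals indexed by an uncountable set converges only when all but countably many summands vanish. This forces any encoder $\phi : \mathfrak{U} \to \mathbb{R}$ used in a sum-decomposition to be supported on a countable subset of $\mathfrak{U}$, and therefore $\Phi$ cannot distinguish between subsets of $\mathfrak{U}$ that agree on this countable support. The target function I will use is the simplest possible witness: define $f : 2^\mathfrak{U} \to \mathbb{R}$ by $f(X) = 1$ if $X$ is uncountable and $f(X) = 0$ if $X$ is countable (including finite and empty). This definition is agnostic to the structure of $\mathfrak{U}$ beyond its uncountability, and crucially it remains non-constant when restricted to $2^{\mathfrak{U} \setminus Z}$ for any countable $Z$.

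First I would suppose, for contradiction, that $(\rho, \phi)$ is a sum-decomposition of $f$ via $\mathbb{R}$. Since the sum-decomposition must produce a real number for every $X \in 2^\mathfrak{U}$ (including $X = \mathfrak{U}$), the appeal to the infinite-sums remark shows that $Z := \{x \in \mathfrak{U} : \phi(x) \neq 0\}$ must be at most countable. Next I would observe that $\mathfrak{U} \setminus Z$ is uncountable (as the complement of a countable subset in an uncountable set), so it contains both a countable subset $X_1$ (e.g. any single-element subset, or the empty set) and an uncountable subset $X_2$ (e.g. $\mathfrak{U} \setminus Z$ itself). Both are disjoint from $Z$, so $\Phi(X_1) = \sum_{x \in X_1} \phi(x) = 0 = \sum_{x \in X_2} \phi(x) = \Phi(X_2)$.

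The contradiction is now immediate: applying $\rho$ to the common value $\Phi(X_1) = \Phi(X_2) = 0$ must give a single real, yet by construction $f(X_1) = 0 \neq 1 = f(X_2)$. Hence no such $(\rho, \phi)$ can exist, and $f$ is not sum-decomposable. Note that continuity of $\rho$ and $\phi$ is never invoked in the argument, so the conclusion applies even to discontinuous decompositions, as claimed.

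I do not anticipate any substantive obstacle beyond being careful about the definition of $\Phi$ on uncountable subsets. The only subtle point is justifying that $\phi$ must vanish off a countable set; this is the content of the remark in \Cref{sec:infinite_sums}, which I would cite rather than reprove. If one prefers to avoid that remark, the same conclusion can be obtained directly by noting that the sets $Z_n := \{x : |\phi(x)| > 1/n\}$ must each be finite (otherwise $\Phi(\mathfrak{U})$ diverges), so $Z = \bigcup_n Z_n$ is countable, but this is a minor elaboration rather than a genuine difficulty.
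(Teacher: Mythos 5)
Your proof is correct, and it takes a genuinely different route from the paper's. The paper uses $f(X) = \texttt{sup}(X)$ and leans on the machinery it built for the countable case (\Cref{thm:q_discontinuous}): it first establishes that $\phi(x) \neq 0$ for \emph{every} $x$ (by the supremum-shifting argument), then observes that a convergent $\Phi$ forces the support of $\phi$ to be countable, yielding a contradiction since $\mathfrak{U}$ is uncountable. Your argument, by contrast, uses the indicator of uncountability as the witness and does not need $\phi$ to be everywhere nonzero — you only need the support $Z$ to be countable, and then you exhibit two subsets of the uncountable complement $\mathfrak{U}\setminus Z$ with $\Phi=0$ but different cardinality classes, which $f$ separates but $\rho$ cannot. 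This buys you two things the paper's argument does not have: it works for an abstract uncountable universe $\mathfrak{U}$ without any implicit order structure (the paper's $\texttt{sup}$ presupposes an ordering or an embedding into $\mathbb{R}$), and it is self-contained rather than piggybacking on the earlier countable-case proof. Two small points to tighten: first, the theorem asserts ``not sum-decomposable'' without fixing the latent space, so state the argument for $\phi:\mathfrak{U}\to\mathbb{R}^N$ — it goes through unchanged, as absolute convergence is componentwise, so $Z = \{x: \phi(x)\neq\mathbf 0\}$ is still countable. Second, the paper in \Cref{thm:q_discontinuous} is careful to also rule out decompositions where $\rho$ is extended to $\mathbb{R}\cup\{\infty\}$; your ``must produce a real number'' step silently adopts the strict convention. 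The extended case is easy for your $f$: if $Z$ is uncountable then some $Z_n = \{x : |\phi(x)| > 1/n\}$ is infinite, so a countably infinite $X_1\subset Z_n$ has $\Phi(X_1)=\infty$ with $f(X_1)=0$, while $\Phi(Z)=\infty$ with $f(Z)=1$, forcing $\rho(\infty)$ to take two values — but as written you should flag that you are relying on the strict reading, or add this one sentence.
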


\subsection{Proof of \Cref{thm:q_discontinuous}}

\qdiscontinuous*

\begin{proof}
Consider $f(X) = \texttt{sup}(X)$, the least upper bound of $X$. Write $\Phi(X) = \Sigma_{x \in X} \phi(x)$. So we have:

\begin{displaymath}
\texttt{sup}(X) = \rho(\Phi(X))
\end{displaymath}

First note that $\phi(q) \neq 0$ for any $q \in \mathbb{Q}$. If we had $\phi(q) = 0$, then we would have, for every $X \subset \mathbb{Q}$:

\begin{displaymath}
\Phi(X) = \Phi(X) + \phi(q) = \Phi(X \cup \{q\})
\end{displaymath}

But then, for instance, we would have: 

\begin{displaymath}
q = \texttt{sup}(\{q-1, q\}) = \texttt{sup}(\{q-1\}) = q-1
\end{displaymath}

This is a contradiction, so $\phi(q) \neq 0$.

Next, note that $\Phi(X)$ must be finite for every upper-bounded $X \subset \mathbb{Q}$ (since \texttt{sup} is undefined for unbounded $X$, we do not consider such sets, and may allow $\Phi$ to diverge). Even if we allowed the domain of $\rho$ to be $\mathbb{R} \cup \{\infty\}$, suppose $\Phi(X) = \infty$ for some upper-bounded set $X$. Then:

\begin{eqnarray*}
\texttt{sup}(X) & = & \rho(\Phi(X)) \\
& = & \rho(\infty) \\
& = & \rho(\infty + \phi(\texttt{sup(X) + 1})) \\
& = & \rho(\Phi(X \cup \{\texttt{sup}(X) + 1\})) \\
& = & \texttt{sup}(X \cup \{\texttt{sup}(X) + 1\}) \\
& = & \texttt{sup}(X) + 1
\end{eqnarray*}

This is a contradiction, so $\Phi(X) < \infty$ for any upper-bounded set $X$.

Now from the above it is immediate that, for any upper-bounded set $X$, only finitely many $x \in X$ can have $\phi(x) > \frac{1}{n}$. Otherwise we can find an infinite upper-bounded set $Y \subset X$ with $\phi(y) > \frac{1}{n}$ for every $y \in Y$, and $\Phi(Y) = \infty$.

Finally, let $q \in \mathbb{Q}$. We have already shown that $\phi(q) \neq 0$, and we will now construct a sequence $q_n$ with:

\begin{enumerate}
    \item $q_n \to q$
    \item $\phi(q_n) \to 0$
\end{enumerate}

If $\phi$ were continuous at $q$, we would have $\phi(q_n) \to \phi(q)$, so the above two points together will give us that $\phi$ is discontinuous at $q$. 

So now, for each $n \in \mathbb{N}$, consider the set $B_n$ of points which lie within $\frac{1}{n}$ of $q$. Since only finitely many points $p \in B_n$ have $\phi(p) > \frac{1}{n}$, and $B_n$ is infinite, there must be a point $q_n \in B_n$ with $\phi(q_n) < \frac{1}{n}$. The sequence of such $q_n$ clearly satisfies both points above, and so $\phi$ is discontinuous everywhere.
\end{proof}

\subsection{Proof of \Cref{thm:uncountable_discontinuous}}

\uncdiscontinuous*

\begin{proof}
Define $\Phi : \mathbb{R}^{\mathcal{F}} \to \mathbb{R}$ by $\Phi(X) = \Sigma_{x \in X}\phi(x)$. If we can demonstrate that there exists some $\phi$ such that $\Phi$ is injective, then we can simply choose $\rho = f \circ \Phi^{-1}$ and the result is proved.

Say that a set $X \subset \mathbb{R}$ is \emph{finite-sum-distinct} (f.s.d.) if, for any finite subsets $A, B \subset X$, $\Sigma_{a \in A}a \neq \Sigma_{b \in B}b$. Now, if we can show that there is a finite-sum-distinct set $D$ with the same cardinality as $\mathbb{R}$ (we denote $|\mathbb{R}|$ by $\mathfrak{c}$), then we can simply choose $\phi$ to be a bijection from $\mathbb{R}$ to $D$. Then, by finite-sum-distinctness, $\Phi$ will be injective, and the result is proved.

Now recall the statement of Zorn's Lemma: suppose $\mathcal{P}$ is a partially ordered set (or \emph{poset}) in which every totally ordered subset has an upper bound. Then $\mathcal{P}$ has a maximal element.

The set of all f.s.d. subsets of $\mathbb{R}$ (which we will denote $\mathcal{D}$) forms a poset ordered by inclusion. Supposing that $\mathcal{D}$ satisfies the conditions of Zorn's Lemma, it must have a maximal element, i.e. there is a f.s.d. set $D_\text{max}$ such that any set $E$ with $D_\text{max} \subsetneq E$ is not f.s.d. We claim that $D_\text{max}$ has cardinality $\mathfrak{c}$. 

To see this, let $D$ be a f.s.d. set with infinite cardinality $\kappa < \mathfrak{c}$ (any maximal $D$ clearly cannot be finite). We will show that $D \neq D_\text{max}$. Define the \emph{forbidden elements} with respect to $D$ to be those elements $x$ of $\mathbb{R}$ such that $D \cup \{x\}$ is not f.s.d. We denote this set of forbidden elements $F_D$. Now note that, if $D$ is maximal, then $D \cup F_D = \mathbb{R}$. In particular, this implies that $|F_D|=\mathfrak{c}$. But now consider the elements of $F_D$. By definition of $F_D$, we have that $x \in F_D$ if and only if $\exists c_1, ..., c_m, d_1, ..., d_n \in D$ such that $c_1 + ... + c_m + x = d_1 + ... + d_n$. So we can write $x$ as a sum of finitely many elements of $D$, minus a sum of finitely many other elements of $D$. So there is a surjection from pairs of finite sets of $D$ to elements of $F_D$. i.e.:

\begin{displaymath}
|F_D| \leq |D^{\mathcal{F}} \times D^{\mathcal{F}}|
\end{displaymath}

But since $D$ is infinite:

\begin{displaymath}
|D^{\mathcal{F}} \times D^{\mathcal{F}}| = |D| = \kappa < \mathfrak{c}
\end{displaymath}

So $|F_D| < \mathfrak{c}$, and therefore $|D|$ is not maximal. This demonstrates that $D_\text{max}$ must have cardinality $\mathfrak{c}$.

To complete the proof, it remains to show that $\mathcal{D}$ satisfies the conditions of Zorn's Lemma, i.e. that every totally ordered subset (or \emph{chain}) $\mathcal{C}$ of $\mathcal{D}$ has an upper bound. So consider: 

\begin{displaymath}
C_\text{ub} = \bigcup \mathcal{C} = \bigcup_{C\in\mathcal{C}} C
\end{displaymath}

We claim that $C_\text{ub}$ is an upper bound for $\mathcal{C}$. It is clear that $C \subset C_\text{ub}$ for every $C \in \mathcal{C}$, so it remains to be shown that $C_\text{ub} \in \mathcal{D}$, i.e. that $C_\text{ub}$ is f.s.d.

We proceed by contradiction. Suppose that $C_\text{ub}$ is not f.s.d. Then: 

\begin{equation}
\label{eq:notfsd}
\exists c_1, ..., c_m, d_1, ..., d_n \in C_\text{ub} : \Sigma_i c_i = \Sigma_j d_j
\end{equation}

But now by construction of $C_\text{ub}$ there must be sets $C_1, ..., C_m, D_1, ..., D_m \in \mathcal{C}$ with $c_i \in C_i, d_j \in D_j$. Let $\mathcal{B} = \{C_i\}_{i=1}^m \cup \{D_j\}_{j=1}^n$. $\mathcal{B}$ is totally ordered by inclusion and all sets contained in it are f.s.d., since it is a subset of $\mathcal{C}$. Since $\mathcal{B}$ is finite it has a maximal element $B_\text{max}$. By maximality, we have $c_i, d_j \in B_\text{max}$ for all $c_i, d_j$. But then by \eqref{eq:notfsd}, $B_\text{max}$ is not f.s.d., which is a contradiction. So we have that $C_\text{ub}$ is f.s.d. 

In summary: 

\begin{enumerate}
\item $\mathcal{D}$ satisfies the conditions of Zorn's Lemma.
\item Therefore there exists a maximal f.s.d. set, $D_\text{max}$.
\item We have shown that any such set must have cardinality $\mathfrak{c}$.
\item Given an f.s.d. set $D_\text{max}$ with cardinality $\mathfrak{c}$, we can choose $\phi$ to be a bijection between $\mathbb{R}$ and $D_\text{max}$.
\item Given such a $\phi$, we have that $\Phi(X)=\Sigma_{x \in X} \phi(x)$ is injective on $R^{\mathcal{F}}$.
\item Given injective $\Phi$, choose $\rho = f \circ \Phi^{-1}$.
\item This choice gives us $f(X) = \rho(\Sigma_{x \in X}\phi(x))$ by construction.
\end{enumerate}

This completes the proof.
\end{proof}

\subsection{Proof of \Cref{thm:uncountable_only_finite_subsets}}
\uncfinite*

\begin{proof}
Consider $f(X) = \texttt{sup}(X)$.

As discussed above, a sum over uncountably many elements can converge only if countably many elements are non-zero. But as in the proof of \Cref{thm:q_discontinuous}, $\phi(x) \neq 0$ for any $x$. So it is immediate that sum-decomposition is not possible for functions operating on uncountable subsets of $\mathfrak{U}$.

Even restricting to countable subsets is not enough. As in the proof of \Cref{thm:q_discontinuous}, we must have that for each $n \in \mathbb{N}$, $\phi(x) > \frac{1}{n}$ for only finitely many $x$. But then if this is the case, let $\mathfrak{U}_n$ be the set of all $x \in \mathfrak{U}$ with $\phi(x) > \frac{1}{n}$. Since $\phi(x) \neq 0$, we know that $\mathfrak{U} = \bigcup \mathfrak{U}_n$. But this is a countable union of finite sets, which is impossible because $\mathfrak{U}$ is uncountable.

\end{proof}

\subsection{Proof of \Cref{cor:ori_uncountable_theorem}}

\oriunc*

\begin{proof}
The reverse implication is clear. We already know from \citet{Zaheer2017} that the function $\Phi: \Delta_M \to \mathbb{R}^{M+1}$ defined as follows is a homeomorphism onto its image:

\begin{gather*}
\Phi_q(X) = \sum_{m=1}^{M} \phi_q(x_m), \quad q = 0,\dots,M \\
\phi_q(x) = x^q, \quad q = 0,\dots,M
\end{gather*}

Now define $\widetilde{\Phi} \to \mathbb{R}^M$ by

\begin{gather*}
\widetilde{\Phi}_q(X) = \sum_{m=1}^{M} \widetilde{\phi}_q(x_m), \quad q = 1,\dots,M \\
\widetilde{\phi}_q(x) = x^q, \quad q = 1,\dots,M.
\end{gather*}

Note that $\Phi_0(X)=M$ for all $X$, so $\text{Im}(\Phi) = \{M\} \times \text{Im}(\widetilde{\Phi})$. Since $\{M\}$ is a singleton, these two images are homeomorphic, with a homeomorphism given by:

\begin{gather*}
\gamma : \text{Im}(\widetilde{\Phi}) \to \text{Im}(\Phi) \\
\gamma(x_1,\dots,x_M) = (M, x_1, \dots, x_M)
\end{gather*}

Now by definition, $\widetilde{\Phi} = \gamma^{-1} \circ \Phi$. Since this is a composition of homeomorphisms, $\widetilde{\Phi}$ is also a homeomorphism. Therefore $(f \circ \widetilde{\Phi}^{-1}, \widetilde{\phi})$ is a continuous sum-decomposition of $f$ via $\mathbb{R}^M$. 
\end{proof}

\subsection{Proof of \Cref{thm:arbitrary_set_sizes}}

\arbitrary*

\begin{proof}
We use the adapted sum-of-power mapping $\widetilde{\Phi}$ from above, denoted in this section by $\Phi$
\begin{gather*}
\Phi_q(X) = \sum_{m=1}^{M} \phi_q(x_m), \quad q = 1,\dots,M \\
\phi_q(x_m) = (x_m)^q, \quad q = 1,\dots,M
\end{gather*}
which is shown above to be injective (up to reordering input sets). We separate $\Phi_q(X)$ into two terms:
\begin{equation}
    \Phi_q(X) = \sum_{m=1}^{M'} \phi_q(x_m) + \sum_{m=M'+1}^{M} \phi_q(x_m)
\end{equation}
For an input set $X$ with $M'=M-P$ elements, with $P \geq 0$, we say that the set contains $M'$ ``actual elements'' as well as $P$ ``empty" elements which are not part of the input set.
Those $P$ ``empty elements'' can be regarded as place fillers when the size of the input set is smaller than $M$, i.e. when $M' < M$.

We fill in these $P$ elements with a constant value $k \notin [0, 1]$, preserving the injectivity of $\Phi_q(X)$ for input sets $X$ of arbitrary size $M'$:
\begin{equation}
\label{eq:split_e}
    \Phi_q(X) = \sum_{m=1}^{M'} \phi_q(x_m) + \sum_{m=M'+1}^{M} \phi_q(k)
\end{equation}

\Cref{eq:split_e} is no longer strictly speaking a sum-decomposition.
This can be fixed by re-arranging the expression:
\begin{equation}
\label{eq:split_e_rearranged}
    \begin{split}
    \Phi_q(X) & = \sum_{m=1}^{M'} \phi_q(x_{m}) + \sum_{m=M'+1}^{M} \phi_q(k) \\
           & = \sum_{m=1}^{M'} \phi_q(x_{m}) + \sum_{m=1}^{M} \phi_q(k) - \sum_{m=1}^{M'} \phi_q(k) \\
           & = \sum_{m=1}^{M'} \left[ \phi_q(x_m) - \phi_q(k) \right] + \sum_{m=1}^{M} \phi_q(k)
    \end{split}
\end{equation}

The last term in \Cref{eq:split_e_rearranged} is a constant value which only depends on the choice of $k$ and is independent of $X$ and $M'$.
Hence, we can replace $\phi_q(x)$ by $\widehat{\phi_q}(x)=\phi_q(x)-\phi_q(k)$.
This leads to a new sum-of-power mapping $\widehat{\Phi}_q(X)$ with
\begin{equation}
\label{eq:new_e}
    \begin{split}
        \widehat{\Phi}_q(X) & = \sum_{m=1}^{M'} \widehat{\phi}_q(x_m)  \\
        & = \Phi_q(X) - M\cdot\phi_q(k).
    \end{split}
\end{equation}

$\widehat{\Phi}$ is injective since $\Phi$ is injective and the last term in the above sum is constant. $\widehat{\Phi}$ is also in the form of a sum-decomposition.

For each $m < M$, we can follow the reasoning used in the rest of the proof of \Cref{ori_uncountable_theorem} to note that $\widehat{\Phi}$ is a homeomorphism when restricted to sets of size $m$ -- we denote these restricted functions by $\widehat{\Phi}_m$.
Now each $\widehat{\Phi}_m^{-1}$ is a continuous function into $\mathbb{R}^m$. We can associate with each a continuous function $\widehat{\Phi}_{m,M}^{-1}$ which maps into $\mathbb{R}^M$, with the $M-m$ tailing dimensions filled with the value $k$.

Now the domains of the $\widehat{\Phi}_{m,M}^{-1}$ are compact, since the domain of $\widehat{\Phi}_{m,M}^{-1}$ is just the image of the compact set $[0, 1]$ under the continuous function $\widehat{\Phi}_m$. The domains of the $\widehat{\Phi}_{m,M}^{-1}$ are also disjoint, since $k \notin [0, 1]$. We can therefore find a function $\widehat{\Phi}_{C}^{-1}$ which is continuous on $\mathbb{R}^M$ and agrees with each $\widehat{\Phi}_{m,M}^{-1}$ on its domain.

To complete the proof, let $\mathcal{Y}$ be a connected compact set with $k\in\mathcal{Y}, [0, 1]\subset\mathcal{Y}$. Let $\widehat{f}$ be a function on subsets of $\mathcal{Y}$ of size exactly $M$ satisfying

\begin{gather*}
    \widehat{f}(X) = f(X); \quad X \subset [0,1] \\
    \widehat{f}(X) = f(X \cap [0, 1]); \quad X \subset [0, 1] \cup \{k\}.
\end{gather*}

We can choose $\widehat{f}$ to be continuous because $f$ is continuous.
Then $(\widehat{f}\circ\widehat{\Phi}_{C}^{-1}, \widehat{\phi})$ is a continuous sum-decomposition of $f$.

\end{proof}

\subsection{Max-Decomposition}
\label{app:max-decomp}

Analogously to sum-decomposition, we define the notion of \emph{max-decomposition}. A function $f$ is max-decomposable if there are functions $\rho$ and $\phi$ such that

\begin{displaymath}
f(\mathbf{x}) = \rho \Bigl( \texttt{max}_i \big( \phi(x_i) \big) \Bigr).
\end{displaymath}

where the \texttt{max} is taken over each dimension independently in the latent space. Our definitions of decomposability via $Z$ and continuous decomposability also extend to the notion of max-decomposition.

We now state and prove a theorem which is closely related to \Cref{thm:max_not_decomposable}, but which establishes limitations on max-decomposition, rather than sum-decomposition.

\begin{theorem}
\label{thm:sum_not_decomposable}
Let $M > N \in \mathbb{N}$. Then there exist permutation invariant continuous functions $f : \mathbb{R}^M \to \mathbb{R}$ which are not max-decomposable via $\mathbb{R}^N$.
\end{theorem}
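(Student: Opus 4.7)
The plan is to exhibit an explicit witness function and exploit a structural weakness of max-pooling that does not appear for sum-pooling: the max operation is \emph{idempotent}, so $\max_i \phi(x_i)$ depends only on the \emph{set} $\{\phi(x_i)\}$ of values taken, not on their multiplicities. Concretely, I would take as witness the simple continuous permutation-invariant function $f(\mathbf{x}) = \sum_{i=1}^M x_i$ and assume, for contradiction, that there exist continuous $\phi:\mathbb{R}\to\mathbb{R}^N$ and $\rho:\mathbb{R}^N\to\mathbb{R}$ with $f(\mathbf{x}) = \rho(\Psi(\mathbf{x}))$, where $\Psi(\mathbf{x}) := \max_i \phi(x_i)$ (taken componentwise).

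The main argument handles the generic case $M \geq 3$. Here I would compare the two inputs $\mathbf{x} = (a,a,\ldots,a,b)$ and $\mathbf{y} = (a,b,b,\ldots,b)$ (each of length $M$, but with the roles of $a$ and $b$ swapped in multiplicity). By idempotence, $\Psi(\mathbf{x}) = \max\bigl(\phi(a),\phi(b)\bigr) = \Psi(\mathbf{y})$, so the assumed decomposition forces $f(\mathbf{x}) = f(\mathbf{y})$, i.e.\ $(M-1)a + b = a + (M-1)b$, which collapses to $(M-2)(a-b)=0$. Choosing any $a \neq b$ yields the required contradiction. Note this argument actually shows non-decomposability for every $N$ when $M\ge 3$, which is strictly stronger than the theorem demands.

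The remaining case $M = 2$, $N = 1$ is the main obstacle, since the multiplicity trick above requires $M \geq 3$. Here $\phi:\mathbb{R}\to\mathbb{R}$ is a scalar-valued continuous function, and I would argue via sublevel sets: for each $b\in\mathbb{R}$, put $S_b := \{a\in\mathbb{R} : \phi(a) \leq \phi(b)\}$. For any $a \in S_b$ we have $\Psi(a,b) = \phi(b)$, so the assumed decomposition gives $\rho(\phi(b)) = a+b$; since the left-hand side is fixed once $b$ is fixed, $|S_b|\le 1$, and since trivially $b\in S_b$, we must have $S_b = \{b\}$. This says $b$ is a strict global minimiser of $\phi$, but the same conclusion must then hold for every $b\in\mathbb{R}$, which is plainly impossible.

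Combining the two cases completes the proof. The conceptual ``hard part'' is conceptual rather than technical: recognising that idempotence of $\max$ does most of the work for free when $M\ge 3$, and then handling the small edge case $M=2$ by a direct level-set argument rather than by the topological (Borsuk--Ulam) machinery that was needed for \Cref{thm:max_not_decomposable}. No invariance-of-domain argument is required, because max-pooling collapses multiset information even when $N$ is large, making the obstruction combinatorial rather than dimension-theoretic.
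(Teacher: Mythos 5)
Your proposal is correct, and it uses the same witness $f(\mathbf{x}) = \sum_i x_i$ as the paper, but the argument is genuinely different. The paper's proof is a single uniform pigeonhole argument: fix any $\mathbf{x}$ with distinct coordinates, observe that only $N$ of the $M$ coordinates can be the argmax in the $N$ latent dimensions, so some $x_m$ is never a maximiser and can be swapped for another coordinate without changing $\max_i \phi(x_i)$, while the sum does change. This works uniformly for every $M > N$ and, like yours, requires no continuity. Your proof instead splits into two cases: for $M \geq 3$ you exploit idempotence of $\max$ directly, comparing $(a,a,\dots,a,b)$ against $(a,b,\dots,b,b)$; this is a cleaner observation and, as you note, actually gives the strictly stronger conclusion that for $M \geq 3$ the sum is not max-decomposable via $\mathbb{R}^N$ for \emph{any} $N$, since $\max$ discards multiplicity information no matter how large the latent space. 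For $M = 2$, $N = 1$ you then give a separate sublevel-set argument, and this extra case is genuinely unavoidable on your route: one can check that $x+y = \rho\bigl(\max(\phi(x),\phi(y))\bigr)$ \emph{is} achievable via $\mathbb{R}^2$ (take $\phi(x)=(x,-x)$ and $\rho(u,v)=u-v$), so the $M=2$ obstruction really is dimensional rather than combinatorial, and your idempotence trick cannot reach it. The paper's pigeonhole argument avoids the case split; your approach buys the stronger statement for $M \geq 3$ and makes the underlying structural reason (max loses multiplicities) more transparent. Both are complete and correct.
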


Note that this theorem rules out any max-decomposition, whether continuous or discontinuous. We specifically demonstrate that summation is not max-decomposable.

\begin{proof}
Consider $f(\mathbf{x}) = \sum_{i=1}^M x_m$. Let $\phi : \mathbb{R} \to \mathbb{R}^N$, and let $\mathbf{x} \in \mathbb{R}^M$ such that $x_i \neq x_j$ when $i \neq j$.

For $n=1,\dots,N$, let $\mu(n) \in \{1,\dots,M\}$ such that

\begin{displaymath}
\texttt{max}_i \big( \phi(x_i)_n \big) = \phi(x_{\mu(n)})_n.
\end{displaymath}

That is, $\phi(x_{\mu(q)})$ attains the maximal value in the $q$-th dimension of the latent space among all $\phi(x_i)$. Now since $N < M$, there is some $m \in \{1,\dots,M\}$ such that $\mu(n) \neq m$ for any $n \in \{1,\dots,N\}$. So now consider $\widetilde{\mathbf{x}}$ defined by:

\begin{gather}
\widetilde{x}_i = x_i ; i \neq m \\
\widetilde{x}_m = x_{\mu(1)}
\end{gather}

Then:

\begin{displaymath}
\texttt{max}_i \big( \phi(x_i) \big) = \texttt{max}_i \big( \phi(\widetilde{x}_i) \big)
\end{displaymath}

But since we chose $\mathbf{x}$ such that all $x_i$ were distinct, we have $\sum_{i=1}^M x_i \neq \sum_{i=1}^M \widetilde{x}_i$ by the definition of $\widetilde{\mathbf{x}}$. This shows that $\phi$ cannot form part of a max-decomposition for $f$. But $\phi$ was arbitrary, so no max-decomposition exists.

\end{proof}

\subsection{Function Approximation}
\label{sec:approximation_appendix}

\subsubsection{A stronger statement of \Cref{thm:main_approximation_theorem}}

Given a function $f : [\smn1, 1]^M \to \mathbb{R}$, write $V_f$ for the \emph{variation} of $f$:

\begin{equation*}
    V_f = \underset{\mathbf{x}}{\texttt{max}} \big( f(\mathbf{x}) \big) - \underset{\mathbf{x}}{\texttt{min}} \big( f(\mathbf{x}) \big)
\end{equation*}

\edef\mythmcount{\value{theorem}}
\setcounterref{theorem}{thm:main_approximation_theorem}
\addtocounter{theorem}{-1}
\begin{theorem}[Strong Form]
Let $M,N \in \mathbb{N}$ with $M>N$. There exists a continuous, non-constant, permutation-invariant function $f: [\smn1, 1]^M \to \mathbb{R}$ such that:

\begin{enumerate}
    \item $f$ is affine on $\Delta_M$.
    \item For any continuous sum-decomposition $\rho \circ \Phi$ via $\mathbb{R}^N$, there is some $\mathbf{x} \in [\smn1,1]^M$ with $|\rho \big( \Phi(\mathbf{x}) \big) - f(\mathbf{x})| \geq \frac{V_f}{2}$.
\end{enumerate}

\end{theorem}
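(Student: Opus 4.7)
The plan is to construct $f_*$ explicitly as in Equation (\ref{eq:defn_bad_target}), that is, as the affine-on-$\Delta_M$ function $f_*(\mathbf{u}) = \mathbf{w}^\intercal \texttt{sort}(\mathbf{u}) + b$ with alternating weights $w_i = (\smn 1)^{i+1}$ and the appropriate bias so that $f_*$ ranges over $[\smn 1, 1]$. Property 1 is then immediate by construction: on $\Delta_M$ the $\texttt{sort}$ operation is the identity and $f_*$ reduces to an affine functional of $\mathbf{u}$. Moreover, the output range $[\smn 1, 1]$ gives $V_{f_*} = 2$, so the desired error bound $V_{f_*}/2$ is exactly $1$.

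For property 2, I would reduce the worst-case error bound to a collision statement about the encoder. The key observation is that $f_*(X_M^{+1}) = \{1\}$ and $f_*(X_M^{\smn 1}) = \{\smn 1\}$, which follows immediately from the alternating-sign structure of $\mathbf{w}$ combined with the collapsing equalities in the definitions of $X_M^{+1}$ and $X_M^{\smn 1}$. So if there exist $\mathbf{x}^+ \in X_M^{+1}$ and $\mathbf{x}^- \in X_M^{\smn 1}$ with $\Phi(\mathbf{x}^+) = \Phi(\mathbf{x}^-)$, then $\rho(\Phi(\mathbf{x}^+)) = \rho(\Phi(\mathbf{x}^-))$, and the triangle inequality forces
\begin{equation*}
|\rho(\Phi(\mathbf{x}^+)) - f_*(\mathbf{x}^+)| + |\rho(\Phi(\mathbf{x}^-)) - f_*(\mathbf{x}^-)| \geq |f_*(\mathbf{x}^+) - f_*(\mathbf{x}^-)| = 2,
\end{equation*}
so at least one of the two errors is at least $1 = V_{f_*}/2$. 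This reduces everything to producing the colliding pair $(\mathbf{x}^+, \mathbf{x}^-)$.

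The heart of the proof is therefore establishing that $\Phi(X_M^{+1}) \cap \Phi(X_M^{\smn 1})$ is nonempty for every continuous $\phi: [\smn 1, 1] \to \mathbb{R}^N$ with $N = M-1$. After normalising so that $\phi(\smn 1) = \mathbf{0}$ (which loses no generality since subtracting the constant $\phi(\smn 1)$ from $\phi$ only shifts $\Phi$), I would apply Lemma \ref{lem:gamma_zero_implication}, which packages the collision problem into the existence of a zero of the alternating-sum map $\Gamma_N$ on $\Delta_N$. Then Lemma \ref{lem:nu_n} delivers that zero by producing a continuous map $\nu_N : I_N \to \Delta_N$ that intertwines negation on $\partial I_N$ with the left-shift symmetry of $\Gamma_N$, so that $\Gamma_N \circ \nu_N$ satisfies the antipodal hypothesis of Borsuk-Ulam and must vanish somewhere.

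The main obstacle, which is exactly what the bulk of the section addresses, is constructing $\nu_N$. The difficulty is that the left-shift $\alpha$ only makes sense on the face $\Delta_N^{(0)}$ of $\partial\Delta_N$, so the ``bridge equation'' $\nu_N(\smn\mathbf{x}) = \alpha(\nu_N(\mathbf{x}))$ cannot be enforced on all of $\partial I_N$. The plan is to build $\nu_N$ inductively: use $\nu_{N-1}$ to define $\nu_N$ on the top and bottom faces $I_N^{\pm}$ (mapping them to the opposite faces $\Delta_N^{(0)}$ and $\Delta_N^{(N)}$ in a way that makes the bridge equation hold there), then extend along vertical line segments by interpolating in a way that keeps $\Gamma_N \circ \nu_N$ constant on each surface vertical. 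The endpoint values on these verticals already agree thanks to the inductive hypothesis (\Cref{eq:gamma_equivalent_each_end}), but naive linear interpolation in $\Delta_N$ does not preserve the constancy of $\Gamma_N \circ \nu_N$; a non-linear interpolation scheme must be designed, and this is the technically delicate step that I would defer to the appendix.
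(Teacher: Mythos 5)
Your proposal takes essentially the same route as the paper: you construct $f_*$ exactly as in Equation (\ref{eq:defn_bad_target}), reduce the worst-case error bound to the collision $\Phi(X_M^{+1}) \cap \Phi(X_M^{\smn1}) \neq \emptyset$ via the triangle inequality, normalise to $\phi(\smn1)=\mathbf{0}$, and then invoke Lemma~\ref{lem:gamma_zero_implication} and Lemma~\ref{lem:nu_n} (with Borsuk--Ulam) to produce the zero of $\Gamma_N$, correctly identifying the non-linear interpolation in the construction of $\nu_N$ as the technically delicate step deferred to the appendix. This matches the paper's argument and proof structure precisely.
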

\setcounter{theorem}{\mythmcount}

\subsubsection{Completing the proof of Lemma \ref{lem:nu_n}}

\edef\mythmcount{\value{theorem}}
\setcounterref{theorem}{lem:nu_n}
\addtocounter{theorem}{-1}
\lemmanu*
\setcounter{theorem}{\mythmcount}
\begin{proof}

We proceed by induction. We state our induction hypotheses as follows, in the order in which we prove the corresponding conclusions:

\begin{enumerate}[leftmargin=7\parindent,label=\textbf{Hypothesis \arabic*},ref=\arabic*]
\item $\nu_{n-1}$ is continuous. \label{hyp:continuity}
\item Let $\mathbf{x} \in I_{n-1}$ and $1\leq j<n-1$. Then $\nu_{n-1}(\mathbf{x})_j \geq \nu_{n-1}(\smn\mathbf{x})_{j+1}$. \label{hyp:technical}
\item The codomain of $\nu_{n-1}$ is $\Delta_{n-1}$. \label{hyp:codomain}
\item Let $\mathbf{x} \in \partial I_{n-1}$. Then $\Gamma_{n-1} \big( \nu_{n-1}( \smn\mathbf{x} ) \big) = -\Gamma_{n-1} \big( \nu_{n-1}( \mathbf{x} ) \big)$. \label{hyp:main}
\end{enumerate}

Hypotheses \labelcref{hyp:continuity,hyp:codomain,hyp:main} come from the statement of the lemma. Hypothesis \ref{hyp:technical} is a technical condition which is required for the proof. We will refer to these statements as ``hypotheses'' when referring to the $n-1$ case, and ``conclusions'' when referring to the $n$ case.

\subsubsection*{Base case $n=1$}

Let $\nu_1(\mathbf{x}) := \mathbf{x}$. Conclusions \labelcref{hyp:continuity,hyp:codomain} are trivial. Conclusion \ref{hyp:technical} is also trivial -- there is no $j$ with $1 \leq j < 1$, and so there are no inequalities to satisfy. $\partial I_1 = \{ \smn1, 1 \}$, so conclusion \ref{hyp:main} reduces to checking that $\Gamma_1(\smn1) = -\Gamma_1(1)$. This is true by definition of $\Gamma$.

\subsubsection*{Inductive step}

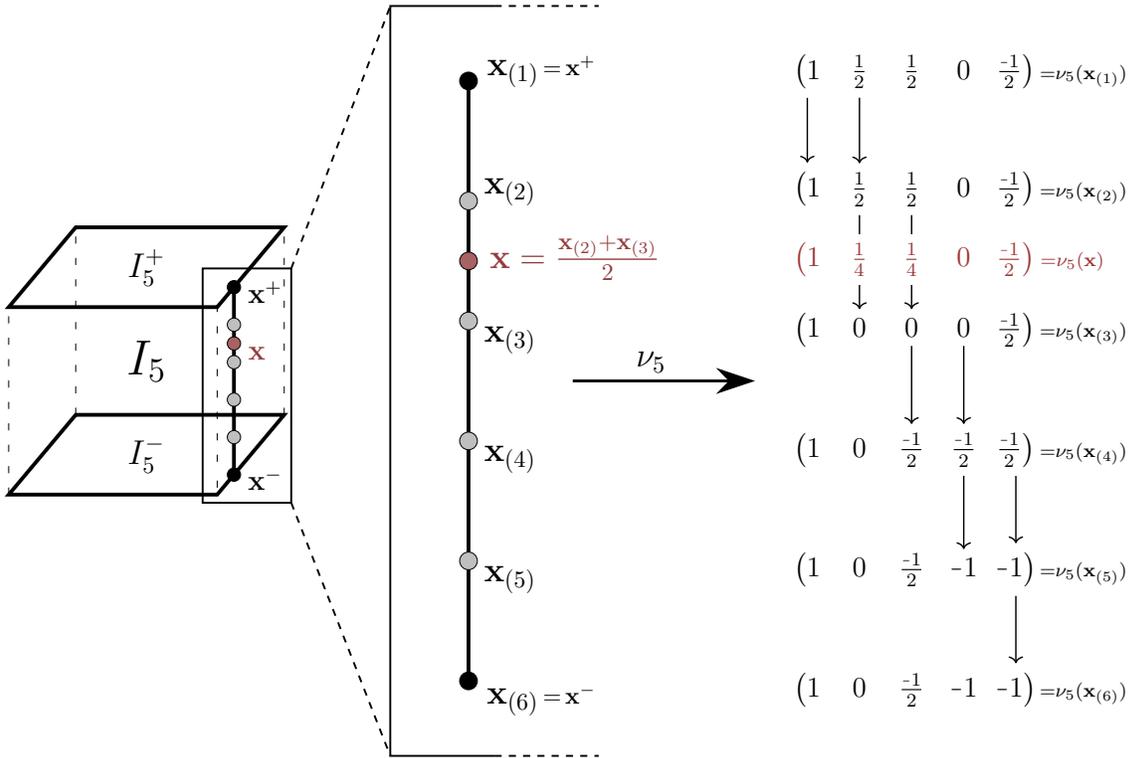
\begin{figure}
    \centering
    \resizebox{\textwidth}{!}{\begin{tikzpicture}[font=\Large,
                    point/.style={circle,draw,fill,inner sep=0.18em}]
    \begin{scope}[z={(90:3.6em)},x={(0:4em)},y={(50:2em)}]
    
        \begin{scope}[canvas is xy plane at z=1,color=black]
            \draw[ultra thick] (-1,-1) -- (-1, 1) -- (1, 1) -- (1, -1) -- cycle;
            
            \node at (0, 0) [font=\Large,color=black] {$I_5^+$};
        \end{scope}
        
        \begin{scope}[canvas is xy plane at z=-1,color=black]
            \draw[ultra thick] (-1,-1) -- (-1, 1) -- (1, 1) -- (1, -1) -- cycle;
            
            \node at (0, 0) [font=\Large,color=black] {$I_5^-$};
        \end{scope}
        
        \draw[loosely dashed] (1, 1, -1) -- (1, 1, 1);
        \draw[loosely dashed] (1, -1, -0.65) -- (1, -1, 1);
        \draw[loosely dashed] (1, -1, -0.85) -- (1, -1, -1);
        \draw[loosely dashed] (-1, -1, -1) -- (-1, -1, 1);
        \draw[loosely dashed] (-1, 1, -1) -- (-1, 1, 1);
        
        \node at (0, 0, 0) [font=\huge,color=black] {$I_5$};
    
        \begin{scope}[canvas is yz plane at x=1]
            \draw[ultra thick] (-0.5,1) -- (-0.5,-1);
            
            \node[point] at (-0.5, 1) {};
            \node[point,fill=white!50!gray] at (-0.5, 0.6) {};
            \node[point,fill=edred!80!white] at (-0.5, 0.4) {};
            \node[point,fill=white!50!gray] at (-0.5, 0.2) {};
            \node[point,fill=white!50!gray] at (-0.5, -0.2) {};
            \node[point,fill=white!50!gray] at (-0.5, -0.6) {};
            \node[point] at (-0.5, -1) {};
        \end{scope}
        
        \node[font=\large] at (1.3, -0.5, 0.95) {$\mathbf{x}^+$};
        \node[font=\large,text=edred] at (1.22, -0.5, 0.3) {$\mathbf{x}$};
        \node[font=\large] at (1.3, -0.5, -1.05) {$\mathbf{x}^-$};
        
        \begin{scope}[canvas is xz plane at y=-0.5]
            \draw[thick] (1.55, 1.2) -- (0.7, 1.2) -- (0.7, -1.3) -- (1.55, -1.3) -- cycle;
            \draw[thick, dashed] (1.55,  1.2) -- (2.5, 4);
            \draw[thick, dashed] (1.55, -1.3) -- (2.5, -4);
            \draw[thick] (3.5, 4) -- (2.5, 4) -- (2.5, -4) -- (3.5, -4);
            \draw[thick, dashed] (3.5, 4) -- (4.5, 4);
            \draw[thick, dashed] (3.5, -4) -- (4.5, -4);
        \end{scope}
        
        \tikzset{shift={(3.25,-0.5,0)}}
        
        \begin{scope}[canvas is xz plane at y=0,
                      point/.style={circle,draw,fill,inner sep=0.24em}]
            \draw[ultra thick] (0,3.2) -- (0,-3.2);
            
            \node[point] at (0, 3.2) {};
            \node[point,fill=white!50!gray]  at (0, 1.92) {};
            \node[point,fill=edred!80!white] at (0, 1.28) {};
            \node[point,fill=white!50!gray]  at (0, 0.64) {};
            \node[point,fill=white!50!gray]  at (0, -0.64) {};
            \node[point,fill=white!50!gray]  at (0, -1.92) {};
            \node[point] at (0, -3.2) {};
            
            \node[font=\Large] at (0.7,  3.3) {$\mathbf{x}_{(1)}\scriptstyle{\,=\,\mathbf{x}^+}$};
            \node[font=\Large] at (0.4,  2.02) {$\mathbf{x}_{(2)}$};
            
            \node[font=\Large,text=edred] at (1.02, 1.32) {$\mathbf{x} = \tfrac{\mathbf{x}_{(2)} + \mathbf{x}_{(3)}}{2}$};
            
            \node[font=\Large] at (0.4, 0.44) {$\mathbf{x}_{(3)}$};
            \node[font=\Large] at (0.4, -0.84) {$\mathbf{x}_{(4)}$};
            \node[font=\Large] at (0.4, -2.12) {$\mathbf{x}_{(5)}$};
            \node[font=\Large] at (0.7, -3.4) {$\mathbf{x}_{(6)}\scriptstyle{\,=\,\mathbf{x}^-}$};
            
            \draw[very thick,-{Stealth[width=1em,length=1.5em]}] (1, 0) -- (2.75, 0);
            \node[font=\Large] at (1.75, 0.2) {$\nu_5$};
            
            \matrix[{matrix of math nodes}, ampersand replacement=\&,
                    column sep={2em,between origins}, row sep={0em,between origins},
                    matrix anchor=tln.center, anchor=base,
                    font=\large, row 3/.style={text=edred!90!red,nodes={fill=white}}
                    ] (m) at (3.25, 3.3)
 {
    |(tln)| \big( 1 \& \tfrac{1}{2} \&   \tfrac{1}{2}   \&        0         \& \tfrac{\smn1}{2} \big) \&[0.6em] \scriptstyle{=\nu_5(\mathbf{x}_{(1)})} \\[4.508em]
    \big( 1 \& \tfrac{1}{2} \&   \tfrac{1}{2}   \&        0         \& \tfrac{\smn1}{2} \big) \& \scriptstyle{=\nu_5(\mathbf{x}_{(2)})} \\[2.7em]
    \big( 1 \& \tfrac{1}{4} \&   \tfrac{1}{4}   \&        0         \& \tfrac{\smn1}{2} \big) \& \scriptstyle{=\nu_5(\mathbf{x})~~\,} \\[2.7em]
    \big( 1 \&      0       \&        0         \&        0         \& \tfrac{\smn1}{2} \big) \& \scriptstyle{=\nu_5(\mathbf{x}_{(3)})} \\[4.608em]
    \big( 1 \&      0       \& \tfrac{\smn1}{2} \& \tfrac{\smn1}{2} \& \tfrac{\smn1}{2} \big) \& \scriptstyle{=\nu_5(\mathbf{x}_{(4)})} \\[4.608em]
    \big( 1 \&      0       \& \tfrac{\smn1}{2} \&      \smn1       \&      \smn1       \big) \& \scriptstyle{=\nu_5(\mathbf{x}_{(5)})} \\[4.608em]
    \big( 1 \&      0       \& \tfrac{\smn1}{2} \&      \smn1       \&      \smn1       \big) \& \scriptstyle{=\nu_5(\mathbf{x}_{(6)})} \\
 };
            \begin{scope}[on background layer,
                          interp/.style={line width=0.6pt,->}]
                \draw[interp] (tln.south) -- (m-2-1.north);
                \draw[interp] (m-1-2.south) -- (m-2-2.north);
                
                \draw[interp] (m-2-2.south) -- (m-4-2.north);
                \draw[interp] (m-2-3.south) -- (m-4-3.north);
                
                \draw[interp] (m-4-3.south) -- (m-5-3.north);
                \draw[interp] (m-4-4.south) -- (m-5-4.north);
                
                \draw[interp] (m-5-4.south) -- (m-6-4.north);
                \draw[interp] (m-5-5.south) -- (m-6-5.north);
                
                \draw[interp] (m-6-5.south) -- (m-7-5.north);
            \end{scope}
        
        \end{scope}

    \end{scope}
\end{tikzpicture}}
    \captionsetup{parindent=17pt}
    \caption[lof entry]{An example of how the interpolation scheme works for $n=5$, with $\mathbf{x}$ chosen to lie on a surface vertical. The specific values of $\nu_5(\mathbf{x}_{(1)})$ and $\nu_5(\mathbf{x}_{(6)})$ shown here are purely illustrative, but the remaining values are computed according to our interpolation scheme. \Cref{eq:interpolation_scheme} directly implies that the value of $\nu_n$ at successive endpoints can differ only along the indicated vertical arrows -- for instance, $\nu_n(\mathbf{x}_{(1)})$ and $\nu_n(\mathbf{x}_{(2)})$ can differ only in the first and second places. One perspective on this is to view the function values $\nu_n(\mathbf{x}_{(i)})_j$ as the entries of a matrix $\mathcal{N}_{ij}$ -- \Cref{eq:interpolation_scheme} fixes the above- and below-diagonal entries of $\mathcal{N}$ to match $\mathbf{y}^+$ and $\mathbf{y}^-$ respectively, so values can only change across the diagonal. Crossing the diagonal corresponds to moving along the vertical arrows shown above.
    
    Part 4 of the proof of Lemma \ref{lem:nu_n} demonstrates that, along a surface vertical, the values of $\nu_n$ either do not change from endpoint to endpoint, as seen here between $\mathbf{x}_{(1)}$ and $\mathbf{x}_{(2)}$, or they change in equal pairs, as seen here between $\mathbf{x}_{(2)}$ and $\mathbf{x}_{(3)}$. In either case, the value of $\Gamma_n \circ \nu_n$ is not affected -- trivially in the former case, and in the latter case because equal pairs cancel to $0$ in the alternating sum which defines $\Gamma_n$ (Equation \ref{eq:def_of_gamma}). Importantly, $\Gamma_n \circ \nu_n$ is constant not only at consecutive endpoints, but also when linearly interpolating between the endpoints. In the above, for instance, $\Gamma_5\big(\nu_5(\mathbf{x})\big)=\Gamma_5\big(\nu_5(\mathbf{x}_{(2)})\big)=\Gamma_5\big(\nu_5(\mathbf{x}_{(3)})\big)$, because the second and third coordinates are equal and cancel to 0 under $\Gamma_n$. To show that this $\Gamma$-equivalence property always holds, we rely on $\nu_n$ being sufficiently ``well-behaved'' on $I_n^+$ and $I_n^-$. In particular we will see that \Cref{eq:y_minus_geq_y_plus}, which relies on hypothesis \ref{hyp:technical}, is crucial.
    
    Here we have illustrated the interpolation scheme along a surface vertical. The same interpolation scheme applies across all of $I_n$, with the same behaviour of changing only along the vertical arrows. However, the $\Gamma$-equivalence property discussed above only holds on surface verticals, and does not hold on the rest of $I_n$.}
    \label{fig:interpolation_cartoon}
\end{figure}

\noindent\textbf{Conclusion \ref{hyp:continuity} -- Defining the continuous function $\nu_n$}

\noindent We define $\nu_n^+ : I_n^+ \to I_n$, $\nu_n^- : I_n^- \to I_n$ as follows.

\begin{align*}
\nu_n^+(\mathbf{x}) &:= \big( 1, \nu_{n-1}(\overline{\mathbf{x}}) \big) \\
\nu_n^-(\mathbf{x}) &:= \big( \nu_{n-1}(\smn\overline{\mathbf{x}}), \smn1 \big) = \alpha\big( \nu_n^+(\smn\mathbf{x}) \big) \\
\end{align*}

These definitions follow the partial definition of $\nu_n$ given in \Cref{sec:approximation_proof}. $\nu_n^+$ and $\nu_n^-$ are both continuous by continuity of $\nu_{n-1}$ (hypothesis \ref{hyp:continuity}). We will define $\nu_n$ to agree with $\nu_n^+$ and $\nu_n^-$ on their domains, and extend $\nu_n$ to the rest of $I_n$ by interpolating between $\nu_n^+$ and $\nu_n^-$. As discussed in \Cref{sec:approximation_proof}, any point $\mathbf{x} \in I_n$ lies on a line running parallel to the $n$-th coordinate axis, and this line meets $I_n^+, I_n^-$ at $\mathbf{x}^+, \mathbf{x}^-$ respectively. We interpolate along these ``vertical'' lines to define $\nu_n$ on the rest of $I_n$. Simple linear interpolation does not define a function with the right properties, so we instead define the following piecewise linear interpolation scheme.

Divide the straight line segment from $\mathbf{x}^+$ to $\mathbf{x}^-$ into $n$ sections of equal length, with endpoints $\mathbf{x}_{(1)}, \ldots, \mathbf{x}_{(n+1)}$ (so $\mathbf{x}_{(1)} = \mathbf{x}^+$ and $\mathbf{x}_{(n+1)} = \mathbf{x}^-$). We fix the value of $\nu_n$ at each endpoint and linearly interpolate between them. For compactness, we write $\mathbf{y}^+ = \nu_n^+(\mathbf{x}^+)$ and $\mathbf{y}^- = \nu_n^-(\mathbf{x}^-)$, and make the following definition:

\begin{equation*}
m_j := \texttt{median}(y^+_j, y^-_{j-1}, y^-_j)
\end{equation*}

Then we fix $\nu_n$ at the endpoints as follows:

\begin{equation}
\label{eq:interpolation_scheme}
\nu_n(\mathbf{x}_{(i)})_j := 
    \begin{cases}
    y^+_j & i < j \\
    m_j & i = j \\
    y^-_j & i > j
    \end{cases}
\end{equation}

The special case $i = j = 1$ depends on $\nu_n^-(\mathbf{x}^-)_0$, which is undefined -- for consistency with $\nu_n^+$, we must set $\nu_n(\mathbf{x}_{(1)})_1 = y^+_1$. \Cref{fig:interpolation_cartoon} gives an example of how this interpolation scheme works for $n=5$.

This interpolation scheme defines a continuous function $\nu_n$ on $I_n$, by continuity of $\nu_n^+$, $\nu_n^-$, \texttt{median}, and linear interpolation. Conclusion \ref{hyp:continuity} is thus established.

In the rest of the proof, we will often use the more verbose notation when referring to $\mathbf{y}^\pm$, writing $\nu_n^\pm(\mathbf{x}^\pm)$ to make the dependence on $\mathbf{x}$ explicit. We will use the compact notation $\mathbf{y}^\pm$ where this dependence is less important.

\vspace{1em}
\noindent\textbf{Conclusion \ref{hyp:technical} -- A useful inequality}

\noindent First we show that for every $j$, we have

\begin{equation}
\label{eq:nuplus_geq_numinus}
\begin{aligned}
y_j^+ &\geq y_j^-.
\end{aligned}
\end{equation}

By definition:

\begin{align*}
y^+_j = \nu_n(\mathbf{x}^+)_j &= 
    \begin{cases}
    1 & j = 1 \\
    \nu_{n-1}(\overline{\mathbf{x}})_{j-1} & j > 1
    \end{cases}
\\
y^-_j = \nu_n(\mathbf{x}^-)_j &= 
    \begin{cases}
    \nu_{n-1}(\smn\overline{\mathbf{x}})_j & j < n \\
    \smn1 & j = n
    \end{cases}
\end{align*}

\Cref{eq:nuplus_geq_numinus} is therefore trivial for $j=1$ and $j=n$, since hypothesis \ref{hyp:codomain} implies that $\nu_{n-1}$ is bounded between $\smn1$ and $1$. For $1 < j < n$, \Cref{eq:nuplus_geq_numinus} is exactly hypothesis \ref{hyp:technical}. Note also from the above definitions that 

\begin{equation}
\label{eq:nu_xplus_xminus_equal}
\nu_n\big(\mathbf{x}^-\big)_j = \nu_n\big( (\smn\mathbf{x})^+ \big)_{j+1} \quad\quad 1 \leq j < n.
\end{equation}

\Cref{eq:nuplus_geq_numinus} implies, by definition of $m_j$, that $y^+_j \geq m_j \geq y_j^-$ for all $j$. Therefore by \Cref{eq:interpolation_scheme}:

\begin{equation}
\label{eq:descending_interpolation}
\nu_n(\mathbf{x}_{(i)})_j \geq \nu_n(\mathbf{x}_{(i+1)})_j \quad\quad 1 \leq i \leq n, \text{ all }j
\end{equation}

This implies that for all $\mathbf{x}$ and for all $j$, we have

\begin{equation*}
\nu_n(\mathbf{x}^+)_j \geq \nu_n(\mathbf{x})_j \geq \nu_n(\mathbf{x}^-)_j.
\end{equation*}

Therefore for $1 \leq j < n$, by \Cref{eq:nu_xplus_xminus_equal}, we have

\begin{align*}
\nu_n(\mathbf{x})_j &\geq \nu_n(\mathbf{x}^-)_j \\
&= \nu_n((\smn\mathbf{x})^+)_{j+1} \\
&\geq \nu_n(\smn\mathbf{x})_{j+1}.
\end{align*}

This final inequality is exactly conclusion \ref{hyp:technical}.

\clearpage
\noindent\textbf{Conclusion \ref{hyp:codomain} -- The codomain of $\nu_n$ is $\Delta_n$}

\noindent We must show that $\nu_n(\mathbf{x})_j$ is a descending sequence in $j$. On $I_n^+$ and $I_n^-$, this follows immediately by definition and by hypothesis \ref{hyp:codomain}. Away from $I_n^+$ and $I_n^-$, we must consider the interpolation scheme. We want $\nu_n(\mathbf{x})_j \geq \nu_n(\mathbf{x})_{j+1}$ for all $j$. We only need to check this inequality at the endpoints $\mathbf{x}_{(i)}$ of the interpolation scheme -- if it holds at the endpoints, then the function value at any other point is a convex combination of points from $\Delta_n$, and $\Delta_n$ is convex.

The relevant inequality is therefore as follows:

\begin{equation}
\label{eq:descending_in_j}
\nu_n(\mathbf{x}_{(i)})_j \geq \nu_n(\mathbf{x}_{(i)})_{j+1} \quad \quad \text{all } i,  1 \leq j < n
\end{equation}

To prove this, we will make use of the following inequality, which holds for $1 < j < n$:

\begin{equation}
\label{eq:y_minus_geq_y_plus}
y^-_{j-1} \geq y^+_{j+1}
\end{equation}

Expanding definitions, $y^-_{j-1} = \nu_{n-1}(\smn\overline{\mathbf{x}})_{j-1}$ and $y^+_{j+1} = \nu_{n-1}(\overline{\mathbf{x}})_j$. \Cref{eq:y_minus_geq_y_plus} therefore follows directly from hypothesis \ref{hyp:technical}.

To prove \Cref{eq:descending_in_j}, there are five cases to check. For case 4, we make use of the fact that $a \geq \texttt{median}(a, b, c)$ if and only if $a \geq b$ or $a \geq c$. For case 5, we make use of the fact that $\texttt{median}(a, b, c) \geq \texttt{min}(a, b)$.

\begin{enumerate}[leftmargin=3\parindent]
\item $i < j$. Expanding definitions, we need

\begin{equation*}
y^+_j \geq y^+_{j+1}.
\end{equation*} 

This is the $I_n^+$ case, which as already noted is true by definition and induction.

\item $i=j=1$. This is also the $I_n^+$ case.

\item $i > j+1$. Similarly, this is the $I_n^-$ case.

\item $i = j+1$. Expanding definitions, we need

\begin{equation*}
y^-_j \geq \texttt{median}\big( y^+_{j+1}, y^-_j, y^-_{j+1} \big).
\end{equation*}

This is true because $y^-_j \geq y^-_{j+1}$, which is the $I_n^-$ case.

\item $i=j>1$. Expanding definitions, we need 

\begin{equation*}
\texttt{median}\big( y^+_j, y^-_{j-1}, y^-_j \big) \geq y^+_{j+1}
\end{equation*}

This follows because $y^+_{j+1} \leq \texttt{min}(y^-_{j-1}, y^+_j)$. \Cref{eq:y_minus_geq_y_plus} implies $y^+_{j+1} \leq y^-_{j-1}$, and $y^+_{j+1} \leq y^+_j$ is the $I_n^+$ case.
\end{enumerate}

\Cref{eq:descending_in_j} therefore holds, so conclusion \ref{hyp:codomain} is established.

\vspace{1em}
\noindent\textbf{4. Conclusion \ref{hyp:main} -- The Borsuk-Ulam condition}

\noindent In this section we will freely make use of the following inequalities, which we have already shown to be true:

\begin{align*}
y^+_j &\geq y^+_{j+1} &
y^-_j &\geq y^-_{j+1} \\
y^+_j &\geq y^-_j &
y^-_{j-1} &\geq y^+_{j+1}
\end{align*}

The top inequalities hold because $\nu_n(\mathbf{x}) \in \Delta_n$ (conclusion \ref{hyp:codomain}). The bottom left inequality is \Cref{eq:nuplus_geq_numinus}. The bottom right inequality is \Cref{eq:y_minus_geq_y_plus}.

Conclusion \ref{hyp:main} applies to $\mathbf{x} \in \partial I_n$. As shown in \Cref{sec:approximation_proof}, for $\mathbf{x} \in I_n^+ \cup I_n^-$, we have

\begin{align*}
\Gamma_n\big( \nu_n( \smn \mathbf{x} ) \big) &= -\Gamma_n\big( \nu_n( \mathbf{x} ) \big).
\end{align*}

The remaining points on the surface are the points $\mathbf{x} \in \partial I_n \backslash (I_n^+ \cup I_n^+)$, i.e. points lying on surface verticals, and for the remainder of the proof we consider only these points. We claim that $\Gamma_n(\nu_n(\mathbf{x})) = \Gamma_n(\nu_n(\mathbf{x}^+))$, i.e. that $\Gamma_n \circ \nu_n$ is constant along surface verticals. As noted in \Cref{sec:approximation_proof}, this will imply that conclusion \ref{hyp:main} holds on all of $\partial I_n$. To prove our claim, we first define the relation of \emph{$\Gamma$-equivalence} on $\Delta_n$, denoted by $\sim_\Gamma$. We define $\mathbf{x}_1 \sim_\Gamma \mathbf{x}_2$ if and only if $\Gamma_n(\mathbf{x}_1) = \Gamma_n(\mathbf{x}_2)$ for every choice of $\phi$. For an alternative view on this relation, given $\mathbf{x} \in \Delta_n$, form $\mathbf{x}_\Gamma$ by deleting pairs of equal coordinates until no matching pairs remain. For example:

\begin{equation*}
(1, \tfrac{1}{2}, \tfrac{1}{2}, \tfrac{1}{2}, 0, \smn\tfrac{1}{2}, \smn\tfrac{1}{2})_\Gamma = (1, \tfrac{1}{2}, 0)
\end{equation*}

Then $\mathbf{x}_1 \sim_\Gamma \mathbf{x}_2$ if and only if $(\mathbf{x}_1)_\Gamma = (\mathbf{x}_2)_\Gamma$. The reverse implication follows from the alternating sum form of $\Gamma$, so that consecutive equal coordinates cancel to $0$. For the forward implication, if $(\mathbf{x}_2)_\Gamma \neq (\mathbf{x}_2)_\Gamma$, then there is some coordinate $x_i$ which appears in $(\mathbf{x}_1)_\Gamma$ but not in $(\mathbf{x}_2)_\Gamma$. Choosing $\phi$ to be non-zero at this co-ordinate, and zero at every other coordinate of $(\mathbf{x}_1)_\Gamma$ and $(\mathbf{x}_2)_\Gamma$, we obtain $\Gamma_n(\mathbf{x}_1) \neq \Gamma_n(\mathbf{x}_2)$.

Our claim is that the relation $\sim_\Gamma$ is preserved under the interpolation between $\nu_n(\mathbf{x}^+)$ and $\nu_n(\mathbf{x}^-)$. We will prove this by showing that the successive endpoints are $\Gamma$-equivalent, and that the interpolation between any two endpoints preserves $\Gamma$-equivalence.

Consider two successive endpoints in our interpolation scheme, $\mathbf{x}_{(I)}$ and $\mathbf{x}_{(I+1)}$. By definition, we have $\nu_n(\mathbf{x}_{(I)})_j = \nu_n(\mathbf{x}_{(I+1)})_j$ if $I > j$ or $I+1 < j$. This leaves $I=j$ or $I+1=j$. That is, $\nu_n(\mathbf{x}_{(I)})$ can differ from $\nu_n(\mathbf{x}_{(I+1)})$ only in the $I$-th and $I+1$-th coordinates. From the definition of the interpolation scheme, the relevant quantities are as follows:

\begin{align*}
\nu_n(\mathbf{x}_{(I)})_I &= m_I \\
\nu_n(\mathbf{x}_{(I)})_{I+1} &= y^+_{I+1} \\
\nu_n(\mathbf{x}_{(I+1)})_I &= y^-_I \\
\nu_n(\mathbf{x}_{(I+1)})_{I+1} &= m_{I+1}
\end{align*}

There are two special cases here. When $I=1$, $\nu_n(\mathbf{x}_{(1)})_1=y_1^+$ instead of $m_1$. When $I=n$, the quantities $y^+_{n+1}$ and $m_{n+1}$ are not defined. For now, we assume $1<I<n$, and we address the special cases at the end of the proof.

We have already shown the following inequalities between the four quantities above -- the left-to-right inequalities are because $\nu_n(\mathbf{x}) \in \Delta_n$, and the top-to-bottom inequalities are instances of \Cref{eq:descending_interpolation}.

\begin{eqnarray}
m_I & \geq & y^+_{I+1} \nonumber \\
\vgeq & & \vgeq \label{eq:inequality_square} \\
y^-_I & \geq & m_{I+1} \nonumber
\end{eqnarray}

Recalling the visualisation of \Cref{fig:interpolation_cartoon}, the four quantities above correspond to two pairs of points joined by vertical arrows. For example, $I=3$ corresponds to the transition between $\nu_n(\mathbf{x}_{(3)})$ and $\nu_n(\mathbf{x}_{(4)})$.

Another way of seeing the above inequalities is to consider the values $\nu_n(\mathbf{x}_{(i)})_j$ as a matrix $\mathcal{N}_{ij}$. The matrix $\mathcal{N}$ is (non-strictly) decreasing downwards and decreasing to the right. In the above array of four values, we wish to show that either both columns are equal, or both rows are equal. That is, we wish to show that one of the following two cases holds:

\begin{enumerate}[leftmargin=3\parindent]
\item $m_I = y^+_{I+1}$ and $m_{I+1} = y^-_I$ (columns of Equation \ref{eq:inequality_square} are equal)
\item $m_I = y^-_I$ and $m_{I+1} = y^+_{I+1}$ (rows of Equation \ref{eq:inequality_square} are equal)
\end{enumerate}

In case 1, linear interpolation between endpoints implies that $\nu_n(\mathbf{x})_I = \nu_n(\mathbf{x})_{I+1}$ for every $\mathbf{x}$ on the line segment between $\mathbf{x}_{(I)}$ and $\mathbf{x}_{(I+1)}$. Since these values are all equal in every other coordinate, and since $\Gamma$-equivalence is not affected by adding a pair of equal coordinates, all of the values $\nu_n(\mathbf{x})$ for $\mathbf{x}$ on this section of the surface vertical are $\Gamma$-equivalent.
In case 2, we have $\nu_n(\mathbf{x}_{(I)}) = \nu_n(\mathbf{x}_{(I+1)})$. Linear interpolation between endpoints therefore trivially gives $\Gamma$-equivalence, because $\nu_n$ is constant between the endpoints $\mathbf{x}_{(I)}$ and $\mathbf{x}_{(I+1)}$. So in either case, $\Gamma$-equivalence is preserved by the interpolation, and therefore $\Gamma_n \circ \nu_n$ is constant along surface verticals.

To prove that either case holds, we just need to show that, in Equation (\ref{eq:inequality_square}), either the top or the left equality holds, and either the bottom or the right equality holds. If this were true with neither the rows nor the columns being equal, then we would be in one of the following contradictory situations:

\begin{align*}
m_I & = y^+_{I+1} & m_I & > y^+_{I+1} \\
\vgreat~ & \phantom{=} \quad\vequal & \vequal~~ & \phantom{=} \quad\vgreat\\
y^-_I & > m_{I+1} & y^-_I & = m_{I+1}
\end{align*}

By definition of $m_{I+1}$ and our inequalities on the $y_I$, it is immediate that we have either the bottom or the right equality -- either $m_{I+1} = y^-_I$, or $m_{I+1} = y^+_{I+1}$.

The case for the top and left equalities is more complex. By definition of $m_I$, and our inequalities on the $y_I$, we have that $m_I = y^-_I$ if either $y^-_I = y^-_{I-1}$ or $y^-_I = y^+_I$. We have $m_I = y^+_{I+1}$ if either $y^+_{I+1} = y_I^+$ or $y^+_{I+1} = y^-_{I-1}$. So for neither the top nor the left equality to hold, all four of these statements must be false. That is, applying our known inequalities for the $y_I$, we have:

\begin{align}
\label{eq:false_inequalities}
\begin{aligned}
y^+_I &> y^-_I \\
y^-_{I-1} &> y^-_I
\end{aligned}
&&
\begin{aligned}
y^+_I &> y^+_{I+1} \\
y^-_{I-1} &> y^+_{I+1}
\end{aligned}
\end{align}

In the following representation (where $a = \text{min}(y^+_I, y^-_{I-1})$), these inequalities say that everything to the left of the diagonal line is strictly greater than everything to the right of the diagonal line. The special case $I=1$ results in essentially the same picture, with $y_1^+$ being the only point to the left of the diagonal line.

\begin{center}
\begin{tikzpicture}[font=\Large]
\matrix [{matrix of math nodes},
         column sep={0em}, row sep=2em
         ,nodes={inner sep=0em, outer sep=0em}
        ] (m)
 {
    y^+_1 & \ldots &   \ldots   &[1em] y^+_I &[1em] y^+_{I+1} & \ldots & ~y^+_n \\
    y^-_1 & \ldots & ~y^-_{I-1} &      y^-_I &       \ldots   & \ldots & ~y^-_n \\
 };

\draw[thick] (m-2-3.south east) -- (m-1-5.north west);

\fill[pattern color=edred,opacity=.2,pattern=south east lines] (m-2-3.south east) -- (m-1-5.north west) -- (m-1-1.north west) -- (m-2-1.south west) -- cycle;
\fill[edblue,opacity=.075] (m-2-3.south east) -- (m-1-5.north west) -- (m-1-7.north east) -- (m-2-7.south east) -- cycle;

\node at ([yshift=0.8em]m-1-5.north west) (anode) {$\mathbf{a}$};
\node at ([xshift=-5em]anode) [color=edred]  {$\textbf{$\geq$} \,\mathbf{a}$};
\node at ([xshift=3em]anode)  [color=edblue!75!black] {$\textbf{$<$} \,\mathbf{a}$};

\end{tikzpicture}
\end{center}

This implies that there are exactly $I$ coordinates of $\mathbf{y}^+$ which are at least $a$, and exactly $I-1$ coordinates of $\mathbf{y}^-$ which are at least $a$. But now recall that we have shown in \Cref{sec:approximation_proof}, specifically \Cref{eq:gamma_equivalent_each_end}, that

\begin{equation*}
\Gamma_n\big( \nu_n( \mathbf{x}^+ ) \big) = \Gamma_n\big( \nu_n( \mathbf{x}^- ) \big).
\end{equation*}

Since this equation holds regardless of the choice of $\phi$, it is equivalent to the statement that $\mathbf{y}^+ \sim_\Gamma \mathbf{y}^-$. We now note the basic but crucial fact that if $\mathbf{y}^+ \sim_\Gamma \mathbf{y}^-$, then for any $a$, the number of occurrences of $a$ in $\mathbf{y}^+$ must have the same parity as the number of occurrences of $a$ in $\mathbf{y}^-$. This parity matching is also true if we consider the number of occurrences of values which are at least $a$. But we have just shown that these numbers have differing parity. This is a contradiction, so \Cref{eq:false_inequalities} must fail. As shown above, this implies that either $m_I = y^+_{I+1}$ or $m_I = y^-_I$, and this in turn implies that either the rows or the columns of Equation (\ref{eq:inequality_square}) must be equal. Therefore $\Gamma \circ \nu$ is constant along the line segment from $\mathbf{x}_{(I)}$ to $\mathbf{x}_{(I+1)}$.

As noted above, there is a final special case to consider, $I=n$. We have shown that $\Gamma \circ \nu$ is constant along the surface vertical between $\mathbf{x}_{(1)}$ and $\mathbf{x}_{(n)}$, but we have not shown that it is constant along the final section of the surface vertical, from $\mathbf{x}_{(n)}$ to $\mathbf{x}_{(n+1)}$. However, we have shown that $\nu_n(\mathbf{x}_{(1)}) \sim_\Gamma \nu_n(\mathbf{x}_{(n)})$ (by transitivity of $\sim_\Gamma$), and that $\nu_n(\mathbf{x}_{(1)}) \sim_\Gamma \nu_n(\mathbf{x}_{(n+1)})$. Applying transitivity of $\sim_\Gamma$ again, this implies that $\nu_n(\mathbf{x}_{(n)}) \sim_\Gamma \nu_n(\mathbf{x}_{(n+1)})$. We also know that $\nu_n(\mathbf{x}_{(n)})$ and $\nu_n(\mathbf{x}_{(n+1)})$ differ in at most one place (the $n$-th coordinate). It is easily seen that if two points are $\Gamma$-equivalent, they must differ in an even number of places. Therefore $\nu_n(\mathbf{x}_{(n)}) = \nu_n(\mathbf{x}_{(n+1)})$, so $\Gamma \circ \nu$ is constant along the final section of the surface vertical, and conclusion \ref{hyp:main} holds.

\end{proof}

\clearpage
\bibliography{references}

\end{document}